\begin{document}

\title{Differentially Private Bootstrap: \\New Privacy Analysis and Inference Strategies}

\author{\name Zhanyu Wang \email zhanyu.wang.purdue@gmail.com \\
       \addr Department of Statistics \\
       Purdue University\\
       West Lafayette, IN 47906, USA
       \AND
       \name Guang Cheng \email guangcheng@ucla.edu \\
       \addr Department of Statistics\\
       University of California, Los Angeles \\
       Los Angeles, CA 90095, USA
       \AND
       \name Jordan Awan \email jaa557@pitt.edu \\
       \addr   Department of Statistics\\
       Purdue University\\
       West Lafayette, IN 47906, USA\\
       and\\
       Department of Statistics \\
       University of Pittsburgh\\
       Pittsburgh, PA 15260, USA
       }

\editor{}

\maketitle

\begin{abstract}
Differentially private (DP) mechanisms protect individual-level information by introducing randomness into the statistical analysis procedure. Despite the availability of numerous DP tools, there remains a lack of general techniques for conducting statistical inference under DP. We examine a DP bootstrap procedure that releases multiple private bootstrap estimates to infer the sampling distribution and construct confidence intervals (CIs). Our privacy analysis presents new results on the privacy cost of a single DP bootstrap estimate, applicable to any DP mechanism, and identifies some misapplications of the bootstrap in the existing literature. {For the composition of the DP bootstrap, we present a numerical method to compute the exact privacy cost of releasing multiple DP bootstrap estimates, and} using the Gaussian-DP (GDP) framework \citep{dong2021gaussian}, we show that the release of $B$ DP bootstrap estimates from mechanisms satisfying $(\mu/\sqrt{(2-2/\mathrm{e})B})$-GDP asymptotically satisfies $\mu$-GDP as $B$ goes to infinity. {Then, we perform private statistical inference by post-processing the DP bootstrap estimates. We prove that our point estimates are consistent, our standard CIs are asymptotically valid, and both enjoy optimal convergence rates. To further improve the finite performance,} we use deconvolution with DP bootstrap estimates to accurately infer the sampling distribution. We derive CIs for tasks such as population mean estimation, logistic regression, and quantile regression, and we compare them to existing methods using simulations and real-world experiments on 2016 Canada Census data. Our private CIs achieve the nominal coverage level and offer the first approach to private inference for quantile regression. % \footnote{Code is available at \url{https://github.com/Zhanyu-Wang/Differentially_Private_Bootstrap}.}
\end{abstract}

\begin{keywords}
  Gaussian differential privacy, resampling, distribution-free inference, confidence interval, deconvolution.
\end{keywords}

\section{Introduction}
In the big data era, individual privacy protection becomes more critical than ever because personal information is collected and used in many different ways;
while the intention of the data collection is usually to improve the user experience or, more generally, to benefit society, there have also been rising concerns about malicious applications of these data.
To protect individuals against arbitrary attacks on their data, \citet{dwork2006calibrating} proposed \textit{differential privacy} (DP) which has become the state-of-the-art framework in privacy protection. 

DP is a probabilistic framework that measures the level of privacy protection of a mechanism. A mechanism is a \textit{randomized algorithm}, i.e., its output is a realization of a random variable following a distribution determined by the mechanism and its input. A mechanism satisfies DP if changing any individual in the input results in an output with a distribution similar to the original output distribution. Starting from the definition of $(\ep,\delta)$-DP \citep{dwork2010boosting},
% \footnote{For easy reference, we include the definition of $(\ep,\delta)$-DP in Definition \ref{def:DP} in Appendix \ref{sec:append_background}.}, 
there have been many variants of DP definitions serving different needs. For our results, we use $f$-DP \citep{dong2021gaussian}, a hypothesis-testing perspective of DP formally defined in Definitions \ref{def:tradeoff} and \ref{def:fdp}, as it is the most informative DP notion satisfying the post-processing property \citep[Theorem 2]{dong2021gaussian}.
% DP mechanisms protect the sensitive information in the input by introducing additional randomness into the mechanism.} 
For statistical analysis under DP, a great deal of prior work focused on producing private point estimates of a parameter, e.g., the sample mean, sample median, and the maximum of the data. In contrast, while some prior work aims to quantify the uncertainty of a DP procedure, their techniques are usually restricted to specific settings, and there is still a lack of general-purpose methods (see related work for some notable exceptions.)

One of the most widely used methods to approximate a sampling distribution is the bootstrap method \citep{efron1979bootstrap}, which can be used to quantify the uncertainty of an estimator in many statistical ways, such as by producing a non-parametric confidence interval (CI).
Although the bootstrap has been studied very well in Statistics, it is still an open question of how to build and analyze a DP bootstrap for private statistical inference. \citet{brawner2018bootstrap} were the first to propose and analyze a DP bootstrap procedure and used it to produce a CI based on the private bootstrap outputs. However, a key step in their privacy proof is incorrect\footnote{They use a subsampling result for zCDP \citep{bun2016concentrated} in Section 6.2, while there is no such result for zCDP as  demonstrated on page 75 in \citet{bun2018composable}.}, and we show in Section \ref{sec:privacy_analysis} that their stated privacy guarantee does not hold. Furthermore, {there is no theoretical analysis on the coverage and width of their CI.} \citet{balle2018privacy} developed the state-of-the-art analysis of resampling for $(\ep,\delta)$-DP, both with and without replacement. As a particular case, their results can be used to analyze the privacy guarantees of mechanisms with their input being bootstrap samples. However, \citet{balle2018privacy} did not consider the cumulative privacy cost of multiple samples of the resampling methods, which restricts the usage of their results on the bootstrap. 
Moreover, we show that it is necessary to develop a new method for conducting statistical inference with the DP bootstrap samples while \citet{balle2018privacy} only focused on the privacy analysis.

\paragraph{Our contributions} In this paper, we obtain a tight privacy analysis of a DP bootstrap method and develop inference strategies on the sampling distribution. Specifically,
we derive the privacy guarantee of the DP bootstrap, which generalizes the result by \citet{balle2018privacy} from $(\ep,\delta)$-DP to $f$-DP, and an aspect of our proof strategy applies to any mixture of DP mechanisms where the bootstrap is a special case. Our result also identifies misuses of resampling with replacement in the literature. 
{For the cumulative privacy cost of releasing multiple DP bootstrap estimates, we present a numerical method for the finite composition and} derive the asymptotic composition result via the central limit theorem (CLT) for $f$-DP \citep{dong2021gaussian}. 
Then, we perform private statistical inference by post-processing the DP bootstrap estimates. While our privacy analysis of the DP bootstrap is applicable to arbitrary mechanisms, our statistical inference results are focused on the Gaussian mechanism. We prove that our point estimates are consistent, our DP CIs are asymptotically valid, and both enjoy optimal convergence rates. To improve finite-sample performance,
we use deconvolution on the DP bootstrap results to obtain a private estimate of the non-private sampling distribution and develop CIs. {Our simulations show the advantage of our deconvolution method in terms of the coverage of the CIs compared to \citet{du2020differentially}, and the CI width compared to \citet{brawner2018bootstrap}.} We also conduct numerical experiments on the 2016 Canada Census Public Use Microdata, which reveals the dependence between individuals' income and shelter cost under DP guarantees by building CIs for the slope parameters of logistic regression and quantile regression. To the best of our knowledge, DP bootstrap is the first tool that can be used to perform valid private statistical inference for parameters in quantile regression. 
% difference between two provinces and a positive correlation between income and shelter cost under the DP guarantee.

\paragraph{Related work}

For DP CIs, 
\citet{dorazio2015differential} presented algorithms for releasing DP estimates of causal effects, particularly the difference between means estimators along with their standard errors and CIs. 
\citet{sheffet2017differentially} presented a DP estimator for the $t$-values in ordinary least squares and derived the CI based on the $t$-values. 
\citet{karwa2017finite} gave DP CIs for the population mean of a normal distribution along with finite sample coverage and lower bounds on the size of the DP CI. 
\citet{brawner2018bootstrap} gave the first attempt to use the DP bootstrap to obtain a private CI. 
\citet{wang2019differentially} developed algorithms for generating DP CIs for DP estimates from objective and output perturbation mechanisms in the empirical risk minimization framework.
{\citet{awan2020differentially} developed DP uniformly most powerful hypothesis tests and DP CIs for Bernoulli data.}
\citet{du2020differentially} proposed and compared different methods (including NoisyVar, which we discuss in our experimental sections) to build DP CIs for the mean of normally distributed data. 
\citet{covington2025unbiased} used the bag of little bootstraps (BLB) and the CoinPress algorithm \citep{biswas2020coinpress} to privately estimate the parameters' sampling distribution {through its mean and covariance}. 
 \citet{chadha2024resampling} also used BLB to produce private confidence sets with consistency results and asymptotic rate analysis.
\citet{drechsler2022nonparametric} proposed several strategies to compute non-parametric DP CIs for the median. 
\citet{awan2024simulation} used a simulation-based method to produce finite-sample CIs and hypothesis tests from DP summary statistics in parametric models.

Other literature involving the idea of bootstrap in DP includes \citet{ferrando2022parametric} and \citet{o2019bootstrap}. \citet{ferrando2022parametric} used parametric bootstrap through resampling data from a distribution parametrized by estimated parameters, while we use non-parametric bootstrap through resampling data from the empirical data distribution. \citet{o2019bootstrap} proposed a relaxed definition of differential privacy based on bootstrap, but they did not use bootstrap to perform statistical inference as we do.

\paragraph{Organization}
The remainder of this paper is organized as follows. 
In Section \ref{sec:background}, we review the definition of $f$-DP and results used in our DP bootstrap analysis.
In Section \ref{sec:privacy_analysis}, we provide our privacy guarantee for DP bootstrap along with {a numerical method and} a central limit theorem result for the cumulative privacy cost of multiple DP bootstrap outputs. 
In Section \ref{sec:inference}, we propose two methods for performing non-parametric statistical inference {using the DP bootstrap: one uses the asymptotic distributions of two point estimates, and the other uses deconvolution.
In Section \ref{sec:simulation}, we use simulation to show that our CIs have better coverage than NoisyVar \citep{du2020differentially}, and better width than the method by \citet{brawner2018bootstrap}.} 
In Section \ref{sec:realworld}, we analyze the dependence between market income and shelter cost in Ontario by building DP CIs for the slope parameters of logistic regression and quantile regression with the 2016 Canadian census dataset. We compare DP bootstrap with DP-CI-ERM \citep{wang2019differentially} only in logistic regression, {as the DP-CI-ERM method is inapplicable to quantile regression.}
In Section \ref{sec:conclusion}, we discuss the implications of our work and highlight some directions for future research. Proofs and technical details are deferred to the appendix and supplementary materials.

\section{Background in differential privacy}\label{sec:background}
In this section, we provide existing results related to our DP bootstrap method. First, we discuss the definitions of $f$-DP and $(\ep,\delta)$-DP, and introduce the Gaussian mechanism to guarantee DP. Then we show previous $f$-DP results on subsampling, composition, and group privacy. {Finally, we provide the connection between $(\ep,\delta)$-DP and $f$-DP.}

For a set $\cX$, a dataset $D\in \cX^n$ with cardinality $|D|=n$ is a finite collection of elements from $\cX$.
We define $D_1\simeq_k D_2$ if $|D_1|=|D_2|$ and they differ in $k$ entries. We call $D_1$ and $D_2$ as \textit{neighboring datasets} if $D_1\simeq_1 D_2$, and we also write it as $D_1\simeq D_2$.
A \textit{mechanism} $\cM:\cX^n\rightarrow \cY$ is a randomized algorithm taking a dataset as input and outputting a value in $\cY$.
\textit{Differential privacy} measures how much the outputs of the mechanism differ when the inputs are two neighboring datasets. 
% {We write $f\circ g(X):=f(g(X))$ for any $f$ and $g$.} 

In this paper, we use the $f$-DP framework, which measures the difference between two distributions by hypothesis testing.
With an observation $\cM(D_{\text{in}})=X$, we consider a hypothesis test between $H_0: X\sim\cM(D)$ and $H_1: X\sim\cM(D')$.
Intuitively, the harder this test is, the stronger the privacy guarantee that the mechanism {$\cM$} provides.  \citet{wasserman2010statistical} first used this hypothesis testing framework in DP, and \citet{dong2021gaussian} generalized it to the $f$-DP framework using the \textit{tradeoff function} which characterizes the difficulty of this test (also known as the receiver operating characteristic (ROC) curve.)

\begin{definition}[tradeoff function \citep{dong2021gaussian}]\label{def:tradeoff}
Consider the hypothesis test $H_0:X\sim P$ versus $H_1:X\sim Q$. For any rejection rule $\phi(X)$, we use $\alpha_\phi$ to denote the type I error and $\beta_{\phi}$ to denote the type II error. The tradeoff function $T_{P,Q}(\alpha): [0,1]\rightarrow [0,1]$ is defined to be
$T_{P,Q}(\alpha) := \inf_{\phi}\{\beta_\phi \;|\;  \alpha_{\phi} \leq \alpha \}$.
% , where the infimum is over all possible rejection rules $\phi$. 
For a tradeoff function $f$, we denote its inverse by $f^{-1}(x):=\inf\{\alpha\in[0,1]: f(\alpha)\leq x\}$. We define that $f$ is symmetric if $f=f^{-1}$.
\end{definition}
        
\begin{figure}[t]
        \centering
        \includegraphics[width=0.5\textwidth]{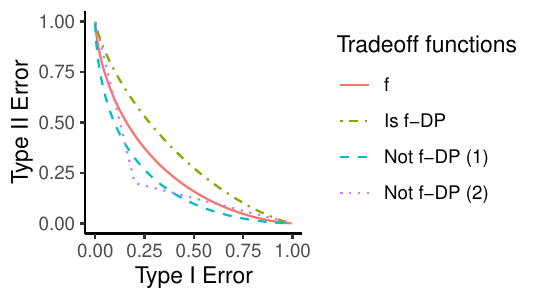}
        \captionof{figure}{An illustration of tradeoff functions and $f$-DP \citep{dong2021gaussian}.}\label{fig:not-fdp}
\end{figure}

Lower tradeoff functions indicate less privacy since an adversary can distinguish one distribution from the other with smaller type II error at a given type I error. The upper bound of the tradeoff function is $T(\alpha)=1-\alpha$, since it corresponds to the family of random rejection rules which reject $H_0$ with probability $\alpha$ for any observation $X$.

\begin{definition}[$f$-DP \citep{dong2021gaussian}]\label{def:fdp}
Let $f$ be a tradeoff function. We write \\
$T_{\cM(D),\cM(D')} \geq f$ if $T_{\cM(D),\cM(D')}(\alpha) \geq f(\alpha)~\forall \alpha\in[0,1]$. A mechanism $M$ is said to be $f$-differentially private {($f$-DP)} if $T_{\cM(D),\cM(D')} \geq f$ for any datasets $D,D'$ with $D\simeq D'$.
\end{definition}

Intuitively, for a mechanism $\cM$ satisfying $f$-DP where $f=T_{P,Q}$, testing $H_0: X\sim \cM(D)$ versus $H_1: X\sim \cM(D')$ is at least as hard as testing $H_0:X\sim P$ versus $H_1:X\sim Q$ when $D\simeq D'$. 
We visualize the definition of $f$-DP in Figure \ref{fig:not-fdp}. Among all tradeoff functions, an important subclass is $G_\mu(\alpha)=T_{\cN(0,1), \cN(\mu,1)}(\alpha)$. $G_\mu$-DP is also called $\mu$-Gaussian DP (GDP), which is shown to be the limit of many DP procedures under composition by \citet{dong2021gaussian}. Another important subclass is $f_{\ep,\delta}(\alpha) = \max\{0, 1-\delta-\ee^\ep\alpha, \ee^{-\ep}(1-\delta-\alpha)\}$ since $f_{\ep,\delta}$-DP is equivalent to $(\ep,\delta)$-DP. 
{\begin{definition}[$(\ep,\delta)$-DP: \citealp{dwork2006calibrating}]\label{def:DP}
	A mechanism  $\cM:\cX^n \rightarrow \cY $ is $(\ep, \delta)$-differentially private {($(\ep,\delta)$-DP)} if for any neighboring datasets $D, D'\in \cX^n$, and every {measurable} set $S \subseteq \cY$, 
	the following inequality holds:
	$ \pr[\cM(D)\in S] \leq \ee^{\ep}\pr[\cM(D')\in S] + \delta. $
The \textit{privacy profile} \citep{balle2018privacy} of $\cM$ is a map $\delta(\ep)$ which represents the smallest $\delta$ such that $\cM$ is $(\ep,\delta)$-DP.
\end{definition}
}

\paragraph{The Gaussian mechanism}
Let the $\ell_2$-\textit{sensitivity} of a function  $g:\cX^n \rightarrow \RR^d$ be $\ds\Delta(g)=\sup\limits_{D_1\simeq D_2}\|g(D_1)-g(D_2)\|_2$.
For any $g:\cX^n \rightarrow \RR^d$, the \textit{Gaussian mechanism} on $g$ adds Gaussian noises to the output of $g$: $\cM_{\text{G}}(D, g, \sigma)=g(D)+\xi$ where $\xi\sim\cN(\mu=0,\Sigma=\sigma^2 I_{d\times d})$.
\citet{dong2021gaussian} proved that $\cM_{\text{G}}(D, g, \sigma)$ satisfies $\mu$-GDP if $\sigma^2={(\Delta(g) / \mu)^2}$.

% \paragraph{} 
\begin{restatable}[Subsampling \citep{dong2021gaussian}]{proposition}{propfdpsubsamling}\label{prop:fdp_subsamling}
Let $D \in \cX^n$ be a dataset and $D':=\text{Sample}_{m,n}(D) \in \cX^m$ be chosen uniformly at random among all the subsets of $D$ with size $m\leq n$ {(sampling without replacement)}. 
For $\cM: \cX^m \rightarrow \cY$, $\cM \circ \text{Sample}_{m,n}:  \cX^n \rightarrow \cY$ is the subsampled mechanism. 
For $0 \le p \le 1$, let $f_p:= pf + (1-p)\Id$ and $C_p(f) := \min\{f_p, f_p^{-1}\}^{**}$ where $\Id(x) = 1-x$, $f^*(y)=\sup_{-\infty<x<\infty} xy-f(x)$, and $f^{**}=(f^*)^*$. 
% \citet{dong2021gaussian} showed that 
If $\cM$ satisfies $f$-DP and $p=m/n$, then $\cM\circ\text{Sample}_{m,n}$ satisfies $C_p(f)$-DP.
\end{restatable} 

\begin{restatable}[Composition \citep{dong2021gaussian}]{proposition}{propfdpcomposition}\label{prop:fdp_composition}
The composition property of DP quantifies the cumulative privacy cost of several DP outputs.
If $f = T_{P, Q}$ and $g = T_{P', Q'}$, their tensor product is defined as $f\otimes g:= T_{P\times P', Q\times Q'}$ where $P\times P'$ is the product measure of $P$ and $P'$, and $Q\times Q'$ is the product measure of $Q$ and $Q'$.
If $\cM_i$ satisfies $f_i$-DP for $i=1,\ldots,k$, then $\cM=(\cM_1,\ldots,\cM_k)$ satisfies $f_1\otimes\ldots\otimes f_k$-DP. {We define $f^{\otimes k}=f_1\otimes\ldots\otimes f_k$ if $f_i=f$ for $i=1,\ldots,k$. For the GDP guarantee, we have $G_{k\mu}=G_{\mu}^{\otimes k}$.}
\end{restatable} 

\paragraph{Group privacy}
While $f$-DP guarantees protection of the privacy of each individual, it can be generalized to give a privacy guarantee for groups of size $k$. We say a mechanism $\cM$ satisfies $f_k$-DP for groups of size $k$ if $T(D_1,D_2)\geq f_k$ for all  $D_1$ and $D_2$ with $D_1\simeq_k D_2$. If a mechanism is $\mu$-GDP, then it is $k\mu$-GDP for groups of size $k$ \citep{dong2021gaussian}. 

{At the end of this section, we state the primal-dual result between $(\ep,\delta)$-DP and $f$-DP in Proposition \ref{prop:primal-dual}. In the next section, we use it to transform the results by \citet{balle2018privacy} from $(\ep,\delta)$-DP  to $f$-DP, and we use it in Proposition \ref{prop:numerical-composition} for the numerical computation of composition, 
since Proposition \ref{prop:fdp_composition} involves high-dimensional testing which is challenging.% ) 
\begin{proposition}[primal-dual view between $(\ep,\delta)$-DP and $f$-DP: \citealp{dong2021gaussian}]\label{prop:primal-dual}
     A mechanism is $(\ep_i, \delta_i)$-DP for all $i\in I$ if and only if it is $f$-DP with $f=\sup_{i\in I} f_{\ep_i, \delta_i}$ where 
     $f_{\ep_i, \delta_i} = \max\{0,1-\delta-\ee^\ep\alpha, \ee^{-\ep}(1-\delta-\alpha)\}$.
    % (Primal to Dual) 
    For a symmetric tradeoff function $f$, a mechanism is $f$-DP if and    only if it is $(\ep, \delta)$-DP for all $\ep \geq 0$ with $\delta(\ep)=1+f^*(-\e^\ep)$ where $f^*(y)=\sup_{-\infty<x<\infty} xy-f(x)$, {also known as the convex conjugate of $f$}.
\end{proposition}
\vspace{-10pt}
}
\section{Privacy analysis of bootstrap resampling}\label{sec:privacy_analysis}
In this section, {we provide novel privacy guarantees for the release of DP bootstrap estimates by proving new results of resampling and composition.} First, we apply the privacy guarantee for sampling with replacement by \citet{balle2018privacy} to bootstrap and convert it from $(\ep, \delta)$-DP to $f$-DP. However, the resulting formula is computationally intractable. We then give a new $f$-DP bound with a direct proof, which agrees with the result of \citet{balle2018privacy}, but is more transparent and computationally friendly. {For private statistical inference based on many DP bootstrap estimates, we provide a numerical method to compute the exact composition result in $f$-DP, along with a user-friendly asymptotic GDP guarantee.} Proofs of the results in this section are provided in Appendix \ref{sec:append_proof}.

\subsection{\texorpdfstring{$f$}{f}-DP guarantee with one bootstrap sample as input}\label{sec:f_DP}
Bootstrap sampling is denoted by a randomized mapping $\mathtt{boot}_n:\cX^n\rightarrow\cX^n$, where $D=(x_{1},\ldots,x_{n})$ is a database, $\mathtt{boot}_n(D)=(x_{i_1},x_{i_2},\ldots,x_{i_n})$ is a randomly generated dataset where ${i_k}\iid \mathrm{Uniform}(\{1,2,\ldots,n\}),\ k=1,2,\ldots,n$. 
Let $p_{i,n}={n\choose i}(1/n)^i(1-1/n)^{n-i}$ which is the probability that a given entry of $D$ is included {$i$}  times in $\mathtt{boot}_n(D)$. We also write $\mathtt{boot}_n$ and $p_{i,n}$ as $\mathtt{boot}$ and $p_i$ respectively when $n$ is known from the context.

We obtain an $f$-DP result using the primal-dual conversion \citep{dong2021gaussian} on the result by \citet{balle2018privacy}. The conversion formula and the original result from \citet{balle2018privacy} are included as Proposition \ref{prop:primal-dual} and Theorem \ref{thm:balle} in the appendix.

\begin{restatable}{proposition}{propepdefboot}\label{prop:epde_f_boot}
{For $i=1,\ldots, n$, let $f_i$ be symmetric tradeoff functions.} 
For $\cM$ satisfying group privacy $f_{i}$-DP with group size $i$, $\cM\circ\mathtt{boot}$ satisfies $f_{\cM \circ \mathtt{boot}}$-DP where $f_{\cM \circ \mathtt{boot}}=C_{1-p_0}\l(\l(\sum_{i=1}^n \frac{p_i}{1-p_0} f_{i}^*\r)^*\r)$, and $C_{1-p_0}(\cdot)$ and $f^*(\cdot)$ are introduced in Proposition \ref{prop:fdp_subsamling}.
\end{restatable}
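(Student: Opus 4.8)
The plan is to apply the primal--dual conversion (Proposition \ref{prop:primal-dual}) to the state-of-the-art $(\ep,\delta)$-DP bound for resampling with replacement (Theorem \ref{thm:balle}). Recall that an $f$-DP guarantee is equivalent to the family of $(\ep,\delta(\ep))$-DP guarantees indexed by $\ep\ge 0$ under the standard correspondence between a tradeoff function and its privacy profile; so the argument has two pieces: (a) write Balle's privacy profile $\ep\mapsto\delta_{\cM\circ\texttt{boot}}(\ep)$ in terms of the group-privacy profiles of $\cM$, and (b) show that running that profile back through Proposition \ref{prop:primal-dual} produces exactly the claimed tradeoff function $C_{1-p_0}\bigl((\sum_{i=1}^n\tfrac{p_i}{1-p_0}f_i^{*})^{*}\bigr)$.

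The structure that makes (b) work is a two-level decomposition visible already in Balle's bound. Fix neighbors $D\simeq D'$ differing in one coordinate, let $I=(i_1,\dots,i_n)$ be the vector of resampling indices, and let $N=\#\{k:i_k=1\}$, so $\Pr[N=i]=p_i$. Conditioned on $I$, $\texttt{boot}(D)$ and $\texttt{boot}(D')$ are deterministic datasets that coincide when $N=0$ and are $\simeq_i$ neighbors when $N=i\ge 1$; hence $\cM\circ\texttt{boot}(D)=p_0R+(1-p_0)\widetilde P$ and $\cM\circ\texttt{boot}(D')=p_0R+(1-p_0)\widetilde Q$, where $R$ is the common ``record-absent'' component and $\widetilde P=\sum_{i\ge1}\tfrac{p_i}{1-p_0}P_i$, $\widetilde Q=\sum_{i\ge1}\tfrac{p_i}{1-p_0}Q_i$ are the record-present mixtures over group sizes. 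In privacy-profile language this is precisely the ``weight-$(1-p_0)$ amplification of an inner mechanism whose profile is the $\tfrac{p_i}{1-p_0}$-weighted combination of the group-$i$ profiles of $\cM$,'' which is the content of Theorem \ref{thm:balle}. Passing back through Proposition \ref{prop:primal-dual}: the outer weight-$(1-p_0)$ amplification is exactly the operator $C_{1-p_0}(\cdot)$ (the tradeoff-function counterpart of subsampling at rate $1-p_0$), and the inner weighted combination of group profiles is exactly $(\sum_{i=1}^n\tfrac{p_i}{1-p_0}f_i^{*})^{*}$ --- because the conjugate of a nonnegative-weighted infimal convolution is the weighted sum of conjugates, and mixing over group sizes corresponds on the tradeoff side to that infimal convolution (a latent-variable/data-processing plus Jensen argument, using convexity of tradeoff functions). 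Composing the two steps yields the stated $f_{\cM\circ\texttt{boot}}$, and symmetry of each $f_i$ together with the symmetry built into $C_{1-p_0}$ makes the bound hold for both orientations of the neighbor pair, as $f$-DP requires. The same conclusion can be reached directly, bypassing $(\ep,\delta)$: lower-bound $T_{\widetilde P,\widetilde Q}$ by $(\sum_{i=1}^n\tfrac{p_i}{1-p_0}f_i^{*})^{*}$ via the group-privacy/Jensen argument, then apply the subsampling result of \citet{dong2021gaussian} to the outer $p_0$-versus-$(1-p_0)$ mixture with common component $R$.

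I expect the main obstacle to be step (b): matching Balle's $(\ep,\delta)$ formula, term by term, with the image of $C_{1-p_0}\bigl((\sum_i\tfrac{p_i}{1-p_0}f_i^{*})^{*}\bigr)$ under the conversion of Proposition \ref{prop:primal-dual}. This needs the precise dictionary entries ``$C_p(\cdot)$ $\leftrightarrow$ subsampled privacy profile'' and ``$f^{*}$/infimal convolution $\leftrightarrow$ group privacy / synchronized mixtures,'' and one must check that the representation $f=\sup_{\ep\ge0}f_{\ep,\delta(\ep)}$ is actually attained (closedness/biconjugation). If instead one runs the direct argument, the delicate point shifts to promoting group privacy --- a statement about individual dataset pairs --- to the tradeoff bounds $T_{P_i,Q_i}\ge f_i$ for the \emph{synchronized} mixtures $P_i,Q_i$ (latent-variable plus data-processing, with care about measurability of the conditioning), and to invoking the subsampling theorem of \citet{dong2021gaussian} in its black-box ``common-component'' form while keeping the replace-one neighboring relation consistent throughout.
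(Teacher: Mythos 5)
Your proposal is correct and follows essentially the same route as the paper: the paper's proof is exactly the term-by-term conversion of Theorem \ref{thm:balle} through Proposition \ref{prop:primal-dual}, writing $\sum_i p_i\delta_{\cM,i}(\ep)=(1-p_0)\bigl(1+\sum_i\tfrac{p_i}{1-p_0}f_i^*(-\ee^\ep)\bigr)$, invoking the re-parameterization $\ep'=\log(1-p+p\ee^\ep)$, $\delta'=p(1+f^*(-\ee^\ep))\;\leftrightarrow\;\delta'=1+f_p^*(-\ee^{\ep'})$ with $f=(\sum_i\tfrac{p_i}{1-p_0}f_i^*)^*$, and then symmetrizing via $C_{1-p_0}(f)=\mathrm{Symm}(f_{1-p_0})$. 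The two "dictionary entries" you flag as the crux are precisely the lemmas the paper uses (the page-42 re-parameterization in \citet{dong2021gaussian} and Proposition \ref{prop:symm}), so no further comparison is needed.
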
 

Although this representation of $f_{\cM \circ \mathtt{boot}}$ is {seemingly} simple, it is hard to compute or visualize this tradeoff function because evaluating $f_{\cM \circ \mathtt{boot}}(\alpha)$ requires solving {over $n$} optimization problems for each $\alpha$. It is also hard to derive composition results from this $f_{\cM \circ \mathtt{boot}}(\alpha)$, which is crucial for using the bootstrap for statistical inference since multiple bootstrap samples will be used.
Due to the intractability of Proposition \ref{prop:epde_f_boot}, we prove a new $f$-DP result for $\cM\circ\mathtt{boot}$ using the \textit{mixture of tradeoff functions}. The fundamental idea of our proof is to view the bootstrap as a mixture distribution and decompose the overall tradeoff function into a mixture of tradeoff functions where each one is easy to obtain.

\begin{restatable}[Mixture of tradeoff functions]{definition}{defmixtradeoff}\label{def:mix_tradeoff}
For $i=1,2,\ldots,k$, let $f_i$ be tradeoff functions and $p_i\in(0,1]$ satisfying $\sum_{i=1}^k p_i=1$. We write $\underline{f}=(f_1,\ldots,f_k)$ and $\underline{p}=(p_1,\ldots,p_k)$. For a constant $C \in (-\infty,0]$, define $A_i(C):=\{\alpha_i | C\in \partial f_i(\alpha_i)\}$ where $\partial f_i(\alpha_i)$ is the sub-differential of $f_i$ at $\alpha_i$, and $A(C):=\{\sum_{i=1}^k p_i \alpha_i | \alpha_i\in A_i (C)\}$. The mixture of tradeoff functions, $\mathrm{mix}(\underline{p}, \underline{f})$, is defined as follows: For each $\alpha\in(0,1)$, we find $C$ such that $\alpha\in A(C)$ and $\alpha_i\in A_i(C)$ for $i=1,2,\ldots,k$ where $\sum_{i=1}^k p_i \alpha_i=\alpha$, then we define $\mathrm{mix}(\underline{p}, \underline{f})(\alpha)=\sum_{i=1}^k p_i f_i(\alpha_i)$; we define $\mathrm{mix}(\underline{p}, \underline{f})(0)=\sum_{i=1}^k p_i f_i(0)$ and $\mathrm{mix}(\underline{p}, \underline{f})(1)=0$.
\end{restatable}
{
\begin{remark}\label{rmk:mix_tradeoff}
If for $i=1,\ldots,k$, $f_i$ has derivative $f_i'$ which is monotonically increasing for every $\alpha$ in $[0,1]$, we can simplify Definition \ref{def:mix_tradeoff} as $\mathrm{mix}(\underline{p}, \underline{f})=(\sum_{i=1}^k (p_i f_i \circ (f_i')^{-1})) \circ (\sum_{i=1}^k p_i (f_i')^{-1})^{-1}$ since $\sum_{i=1}^k p_i (f_i')^{-1}$ maps the slope $C$ to the type I error, and $\sum_{i=1}^k (p_i f_i \circ (f_i')^{-1})$ maps the slope $C$ to the type II error.
\end{remark}
}
\begin{figure}[t]
        \centering
        \includegraphics[width=0.75\textwidth]{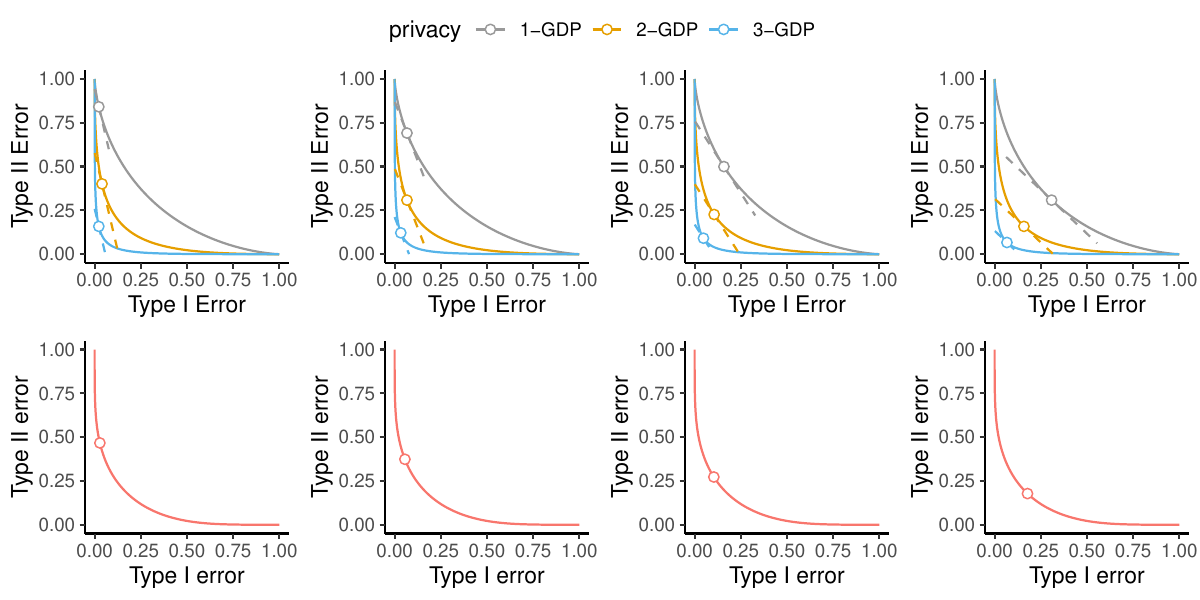}
        \captionof{figure}{An illustration of the mixture of tradeoff functions. In the top row, the solid curves are the tradeoff functions $f_1, f_2, f_3$ corresponding to 1-GDP, 2-GDP, and 3-GDP, respectively, and the three dashed lines are the tangent lines with matched slopes. The mixture of $\underline{f}=(f_1, f_2, f_3)$ with corresponding weights $\underline{p}=(p_1,p_2,p_3)=(\frac{1}{3},\frac{1}{3},\frac{1}{3})$ is $\mathrm{mix}(\underline{p}, \underline{f})$ shown as the curves in the figures on the bottom row. Each circle dot on the mixture curve is the average of the circle dots on $f_1, f_2, f_3$ weighted by $p_1, p_2, p_3$. Note that $\mathrm{mix}(\underline{p}, \underline{f})$ is neither $\frac{1}{3}(f_1+f_2+f_3)$, nor is it a member of the GDP family.}\label{fig:mix-fdp}
\end{figure}

Intuitively, as illustrated in Figure \ref{fig:mix-fdp}, by matching the slopes of each tradeoff function $f_i$, we minimize the overall type II error given a fixed type I error, {and $\mathrm{mix}(\underline{p}, \underline{f})$ is well-defined; see Lemma \ref{lem:mix_welldef} in the appendix.} In Theorem \ref{thm:mixlow}, we show that $\mathrm{mix}(\underline{p}, \underline{f})$ always gives a lower bound on the privacy cost of an arbitrary mixture mechanism. Note that this general result applies to any mixture of DP mechanisms.

\begin{restatable}{theorem}{thmmixlow}\label{thm:mixlow}
For $i=1,2,\ldots,k$, let $f_i$ be tradeoff functions and $\cM_i:\cX^n\rightarrow \cY_i, ~\cY_i\subset \cY$ be mechanisms satisfying $f_i$-DP.
Let $\cM: \cX^n\rightarrow \cY$ be a mixture mechanism which randomly selects one mechanism from $k$ mechanisms, $\{\cM_i\}_{i=1}^k$, with corresponding probabilities $\{p_i\}_{i=1}^k$ where $\sum_{i=1}^k p_i = 1$, and the output of $\cM$ will be the output of $\cM_i$ if $\cM_i$ is selected.  Then $\cM$ satisfies $f$-DP with $f=\mathrm{mix}(\underline{p}, \underline{f})$ where  $\underline{f}=(f_1,\ldots,f_k)$ and $\underline{p}=(p_1,\ldots,p_k)$.
\end{restatable}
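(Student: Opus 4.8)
The plan is to analyze the mixture mechanism $\cM$ via the hypothesis testing characterization of $f$-DP directly. Fix neighboring datasets $D \simeq D'$. Writing $o$ for the observed output, the key observation is that the output distributions are mixtures: $\cM(D)$ has distribution $P = \sum_{i=1}^k p_i P_i$ and $\cM(D')$ has distribution $Q = \sum_{i=1}^k p_i Q_i$, where $P_i = \cM_i(D)$ and $Q_i = \cM_i(D')$. Since $\cM_i$ is $f_i$-DP, we know $T_{P_i,Q_i} \ge f_i$, and it suffices to show $T_{P,Q} \ge \mathrm{mix}(\{(p_i,f_i)\}_{i=1}^k)$; the claim then follows because this holds for every neighboring pair. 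Actually, to get the cleanest argument I would first reduce to the case where each $f_i$ is itself a genuine tradeoff function realized by a specific pair, i.e. $f_i = T_{P_i,Q_i}$, by the standard fact that if $T_{P_i,Q_i} \ge f_i$ then the bound we prove for the realized tradeoff functions only gets stronger — one has to check here that $\mathrm{mix}$ is monotone in each $f_i$, which follows from its construction via matched subgradients.

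**Next I would** lower-bound $T_{P,Q}(\alpha)$ using the Neyman–Pearson structure. The optimal test for $H_0: o\sim P$ vs $H_a: o \sim Q$ at level $\alpha$ rejects on a region determined by a likelihood-ratio threshold. Equivalently, by Lagrangian duality for tradeoff functions (the biconjugate characterization already invoked in the paper for subsampling), for any slope $C \le 0$ the lower boundary of the region $\{(\alpha, f(\alpha))\}$ is the upper envelope of lines $\alpha \mapsto f_i(\alpha_i) + C(\alpha - \alpha_i)$; the essential point is that the optimal randomized test for the mixture at a given threshold level is, restricted to the event "$\cM_i$ was selected," exactly the optimal test against $(P_i, Q_i)$ at that same likelihood-ratio threshold — because the likelihood ratio $dQ/dP$ and the within-component ratios $dQ_i/dP_i$ are compatible in the sense that thresholding the global ratio corresponds to a common threshold across components when the components are mutually singular, and more generally one argues via the common subgradient/slope $C$. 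This is precisely what the definition of $A_i(C)$ and $A(C)$ encodes: choosing the test with type-I error $\alpha_i$ on component $i$ where $C \in \partial f_i(\alpha_i)$ yields total type-I error $\sum p_i \alpha_i$ and total type-II error $\sum p_i f_i(\alpha_i)$, and no test can do better at that type-I level.

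**The main obstacle** I expect is making rigorous the claim that the pointwise-optimal test for the mixture decomposes componentwise with a shared threshold. When the components $\cM_i$ are not mutually singular (their output supports overlap), the global likelihood ratio $dQ/dP = \big(\sum_j p_j dQ_j\big)/\big(\sum_j p_j dP_j\big)$ is a weighted average and does not factor through the individual $dQ_i/dP_i$, so it is not literally true that thresholding it is the same as thresholding each $dQ_i/dP_i$ at a common level. The clean way around this, and what I would actually carry out, is to \emph{augment the output} of $\cM$ with the index $i$ of the selected mechanism: let $\widetilde{\cM}$ release $(i, \cM_i(D))$. Post-processing (releasing only $\cM_i(D)$) can only increase tradeoff functions, so it suffices to show $\widetilde\cM$ is $\mathrm{mix}(\{(p_i,f_i)\})$-DP. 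For $\widetilde\cM$ the output distributions \emph{are} mutually singular across the first coordinate, the likelihood ratio does decompose, and the optimal level-$\alpha$ test genuinely splits into per-component tests with a common Lagrange multiplier $C$; optimizing the allocation $\alpha = \sum_i p_i\alpha_i$ subject to $C \in \partial f_i(\alpha_i)$ is exactly the content of Definition \ref{def:mix_tradeoff}. I would then verify the boundary cases $\alpha = 0$ and $\alpha = 1$ separately (matching $\sum p_i f_i(0)$ and $0$ respectively) and check continuity/convexity so that the per-$C$ optimal points trace out a bona fide tradeoff function, completing the proof.
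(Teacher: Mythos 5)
Your proposal is correct and takes essentially the same route as the paper: the paper likewise relaxes to rejection rules that may effectively depend on which component was selected (the same relaxation your output-augmentation plus post-processing step provides), decomposes the errors as $\alpha=\sum_{i=1}^k p_i\alpha_i$ and $\beta=\sum_{i=1}^k p_i\beta_i$ with $\beta_i\geq f_i(\alpha_i)$, and solves the resulting convex allocation problem by KKT, whose stationarity condition is exactly the common-subgradient matching $\lambda\in\partial f_i(\alpha_i)$ encoded in the definition of $\mathrm{mix}$. The only cosmetic difference is that the paper keeps $\beta_i\geq f_i(\alpha_i)$ as inequality constraints inside the optimization, so it never needs your preliminary reduction to $f_i=T_{P_i,Q_i}$ or a monotonicity check for $\mathrm{mix}$.
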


The proof of Theorem \ref{thm:mixlow} is based on using different rejection rules when a different $\cM_i$ is selected to allocate the overall type I error to each rejection rule to minimize the overall type II error. We can combine such rejection rules as one if $\cY_i$ are disjoint; therefore, Theorem \ref{thm:mixlow} is not improvable. 
For the DP bootstrap setting, since each $\cM_i$ maps to the same $\cY_i$, we can leverage this fact to strengthen the privacy guarantee in Theorem \ref{thm:fdp_single_boot}.
We separately consider the case that a given entry of $D$ is \textit{not included} in $\mathtt{boot}(D)$ to improve our bound.
% -- in this case, we use a similar idea to the subsampling {result of $f$-DP to improve our bound.}

\begin{restatable}{theorem}{fdpsingleboot}\label{thm:fdp_single_boot}
Let $f_i$ be symmetric tradeoff functions for $i=1,2,\ldots, n$. For a mechanism $\cM$ satisfying group privacy $f_{i}$-DP with group size $i$, $\cM\circ\mathtt{boot}$ satisfies $f_{\cM \circ \mathtt{boot}}$-DP where $f_{\cM \circ \mathtt{boot}} = C_{1-p_0}(\mathrm{mix}(\underline{p}, \underline{f}))$, $\underline{f}=(f_1,\ldots,f_n)$, and $\underline{p}=(\frac{p_1}{1-p_0},\ldots,\frac{p_n}{1-p_0})$.
\end{restatable}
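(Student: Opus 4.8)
The plan is to fix a neighboring pair $D \simeq_1 D'$, let $j$ be the single index at which they disagree, and condition the randomness of $\texttt{boot}$ on $N_j$, the number of times index $j$ is drawn. Since the $n$ draws are i.i.d.\ uniform on $\{1,\dots,n\}$, $N_j \sim \mathrm{Binomial}(n,1/n)$, so $\{N_j=i\}$ has probability $p_i$ and in particular $p_0 = (1-1/n)^n \in (0,1)$. I would then split the analysis into the ``active'' event $\{N_j \geq 1\}$ and the ``inactive'' event $\{N_j = 0\}$ and treat them differently, because the inactive event carries a special structure (the two resamples become \emph{identically distributed}) that lets us improve on a direct application of Theorem~\ref{thm:mixlow}.

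First I would handle the active event. Conditionally on $\{N_j = i\}$ with $i \in \{1,\dots,n\}$ fixed, one can couple $\texttt{boot}(D)$ and $\texttt{boot}(D')$ -- use the same placement of the $i$ special positions and the same i.i.d.\ draws on the other $n-i$ positions -- so that the two realized datasets always satisfy $E \simeq_i E'$; the group-privacy hypothesis then gives $T_{\cM(E),\cM(E')} \geq f_i$ for every coupled pair. Averaging over the coupling and using convexity of $f_i$ (a tradeoff-function lower bound survives mixing, via a Jensen-type argument) shows that the conditional distributions $P_i$ of $\cM(\texttt{boot}(D))$ given $N_j=i$ and $Q_i$ of $\cM(\texttt{boot}(D'))$ given $N_j=i$ satisfy $T_{P_i,Q_i} \geq f_i$. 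Setting $q_i := p_i/(1-p_0)$, the law of $\cM(\texttt{boot}(D))$ conditional on $\{N_j\geq 1\}$ is $\widetilde P := \sum_{i=1}^n q_i P_i$, and likewise $\widetilde Q := \sum_{i=1}^n q_i Q_i$. Running the argument behind Theorem~\ref{thm:mixlow} -- which only invokes the $f_i$-bound of the $i$-th component at the pair under test, so it applies to this mixture of distribution pairs -- gives $T_{\widetilde P,\widetilde Q} \geq g$ with $g := \mathrm{mix}(\{(q_i,f_i)\}_{i=1}^n)$; and since each $f_i$ is symmetric, the slope-matching construction makes $g$ symmetric as well.

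For the inactive event, the key point is that conditionally on $\{N_j = 0\}$ the resamples $\texttt{boot}(D)$ and $\texttt{boot}(D')$ have the same law ($n$ i.i.d.\ draws from the common sub-multiset $\{x_k : k\neq j\}$), so the conditional distributions of $\cM(\texttt{boot}(D))$ and $\cM(\texttt{boot}(D'))$ given $N_j=0$ coincide; call this common law $R$. Hence $\cM(\texttt{boot}(D)) = p_0 R + (1-p_0)\widetilde P$ and $\cM(\texttt{boot}(D')) = p_0 R + (1-p_0)\widetilde Q$: the two output laws share the component $R$ with weight $p_0$, and their remaining parts are separated by the symmetric tradeoff function $g$. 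This is exactly the configuration at the heart of the $f$-DP subsampling analysis of \citet{dong2021gaussian}: a pair of mixtures agreeing on an event of probability $p_0$ whose conditional laws on the complement have tradeoff function at least $g$ has tradeoff function at least $C_{1-p_0}(g)$. Invoking this common-component lemma yields $T_{\cM(\texttt{boot}(D)),\,\cM(\texttt{boot}(D'))} \geq C_{1-p_0}(g) = C_{1-p_0}\!\big(\mathrm{mix}(\{(p_i/(1-p_0),f_i)\}_{i=1}^n)\big)$, and since the neighboring pair was arbitrary, $\cM\circ\texttt{boot}$ is $f_{\cM\circ\texttt{boot}}$-DP.

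The hard part will be the last step: isolating and proving the abstract common-component lemma in the form needed, namely that $T_{p_0 R + (1-p_0)\widetilde P,\; p_0 R + (1-p_0)\widetilde Q} \geq C_{1-p_0}(g)$ whenever $T_{\widetilde P,\widetilde Q} \geq g$ with $g$ symmetric. This is essentially the content of the $f$-DP subsampling theorem, but it must be reconstructed for general mixtures sharing a common piece, and one has to carry the $\min\{\cdot,\cdot\}^{**}$ convexification defining $C_{1-p_0}$ all the way through -- note that it does \emph{not} collapse to $(1-p_0)g + p_0\,\Id$ even though $g$ is symmetric. Secondary points to get right are making the conditioning and coupling in the active case fully rigorous, and confirming that $g = \mathrm{mix}(\{(q_i,f_i)\}_{i=1}^n)$ is a bona fide (symmetric) tradeoff function so that $C_{1-p_0}(g)$ is well defined -- this should follow from the analysis accompanying Definition~\ref{def:mix_tradeoff} and Theorem~\ref{thm:mixlow}.
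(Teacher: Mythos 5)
Your decomposition of $\mathtt{boot}$ by the multiplicity $N_j$ of the differing index, the coupling argument giving $T_{P_i,Q_i}\geq f_i$ on the active event, and the application of the mixture machinery to obtain $g=\mathrm{mix}(\{(p_i/(1-p_0),f_i)\}_{i=1}^n)$ all match the paper's proof. The gap is in the final step: the ``common-component lemma'' you invoke --- that $T_{p_0R+(1-p_0)\widetilde P,\;p_0R+(1-p_0)\widetilde Q}\geq C_{1-p_0}(g)$ whenever $T_{\widetilde P,\widetilde Q}\geq g$ with $g$ symmetric, for an \emph{arbitrary} shared component $R$ --- is false. Counterexample: let $\widetilde P,\widetilde Q$ be supported on $\{0,1\}$ with $\widetilde P(1)=\widetilde Q(0)=0.9$, so that $g=T_{\widetilde P,\widetilde Q}$ is the symmetric pure-DP tradeoff with $\ee^\ep=9$ and fixed point $x^*=0.1$, and let $R=\delta_2$ be a point mass on a third outcome, with $p_0=1/2$. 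A Neyman--Pearson computation gives $T_{P^*,Q^*}(0.05)=0.55$, whereas $C_{1/2}(g)(0.05)=g_{1/2}(0.05)=0.75$. What \emph{is} true for an arbitrary $R$ is only the weaker two-component bound $\mathrm{mix}(\{(p_0,\Id),(1-p_0,g)\})$ from Theorem~\ref{thm:mixlow} (which equals $0.55$ at $\alpha=0.05$ here, i.e.\ the disjoint-support $R$ is exactly the tight case for that theorem), and this is strictly below $C_{1-p_0}(g)$.

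The upgrade from the naive mixture bound to $C_{1-p_0}(g)$ therefore requires additional structure relating $R$ to $\widetilde P$ and $\widetilde Q$, and supplying it is the real content of the paper's proof. Concretely, the paper shows that the conditional law of $\mathtt{boot}(D)$ given $N_j\geq 1$ is mapped to the conditional law given $N_j=0$ by replacing each occurrence of the differing entry with an independent uniform draw from the other $n-1$ entries; proving the combinatorial identity $\mathtt{replace}(\omega_{0\&1})=\omega_0$ is nontrivial and yields $T_{\widetilde P,R}\geq g$ (and symmetrically $T_{\widetilde Q,R}\geq g$). These translate into the extra constraints $\beta_0\geq f_>(\alpha_>)$ and $\beta_>\geq f_>(\alpha_0)$ on the joint type~I/type~II errors, and only the optimization constrained by them --- solved in the paper by a case-by-case KKT analysis and matched against the explicit formula for $C_{1-p_0}$ in Lemma~\ref{lem:cpformat} --- produces $C_{1-p_0}(g)$. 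So the step you flagged as ``reconstructing the subsampling theorem for general mixtures sharing a common piece'' cannot be carried out at that level of generality; you must first prove the replacement identity specific to bootstrap resampling.
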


{\begin{remark}
If $\cM$ satisfies $f_1$-DP, \citet{dong2021gaussian} proved that $\cM$ also satisfies $[1-(1-f_1)^{\circ k}]$-DP for groups of size $k$ where $f^{\circ k}$ denote $f$ composed with itself for $k$ times, e.g., $f^{\circ 3}(x) = f(f(f(x)))$. We can use this result in Theorem \ref{thm:fdp_single_boot} if no tighter result is known.
\end{remark}
}

\begin{remark}
The result by \citet{balle2018privacy} can be derived from our $f$-DP result in Theorem \ref{thm:fdp_single_boot}; see Proposition \ref{prop:our_to_epdelta} in the appendix. \citet{balle2018privacy} showed that their results were based on a novel advanced joint convexity property used in $(\ep,\delta)$-DP; similarly, our Theorem \ref{thm:mixlow} and \ref{thm:fdp_single_boot} reveal the advanced joint convexity property for $f$-DP, {and results related to Theorems \ref{thm:mixlow} and \ref{thm:fdp_single_boot} were independently proven by \citet[Lemma 3.1, Lemma 6.3]{NEURIPS2023_acb3e200}.} While \citet{balle2018privacy} provided specific settings attaining their bound, which also prove the optimality of our result, it is unknown for general settings how to construct rejection rules achieving each pair of the type I error and type II error given by our tradeoff function. Therefore, it may be possible to further improve the privacy analysis for the DP bootstrap. Nevertheless, our bound is fairly tight for {the specific settings that we show in Figure \ref{fig:mix_gdp1}a, and it} should suffice for most purposes. 
\end{remark}

Our result in Theorem \ref{thm:fdp_single_boot} is easier to compute when $\partial f_i$ are all known, since for any $C$, we can immediately obtain its corresponding $\alpha$ and $\beta$. For a given $\alpha$ or $\beta$, we can use the bisection method to search for $C$. From the following example and the composition result in Section \ref{sec:clt}, we see that our bound can be easily evaluated for the Gaussian mechanism.

\paragraph{Example: DP bootstrap using the Gaussian mechanism}
% We analyze the DP bootstrap using the Gaussian mechanism to illustrate the results in Theorem \ref{thm:fdp_single_boot}. 
In Figure \ref{fig:mix_gdp1}a, we show that if $\cM$ satisfies 1-GDP, then the DP bootstrap mechanism $\cM\circ\mathtt{boot}$ satisfies $f_{\cM\circ\mathtt{boot}}$-DP where $f_{\cM\circ\mathtt{boot}}$, shown as the solid opaque curve, is our lower bound of all tradeoff functions $T_{\cM\circ \mathtt{boot}(D_1), \cM\circ \mathtt{boot}(D_2)}$ when $D_1\simeq D_2$. 
We also show the tradeoff functions from specific neighboring dataset pairs in Figure \ref{fig:mix_gdp1}a {(transparent curves)} to illustrate misuses in the existing literature: 1) bootstrap cannot be used for free with the same privacy guarantee, i.e., $\cM\circ\mathtt{boot}$ no longer satisfies $1$-GDP, as opposed to \citet{brawner2018bootstrap}; 2) DP bootstrap cannot be analyzed using the privacy loss distribution (PLD) method as in \citet{koskela2020computing}.
Details of this example are in Appendix \ref{sec:append_dpboostrap_gaussian}.

\begin{figure}[t]
    \begin{minipage}{\textwidth}
        \centering
        \includegraphics[width=0.75\textwidth]{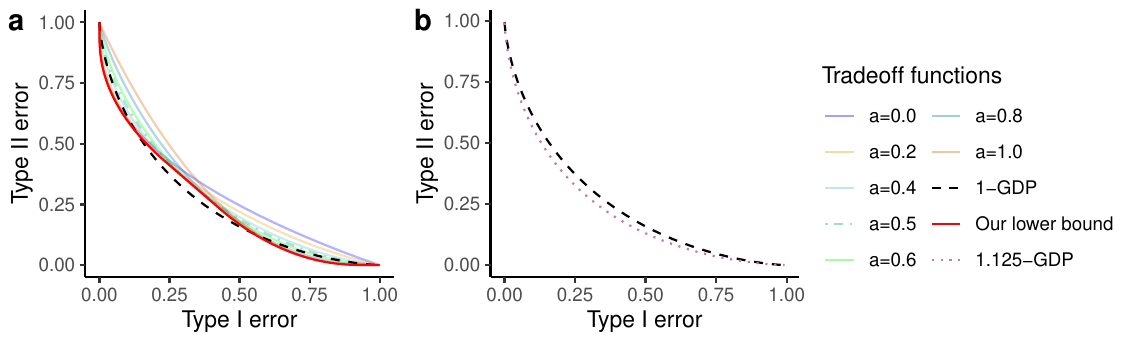}
        \caption{(a) An example showing the relationship between $f$, $f_{\cM\circ\mathtt{boot}}$, and tradeoff functions from specific neighboring dataset pairs. We consider the Gaussian mechanism $\cM$ satisfying 1-GDP shown as the dashed curve. The DP bootstrap mechanism $\cM\circ\mathtt{boot}$ satisfies $f_{\cM\circ\mathtt{boot}}$-DP by Theorem \ref{thm:fdp_single_boot} shown as the solid opaque curve. The transparent curves are tradeoff functions $T_{\cM\circ \mathtt{boot}(D_1), \cM\circ \mathtt{boot}(D_2)}$ where $\cM(D)=\frac{1}{n}\sum_{i=1}^n x_i + \xi,D=(x_1,x_2, \ldots, x_n),\xi\sim\cN(0,\frac{1}{n^2})$, and $D_1=(a,0,\ldots,0)$, $D_2=(a-1,0,\ldots,0)$. The solid curve is tight as a lower bound of the transparent curves. The dashed line (1-GDP) and the dotted dashed line (corresponding to $a=0.5$) are misused as lower bounds in \citet{brawner2018bootstrap} and \citet{koskela2020computing}, respectively (technically \citet{brawner2018bootstrap} worked in zCDP; see Appendix \ref{sec:append_dpboostrap_gaussian} for counter-examples to both).  \\
        (b) The asymptotic price of using the bootstrap. Running a $1$-GDP mechanism on $B$ different bootstrap samples has a similar privacy guarantee to running a $(\sqrt{2-2/\ee})$-GDP mechanism on the original dataset for $B$ times if $B$ is large enough 
        % In this figure, we let $\mu_0=1$. Note that  
        $(\sqrt{2-2/\ee}\approx1.125).$
        }\label{fig:mix_gdp1}
    \end{minipage}
\end{figure}

{\subsection{Finite and asymptotic composition of DP bootstrap}\label{sec:clt}
}

% \subsection{Asymptotic Composition of DP bootstrap with Gaussian mechanisms}\label{sec:clt}
{In this section, we derive composition results for the DP bootstrap because we need many bootstrap estimates to estimate the sampling distribution which is then used to conduct statistical inference.
We first present a numerical method in Proposition \ref{prop:numerical-composition} to calculate the exact privacy guarantee for the composition of several instances of the DP bootstrap. Then we prove an asymptotic GDP result for the composition of DP bootstrap when using the Gaussian mechanism.

By the definition of $f$-DP, there exist $P$ and $Q$ such that $f=T(P, Q)$. From the proof of \citep[Proposition 1]{dong2021gaussian}, a simple choice is to set the measurable space to be $[0, 1]$, $P$ to be the uniform distribution, and $Q$ to have density $-f'(1-x)$ on $[0, 1)$ and a point mass at $1$ with $Q[\{1\}] = 1-f(0)$. As a tradeoff function, $f$ is convex (see Proposition \ref{prop:convex-tradeoff}) which means $f$ is almost everywhere differentiable \citep[Theorem 25.5]{rockafellar1997convex}, and we set the density of $Q$ to be 0 at the nondifferentiable points of $f(1-x)$. With this $P$ and $Q$, we use Lemma 5.2 by \citet{zheng2020sharp} to calculate the privacy profile of the composition of $f_{\cM \circ \mathtt{boot}}$, which can be translated to $f$-DP by Proposition \ref{prop:primal-dual}. 
We summarize the result in Proposition \ref{prop:numerical-composition}, where we assume that $f_i'$ are strictly increasing to simplify the exposition.
When $f_i'$ are not strictly decreasing or $f_i$ are not differentiable at some points, a general but more complex result can be derived from Theorem \ref{thm:fdp_single_boot}. 

\begin{proposition}\label{prop:numerical-composition}
Let $f_i$ be symmetric tradeoff functions with strictly increasing derivatives $f_i'$ for $i=1,2,\ldots, n$. For a mechanism $\cM$ satisfying group privacy $f_{i}$-DP with group size $i$, $\cM\circ\mathtt{boot}$ satisfies $f_{\cM \circ \mathtt{boot}}$-DP where $f_{\cM \circ \mathtt{boot}} = T(P, Q)$, $P$ is $\mathrm{Uniform}(0,1)$ with density function $p(x)=1$, and $Q$ has the following density function $q(x)$, where $q$ and $x$ are parameterized by $C$, and $x^* = \sum_{i=1}^n  \frac{p_i}{1-p_0}  (f_i')^{-1}(-1)$,
$$
q(x) = 
\left\{
    \begin{aligned}
         & p_0 - (1-p_0)  C  \quad & 1 \geq x \geq 1 - x^*, \quad &  C < -1 \\
         & x \quad & 1-x^* > x \geq 1-p_0 - (1-2p_0) x^*, \quad &  C = -1 \\
         & 1/(p_0 - (1-p_0)/C) \quad & 1-p_0 - (1-2p_0) x^* > x \geq 0, \quad &  0 > C > -1, 
    \end{aligned}
\right.
$$
$$
x = 
\left\{
    \begin{aligned}
         & 1 - \sum_{i=1}^n  \frac{p_i}{1-p_0}  (f_i')^{-1}(C)  \quad & & C < -1 \\
         & 1-\sum_{i=1}^n  {p_i}  (f_i')^{-1}(C) - p_0\l(1-\sum_{i=1}^n  \frac{p_i}{1-p_0} (f_i')^{-1}(1/C)\r)  \quad & & 0 > C > -1.
    \end{aligned}
\right.
$$
The privacy profile of $f_{\cM \circ \mathtt{boot}}^{\otimes B}$-DP is $\delta_{\otimes, B}(\ep)$, which is recursively defined by \\ $\delta_{\otimes, B}(\ep) = \int_{\mathbb{R}} \delta_{\otimes, B-1}\l(\ep - \log(\frac{q(x)}{p(x)})\r) q(x)~\mathrm{d}x$ and $\delta_{\otimes, 1}(\ep) = \int_{\mathbb{R}} \max(0, q(x) - \mathrm{e}^\ep p(x))~\mathrm{d}x$.
\end{proposition}
\begin{remark}
    The $C$ in Proposition \ref{prop:numerical-composition} has the same meaning as the $C$ in Definition \ref{def:mix_tradeoff}: They are the matched slopes of the tradeoff functions $\{f_i\}_{i=1}^n$.
\end{remark}

}

Although Proposition \ref{prop:numerical-composition} can be used to evaluate the privacy guarantee of composition, the iterative calculation is often burdensome for a large number of compositions \citep{zheng2020sharp}. We prove an asymptotic result to understand the behavior as the number of compositions goes to infinity. For simplicity, we assume the initial mechanism satisfies GDP, but it may be possible to extend our result to mechanisms satisfying $f$-DP for other tradeoff functions $f$.  While Theorem \ref{thm:boot_comp} assumes that the base mechanism is GDP, it need not be the Gaussian mechanism. For example, \citet{gopi2022private} showed that the exponential mechanism satisfies GDP for many convex empirical risk minimization problems.

{\begin{restatable}{theorem}{thmbootcomp}\label{thm:boot_comp}
Let $\mu \in(0, \infty)$ be a given constant, and $\{\mu_B\}_{B=1}^{\infty}$ be a sequence such that $\mu_B\in(0,\infty)$ and $\lim_{B\rightarrow\infty}\mu_B\sqrt{(2-2/\ee)B}\rightarrow \mu$. For a mechanism $\cM_B$ that satisfies $\mu_B$ -GDP, let $f_{\cM_B,\mathtt{boot}}$ be the $f$-DP guarantee of $\cM_B\circ \mathtt{boot}$ from Theorem \ref{thm:fdp_single_boot}. Then 
$$
\lim\limits_{B\rightarrow\infty} f_{\cM_B,\mathtt{boot}}^{\otimes B} = G_{\mu\sqrt{(2-{1}/{n})(1-(1-{1}/{n})^n)/(2-{2}/{e})}} \geq G_{\mu}.
$$ 
% Let $\{f_{Bi}:1\leq i\leq B\}_{B=1}^\infty$ be a triangular array of symmetric tradeoff functions where each $f_{Bi}$ corresponds to $\mu_B$-GDP, i.e., $f_{Bi}=G_{\mu_B}$. 
% Call $f_{Bi,\mathtt{boot}}$ as the lower bound for $f_{Bi}$ from Theorem \ref{thm:fdp_single_boot}. Then 
% $\lim\limits_{B\rightarrow\infty} f_{B1,\mathtt{boot}}\otimes f_{B2,\mathtt{boot}}\otimes \cdots \otimes f_{BB,\mathtt{boot}} = G_{\mu\sqrt{(2-{1}/{n})(1-(1-{1}/{n})^n)/(2-{2}/{e})}} \geq G_{\mu}.$ 
\end{restatable}
}

{Figure \ref{fig:numerical_clt} shows the comparison between the privacy profiles of the asymptotic result by Theorem \ref{thm:boot_comp} and the numerical result by Proposition \ref{prop:numerical-composition} where $\mu = \sqrt{2-2/\ee}$. When $B$ is small, the asymptotic result tends to overestimate the privacy offered by the DP bootstrap, as the asymptotic $\delta(\ep)$ is lower than the numerical $\delta(\ep)$. However, as $B$ increases, the asymptotic and numerical results converge. Furthermore, for larger values of $n$, a higher $B$ is required for the asymptotic result to closely match the numerical result. More results are available in the supplementary material.
\begin{figure}[t]
        \centering
    \begin{minipage}{0.85\textwidth}
        \centering
        \includegraphics[width=0.24\textwidth]{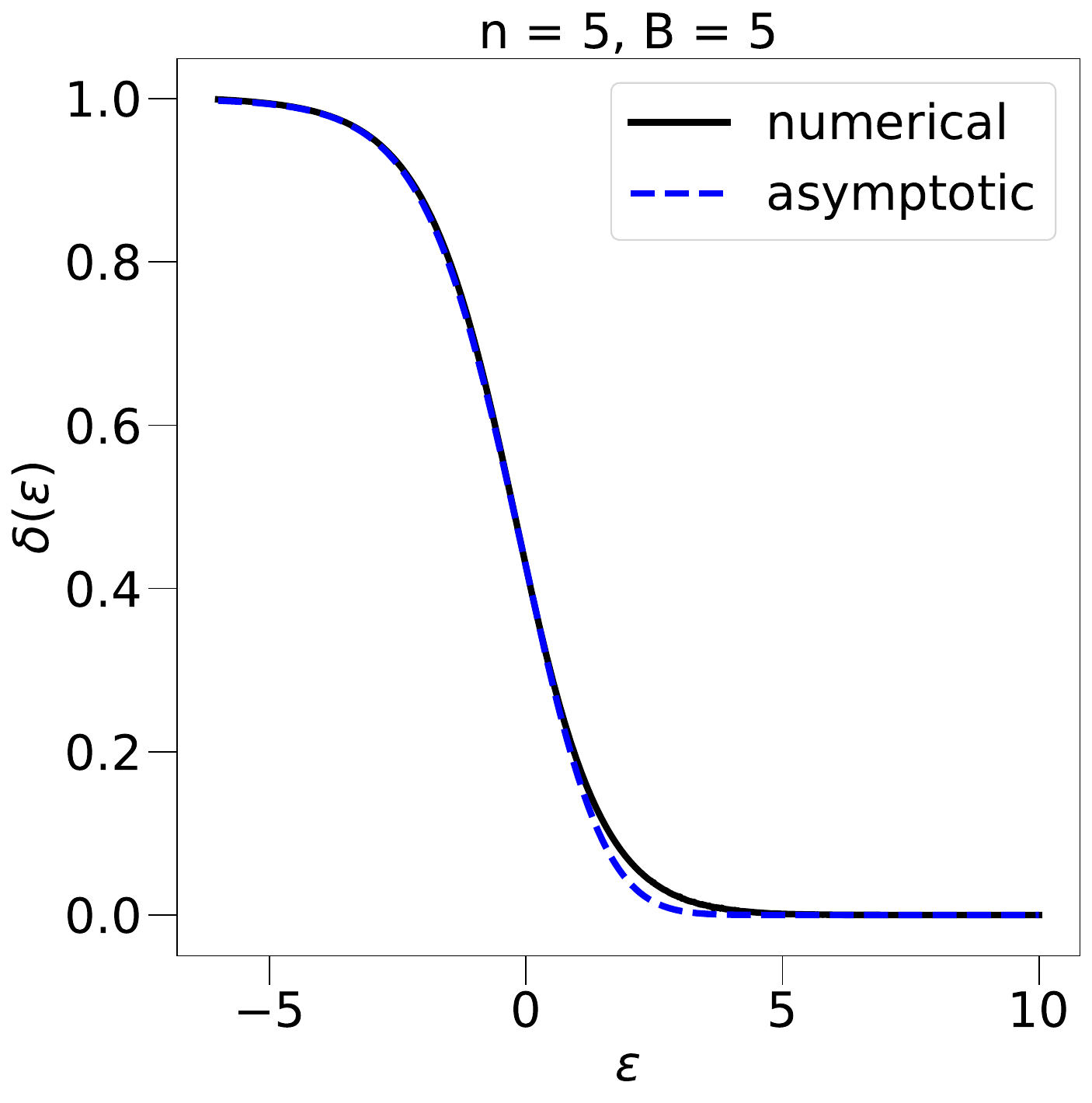}
        \includegraphics[width=0.24\textwidth]{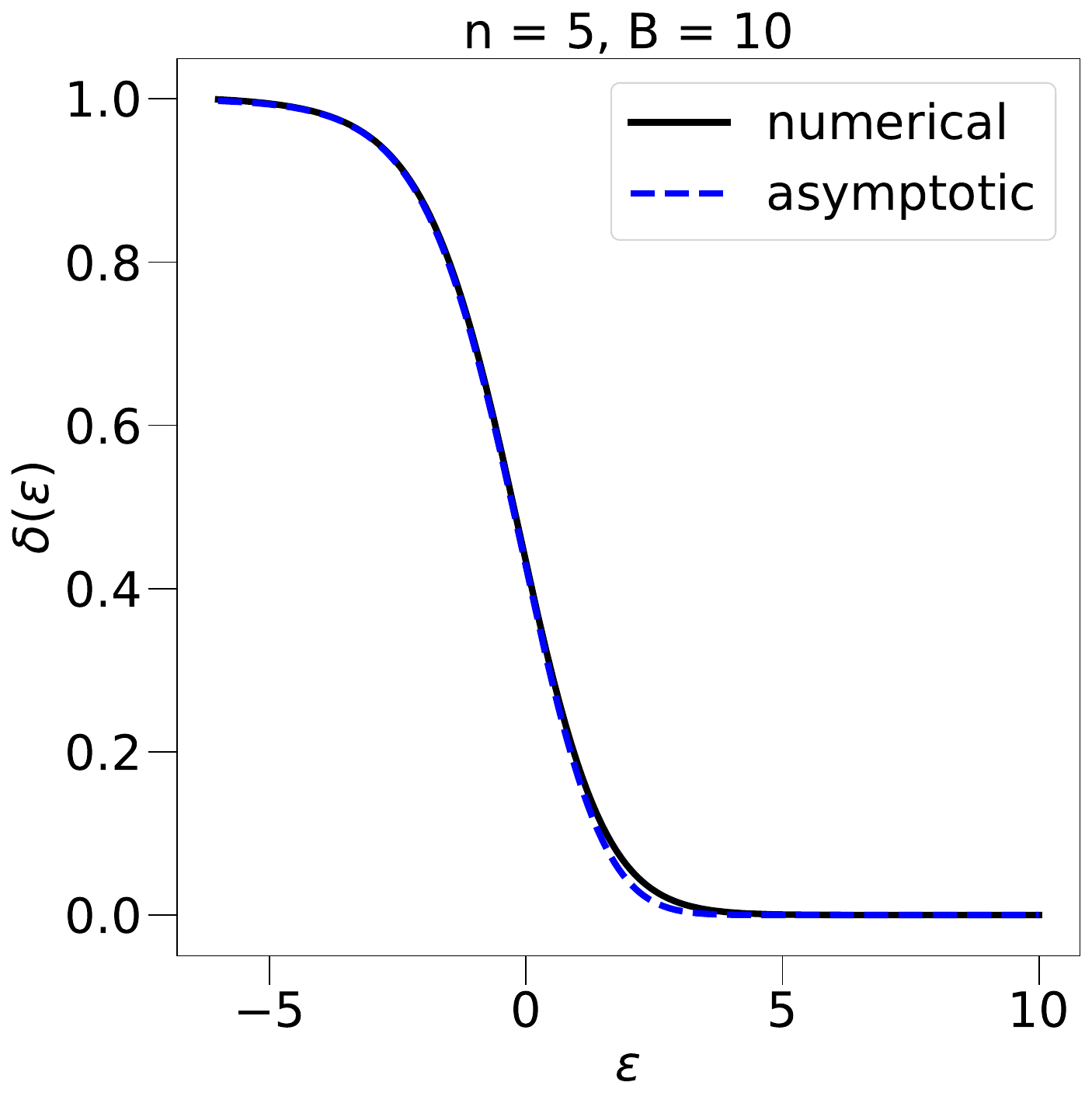}
        \includegraphics[width=0.24\textwidth]{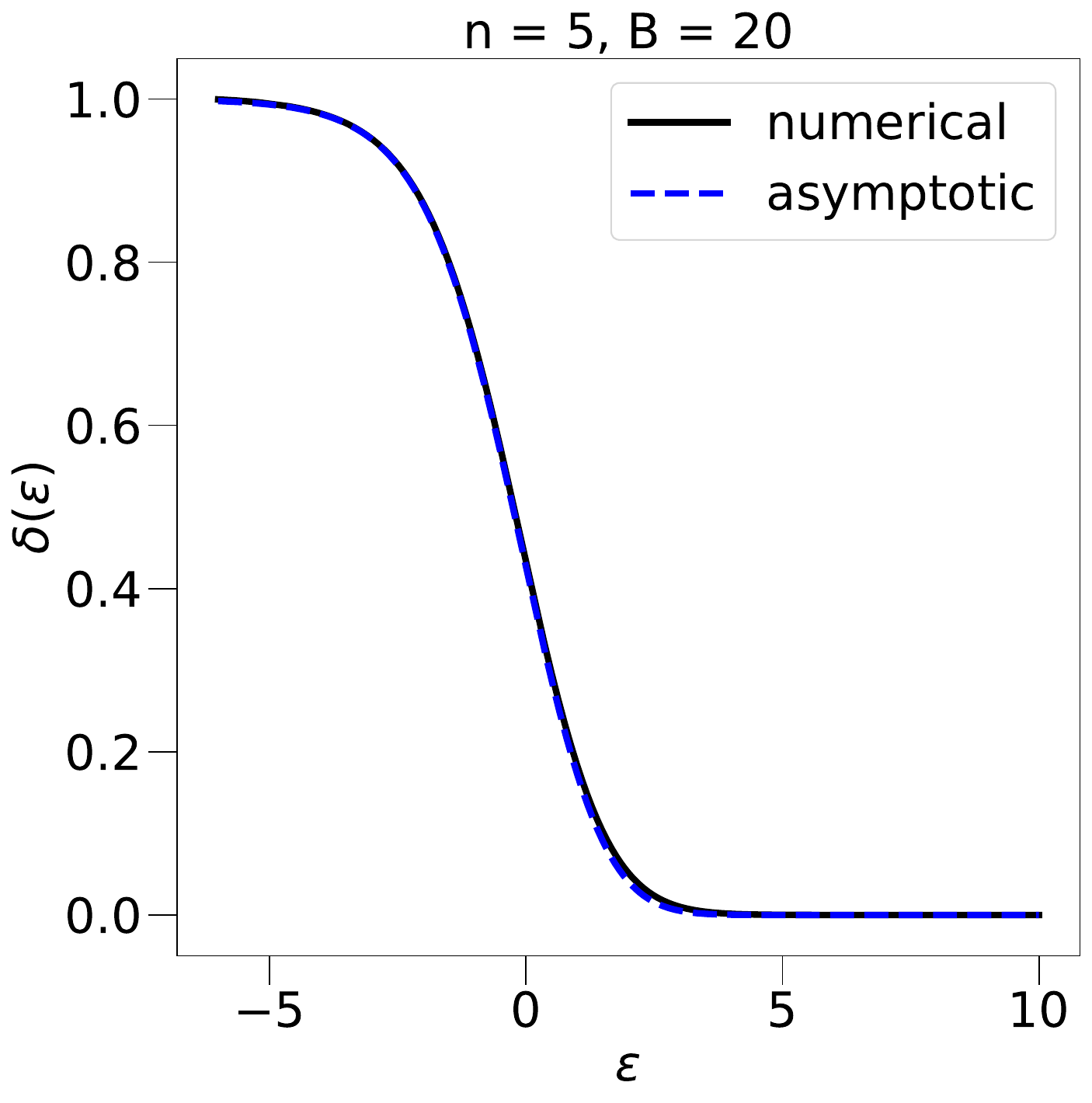}
        \includegraphics[width=0.24\textwidth]{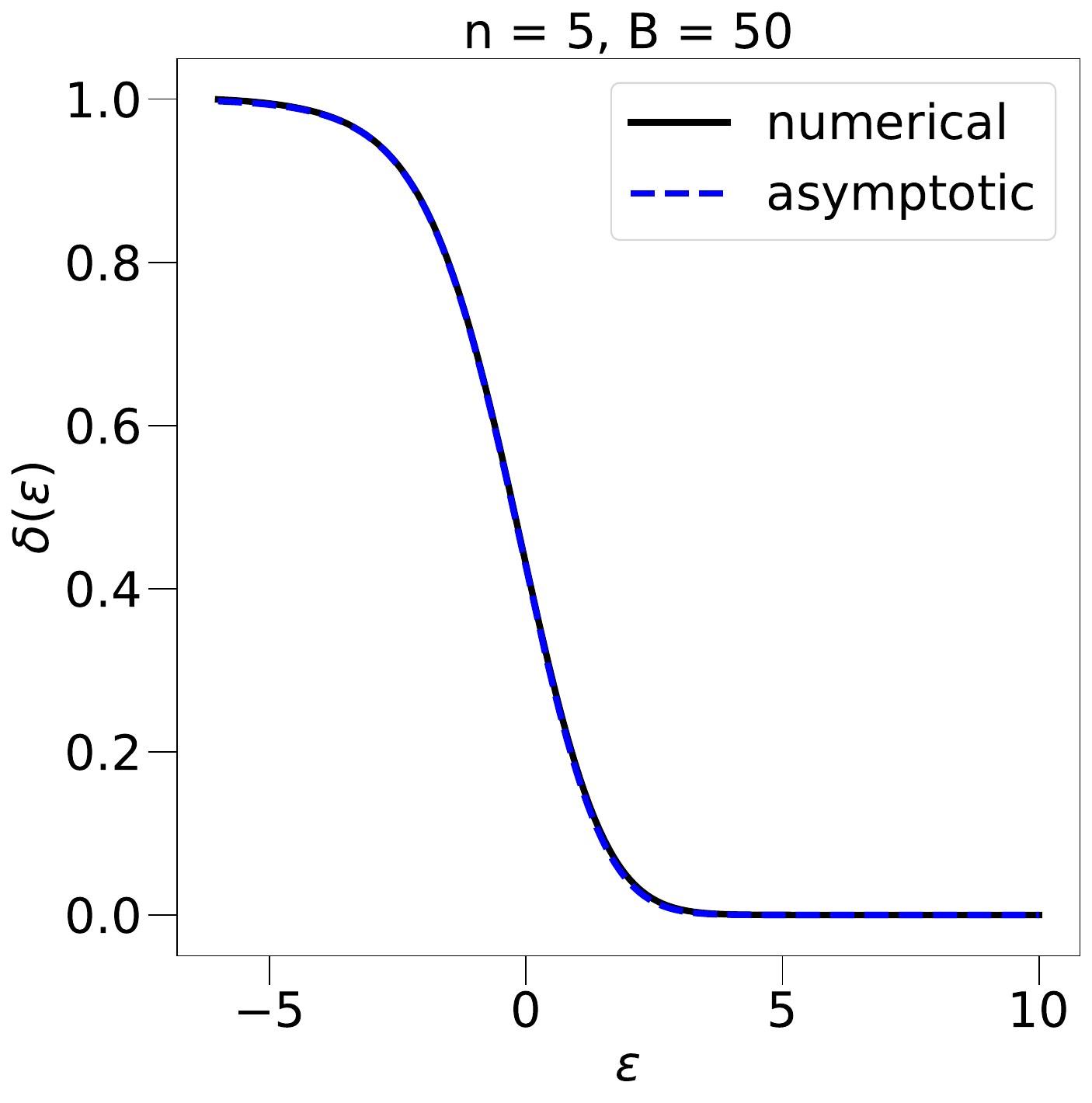}
        \includegraphics[width=0.24\textwidth]{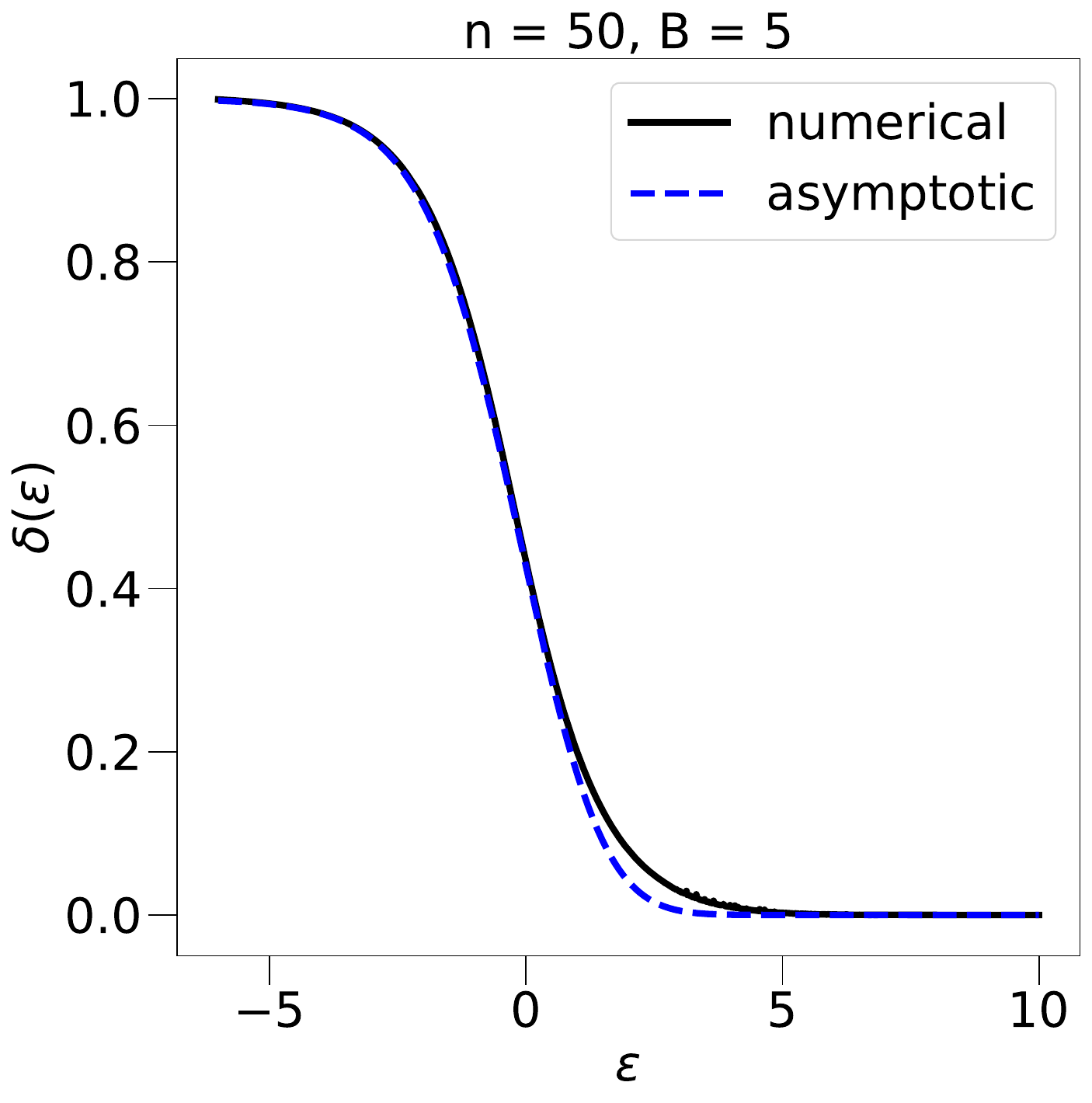}
        \includegraphics[width=0.24\textwidth]{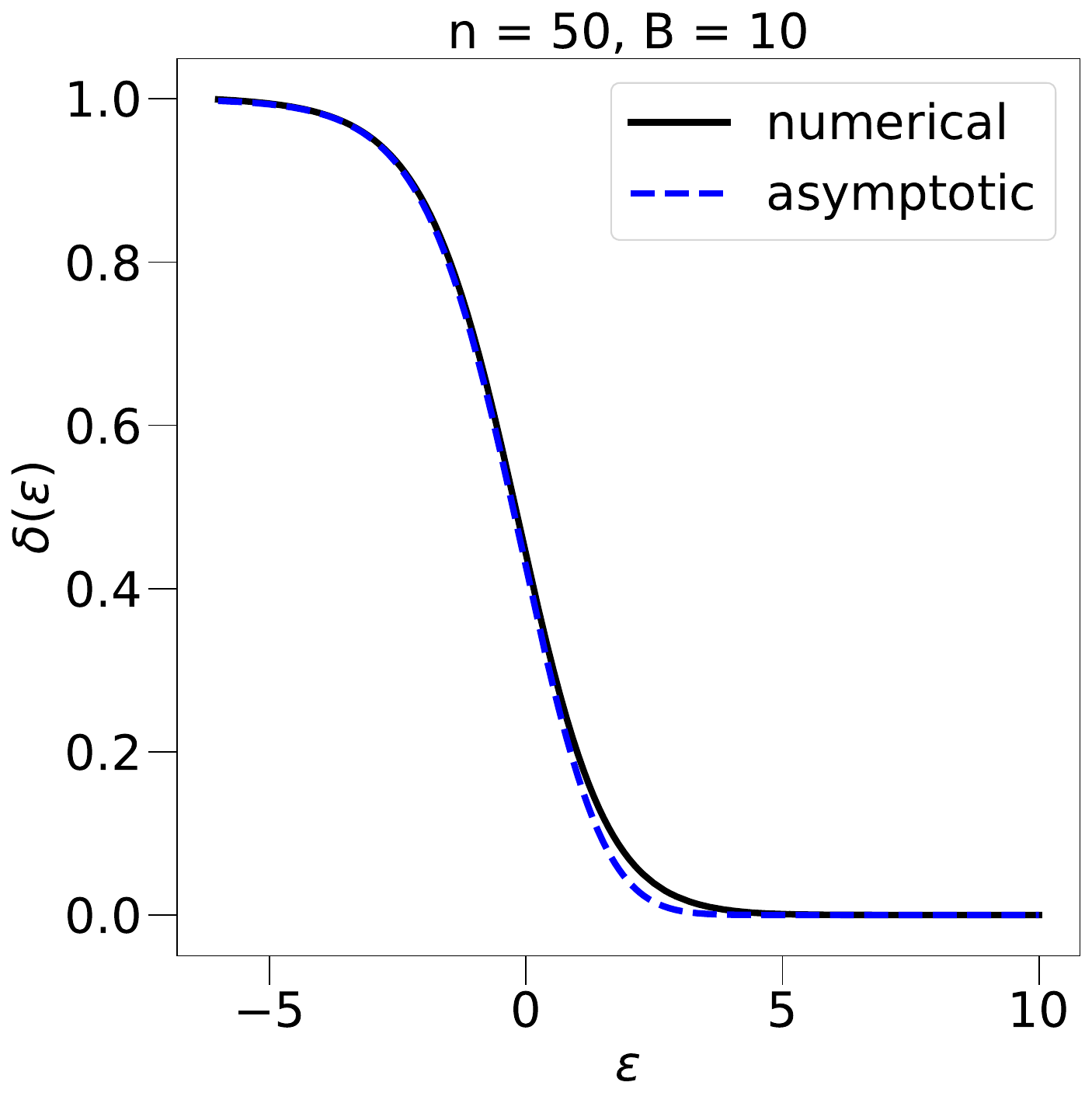}
        \includegraphics[width=0.24\textwidth]{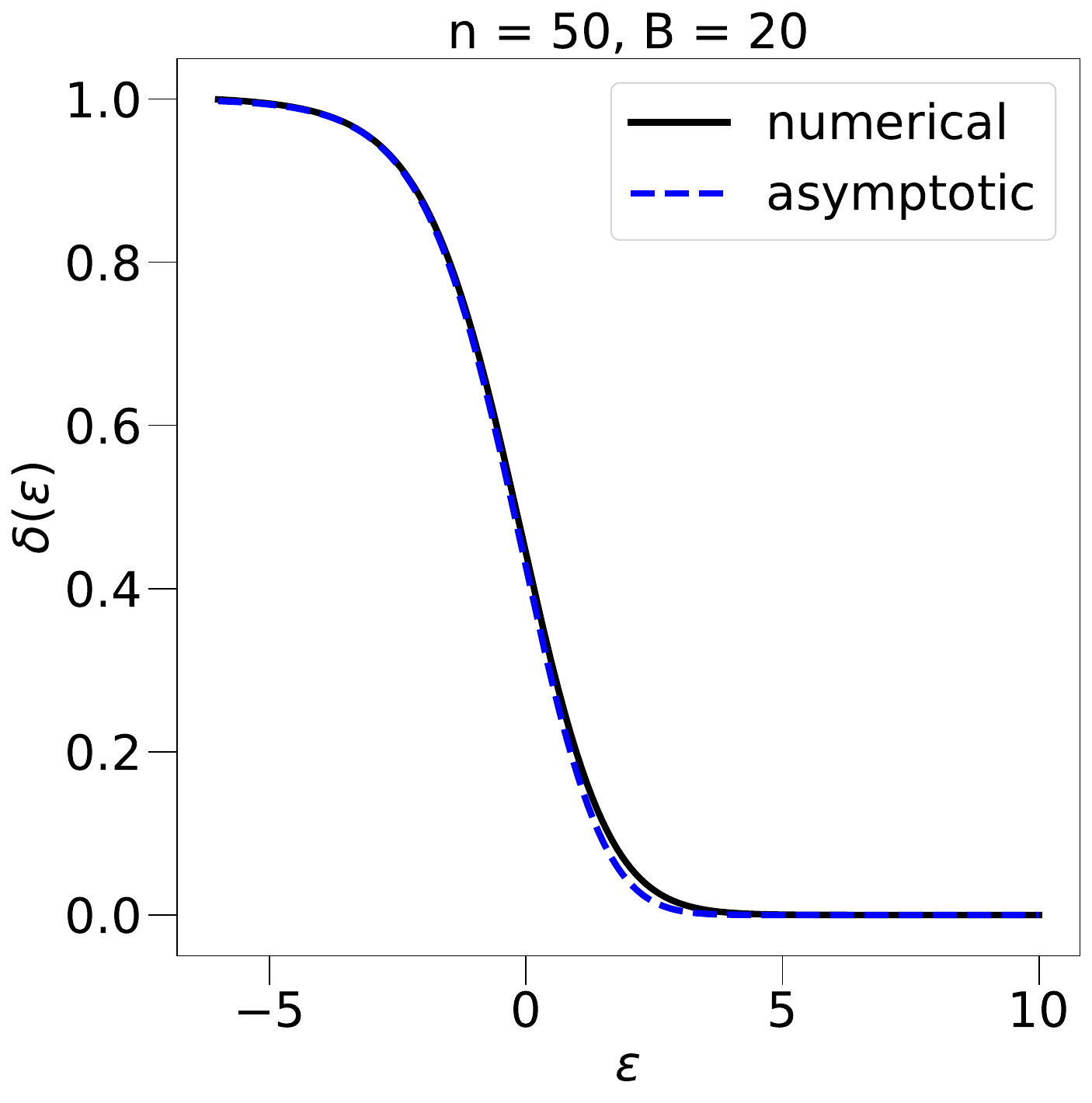}
        \includegraphics[width=0.24\textwidth]{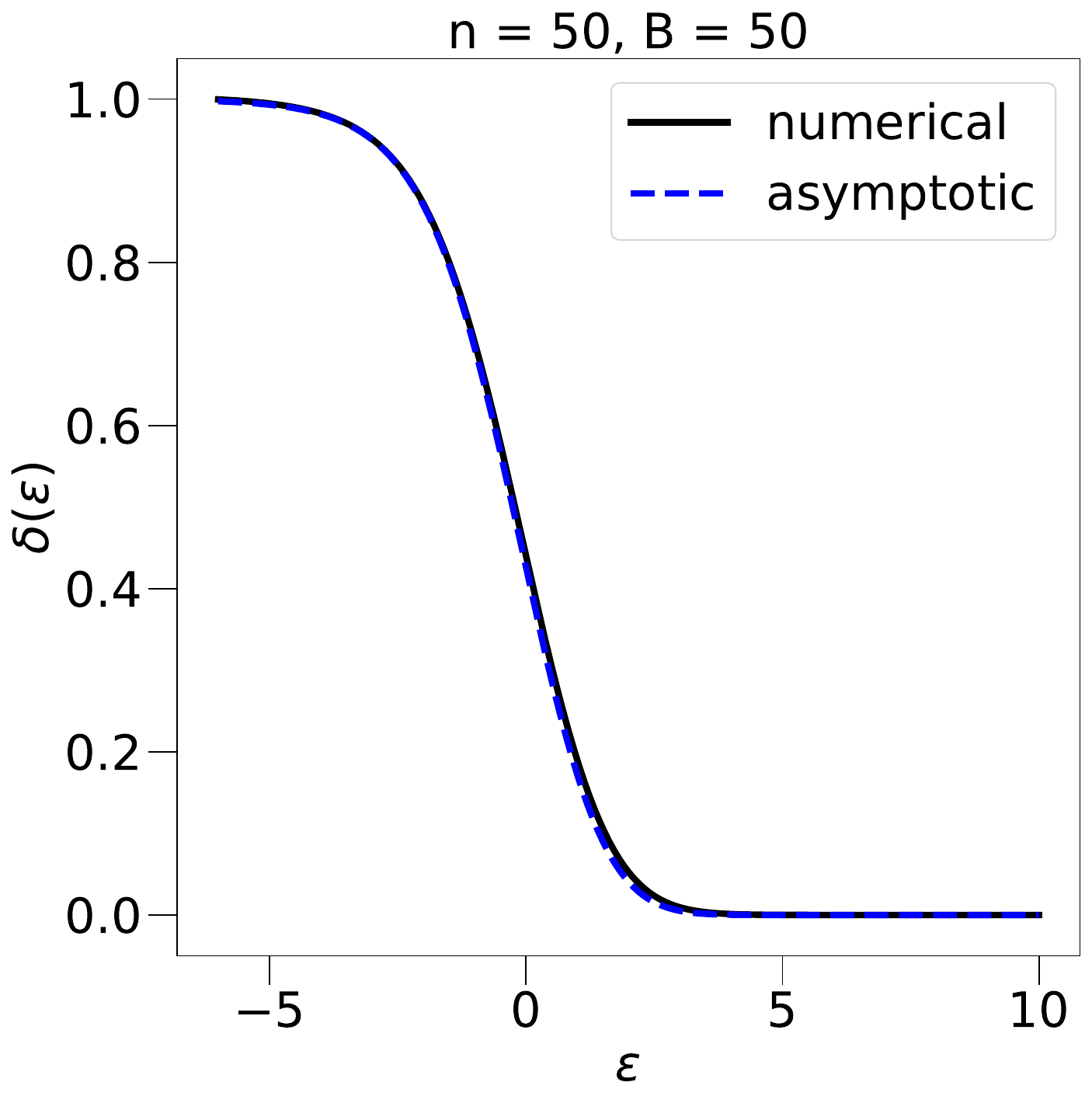}
    \end{minipage}
    \caption{Comparison between the privacy profile $\delta(\ep)$ of composition computed from asymptotics (Theorem \ref{thm:boot_comp}) and numerical evaluation (Proposition \ref{prop:numerical-composition}). The composition is for releasing $B$ DP bootstrap outputs where the original mechanism is $\sqrt{1/B}$-GDP.}\label{fig:numerical_clt}
\end{figure}
}

Note that the asymptotic privacy guarantee of the above composition result is the same as running a $(\sqrt{2-2/\ee})\mu_B$-GDP mechanism on the \textit{original} dataset (not on the bootstrap sample) for $B$ times \citep{dong2021gaussian}.
Therefore, the factor $(\sqrt{2-2/\ee})=1.12438\ldots<1.125$ is the price we pay for using the DP bootstrap (see Figure \ref{fig:mix_gdp1}b). Although there is a small increase in the privacy cost, the bootstrap samples now contain the randomness from sampling as well as from the privacy mechanism. In the next section, we will see how we can use DP bootstrap estimates to perform statistical inference.

\section{Statistical inference with DP bootstrap}\label{sec:inference}
In this section, {we use the insights from the classic methods for statistical inference with bootstrap estimates to develop private statistical inference using DP bootstrap estimates. We analyze the asymptotic distributions of the sample mean and sample variance of the DP bootstrap estimates, and we use them to build asymptotic CIs. 
We prove that the sample mean of DP bootstrap estimates is a consistent estimator with optimal convergence rate, and our asymptotic CIs has coverage guarantee and optimal average width. Although \citet{brawner2018bootstrap} have explored similar ideas, their analysis is less rigorous, without proof, and restricted to the population mean inference problem, while we provide solid proofs under more general settings.}
% Although the asymptotic rates demonstrate the theoretical potential of DP bootstrap, \citet{brawner2018bootstrap} showed that the empirical finite-sample coverage of the CI constructed from these point estimates using a normal approximation is unsatisfactory. 
{To further improve the finite-sample performance of using DP bootstrap for private inference,} we propose another approach using deconvolution to recover the sampling distribution from the DP bootstrap estimates. The deconvolved distribution satisfies the same $f$-DP guarantee because of the post-processing property \citep[Proposition 4]{dong2021gaussian}, and {in our simulation, it is very close to the non-private sampling distribution. In the end of the section, we provide the algorithms of the DP bootstrap estimates, the asymptotic CIs, and the deconvolution method.} 

 Throughout the rest of this paper, we focus on using the Gaussian mechanism in DP bootstrap and its corresponding asymptotic privacy guarantee, $\mu$-GDP. We include a remark when other types of mechanism can be used. Note that the asymptotic privacy guarantee can always be replaced by numerical composition results in Proposition \ref{prop:numerical-composition} if a strict privacy guarantee is needed. \citep{zheng2020sharp, gopi2021numerical, zhu2021optimal}.

\subsection{Statistical inference with Efron's bootstrap}

Before explaining how to conduct private statistical inference with the DP bootstrap, we briefly review two basic inference methods with the original Efron's bootstrap estimates \citep{efron1994introduction}, and we develop our private inference methods based on these classic methods.
We denote the original dataset by $D\in\cX^n$ and the estimator for a population parameter $\theta$ by $g(D)$. The sampling distribution of $g(D)$ can be estimated from bootstrap estimates, $\{g(D_j)\}_{j=1}^B$ where $D_j$ is the $j$th bootstrap sample of $D$. 
{\begin{enumerate}
    \item Standard interval: Estimate the sampling distribution using a normal distribution $\cN(\theta, \hat{s}^2_{g,B})$ where $\hat{s}^2_{g,B} = \frac{1}{B-1} \sum_{j=1}^B (g(D_j)-\hat{m}_{g,B})^2$ and $\hat{m}_{g,B} = \frac{1}{B} \sum_{j=1}^B g(D_j)$. The $(1-\alpha)$-CI of $\theta$ is $\l[g(D)+\Phi^{-1}(\frac{\alpha}{2})\hat{s}_{g,B},~ g(D)+\Phi^{-1}(1-\frac{\alpha}{2})\hat{s}_{g,B}\r]$ where $\Phi(x)$ is the CDF (cumulative distribution function) of a standard normal distribution.
    \item Percentile interval: Estimate the sampling distribution using the empirical CDF of $\{g(D_j)\}_{j=1}^B$ denoted by $\hat{F}_{g,B}$. The $(1-\alpha)$-CI of $\theta$ is $[\hat{F}_{g,B}^{-1}(\frac{\alpha}{2}),~ \hat{F}_{g,B}^{-1}(1-\frac{\alpha}{2})]$.
\end{enumerate}
}

% Using the normal approximation to the sampling distribution, we can obtain the \textit{standard interval} for $\theta$: we first estimate the standard error of $g(D)$ by $\hat{s}^2_{g,B} = \frac{1}{B-1} \sum_{j=1}^B (g(D_j)-\hat{m}_{g,B})^2$ where $\hat{m}_{g,B} = \frac{1}{B} \sum_{j=1}^B g(D_j)$, then approximate the sampling distribution of $g(D)$ by $\cN(\theta, \hat{s}^2_{g,B})$, and finally build CIs for $\theta$ using $g(D)$ and $\hat{s}^2_{g,B}$. Alternatively, if we approximate the sampling distribution directly using the empirical distribution of $\{g(D_j)\}_{j=1}^B$, we can build the \textit{percentile interval} for $\theta$ using the percentiles of $\{g(D_j)\}_{j=1}^B$. 

\subsection{Convergence rate of DP bootstrap point estimates and confidence intervals}\label{sec:population_mean}
{Similar to the standard interval based on the point estimates, $g(D)$ and $\hat{s}_{g,B}$, we denote $\tilde m_{g,B}$ and $\tilde s^2_{g,B}$ as the sample mean and sample variance of DP bootstrap estimates, respectively, and use their asymptotic distributions to build valid private CIs. 

By analyzing the uncertainty from sampling, bootstrap resampling, and our DP mechanism, we prove Lemma \ref{lem:point-estimates} about the mean squared error of $\tilde m_{g,B}$ for the population mean inference, and the asymptotic distributions of $\tilde m_{g,B}$ and $\tilde s^2_{g,B}$ assuming finite support of $D$.

\begin{lemma}[Consistency \& asymptotic distribution]\label{lem:point-estimates}
Let $D=\{x_1,\cdots,x_n\}\in\cX^n$ be a dataset where $x_i\iid F$ are random variables and $\theta$ is a parameter of interest determined by $F$, we let $g(D)$ be an estimator of $\theta$, $D_j$ be the $j$th bootstrap sample of $D$, and $\{\tilde{g}(D_j)\}_{j=1}^B$ be the DP bootstrap estimates using the Gaussian mechanism: $\tilde g(D_j) = g(D_j) + \xi_j$, $\xi_j\iid\cN(0,\sigma_e^2)$. Let $\tilde m_{g,B} = \frac{1}{B} \sum_{j=1}^B \tilde g(D_j)$ and $\tilde s^2_{g,B} = \frac{1}{B-1} \sum_{j=1}^B (\tilde g(D_j)-\tilde m_{g,B})^2$. 
\begin{enumerate}
    \item For the population mean inference where $g(D)=\frac{1}{n}\sum_{i=1}^n x_i$ and $\theta = \mathbb{E}[x_i]$, we have $\mathbb{E}[\tilde m_{g,B} - \theta]^2 = \frac{1}{n}[(1+\frac{1}{B} - \frac{1}{n B})\mathbb{E}(x_i-\theta)^2] + \frac{\sigma_e^2}{B}$. 
    \item Let $\Pi$ denote the set of all distributions with finite support on $\cX$. Assume that $F = F(\eta, s, d)=\sum_{k=1}^d \eta_k \delta(s_k) \in \Pi$ where $s=\{s_1,\cdots,s_d\} \subseteq \cX$ is the support of $F$, $\delta(s_j)$ denotes the point mass at $s_j$ with measure 1, the distribution parameter $\eta=(\eta_1,\cdots,\eta_{d-1}) \in H := \{\eta\in (0,1)^{d-1}\big|\sum_{k=1}^{d-1}\eta_k < 1\}$, and $\eta_d = 1 - \sum_{k=1}^{d-1}\eta_k$. The empirical distribution of $D$ is $\hat F_n=F(\hat\eta, s, d)$ where $\hat\eta_k=\frac{1}{n}\sum_{i=1}^n I(x_i=s_k)$. For a continuously differentiable function $T: H\to\mathbb{R}$ such that $\theta=T(\eta)$ and $g(D)=T(\hat\eta)$, we let $\sigma_g^2=\frac{1}{n}\l(\frac{\partial T}{\partial \eta}\r)^\intercal \Sigma \frac{\partial T}{\partial \eta}$ and $\Sigma = \diag(\eta) - \eta \eta^\intercal$, and we have 
    $$
    \frac{\tilde m_{g,B} - \theta}{\sqrt{\sigma_g^2+\frac{1}{B}(\sigma_g^2 + \sigma_e^2)}} \overset{d}{\to} \cN(0, 1) \quad\mathrm{and}\quad (B-1)\frac{\tilde s^2_{g,B}}{\sigma_g^2 + \sigma_e^2} \overset{d}{\to} \chi^2_{B-1}.
    $$
\end{enumerate}
\end{lemma}
\begin{remark}
    From Theorem \ref{thm:boot_comp}, using $\sigma_e^2=\frac{(\Delta(g))^2 (2-2/e) B}{\mu^2}$ where $\Delta(g)$ is the $\ell_2$-sensitivity of $g$ on $\cX^n$, we have that $(\tilde m_{g,B}, \tilde s^2_{g,B})$ satisfy $\mu$-GDP asymptotically as $B\rightarrow \infty$.
    For bounded $\mathcal{X}$, we have $\Delta(g) = O(\frac{1}{n})$, therefore, part 1 of Lemma \ref{lem:point-estimates} has the rate $\mathbb{E}[\tilde m_{g,B} - \theta]^2 = O\l(\frac{1}{n} + \frac{1}{n^2\mu^2}\r)$ as $\frac{\sigma_e^2}{B}=O(\frac{1}{n^2\mu^2})$ and $(1+\frac{1}{B} - \frac{1}{n B})=O(1)$ for $B\geq 1$. This rate means that $\tilde m_{g,B}$ is a consistent estimate of $\theta$, and it matches the minimax rate $\Omega\l(\frac{1}{n} + \frac{\log(1/\delta)}{n^2\ep^2}\r)$ for the population mean estimation under $(\ep,\delta)$-DP \citep[Theorem 3.1]{cai2021cost} since the Gaussian mechanism for $\mu$-GDP satisfies $(\ep,\delta)$-DP when $\mu^2 \propto \frac{\ep^2}{\log(1/\delta)}$ \citep{dwork2014algorithmic}. 
\end{remark}
    
\begin{remark}
    Part 2 of Lemma \ref{lem:point-estimates} follows the proof by \citet[Theorem 2.2]{beran1997diagnosing}, where they used the finite support assumption to prove that the non-parametric bootstrap distribution converges to the sampling distribution. Although our proof is limited to discrete distributions, this should not cause any problems for real-world datasets, since all measurements are taken with finite precision.
\end{remark}

% From part 2 of Lemma \ref{lem:point-estimates}, we can see that when $\sigma_e^2 \gg \sigma_g^2$, it is difficult to construct an estimate $\hat\sigma$ such that the distribution of $\frac{\tilde m_{g,B} - \theta}{\hat\sigma}$ is close to $\cN(0,1)$, which means that the na\"ive plugin standard CIs may have under-coverage issue.
In the remainder of this section, we build asymptotically valid private CIs for $\theta$ using the asymptotic distributions of $\tilde m_{g,B}$ and $\tilde s^2_{g,B}$ (part 2 of Lemma \ref{lem:point-estimates}) in Theorem \ref{thm:repro}.
The construction of this CI was inspired by the repro sample method \citep{xie2022repro}; for the reader's convenience, we provide a simple and self-contained proof of Theorem \ref{thm:repro} in the appendix, which does not require familiarity with the work of \citet{xie2022repro}.

\begin{theorem}[Asymptotically valid CI]\label{thm:repro}
Assume that $\frac{X_n-\theta}{\sqrt{\sigma_g^2+\frac{1}{B}(\sigma_g^2 + \sigma_e^2)}} \overset{d}{\to} \cN\l(0, 1\r)$ and $\frac{(B-1)Y_n}{\sigma_g^2 + \sigma_e^2} \overset{d}{\to} \chi^2_{B-1}$, respectively, as $n\rightarrow\infty$. Given $\alpha \in [0,1]$, for any $0 < \omega < \alpha$, we have 
$$
\lim_{n\rightarrow\infty}\mathbb{P}\l(\theta \in \l[X_n - \hat{r}_n,~ X_n + \hat{r}_n\r]\r) \geq 1-\alpha, 
$$ 
where $\hat{r}_n=\Phi^{-1}(1-\frac{\omega}{2}) \sqrt{\hat\sigma_{g}^2 + \frac{1}{B}(\hat\sigma_{g}^2 + \sigma_{e}^2)}$, $\hat\sigma_{g}^2 = \max\l(0, \frac{B-1}{c}Y_n  - \sigma_e^2\r)$, and $c$ is the $(\alpha - \omega)$ quantile of the $\chi^2_{B-1}$ distribution. That is, $\l[X_n - \hat{r}_n,~ X_n + \hat{r}_n\r]$ is an asymptotically valid CI for $\theta$ with level $(1-\alpha)$.
\end{theorem}
% \begin{remark}

    When used in finite settings in part 2 of Lemma \ref{lem:point-estimates} where $X_n:=\tilde m_{g,B}$ and $Y_n:=\tilde s^2_{g,B}$, the private CIs in Theorem \ref{thm:repro} are usually conservative. Note that the asymptotic coverage is for $n\rightarrow\infty$ with finite $B$. If we use a plugin estimator for $\sigma_g^2$ instead, i.e., $\acute\sigma_g^2 := Y_n - \sigma_e^2$ based on $\EE[Y_n] = \sigma_g^2 + \sigma_e^2$, the CI will not have enough coverage when $B$ is small since $\acute\sigma_g^2$ can be negative. Theorem \ref{thm:repro} uses an upper bound where $\hat\sigma_g^2$ ensures good coverage of CIs for any fixed $B$. 
    
    The CI width $\hat{r}_n$ is a random variable depending on $\alpha$, $\omega$, $B$, $\sigma_e^2$, and $\sigma_g^2$, where $\alpha$, $B$, and $\sigma_e^2$ are known, $\sigma_g^2$ is unknown, and $\omega$ is tunable while it must be chosen before observing $(X_n, Y_n)$ to ensure the validity of the coverage guarantee.
% \end{remark}

\begin{remark}
\citet{brawner2018bootstrap} built confidence intervals using a result similar to part 2 of Lemma \ref{lem:point-estimates} without providing a concrete proof. In their Equation (34), they estimated $\sigma_g^2$ by $\hat\sigma_g^2=\tilde s_{g,B}^2 - \frac{c_{\alpha'}}{B-1}\sigma_e^2$ where $c_{\alpha'}$ is a tunable parameter, and in Equation (35), they estimated the variance of $\tilde m_{g,B}$ by $\hat\sigma^2 = \hat\sigma_g^2+\frac{\sigma_e^2}{B}$. Then they showed that the coverage of the na\"ive private version of the standard interval $[\tilde m_{g,B} + \Phi^{-1}(\frac{\alpha}{2})\hat\sigma,~ \tilde m_{g,B} + \Phi^{-1}(1-\frac{\alpha}{2})\hat\sigma]$ is lower than the nominal confidence level when $c_{\alpha'}=B-1$, i.e., $\hat\sigma_g^2$ is an unbiased estimator of $\sigma_g^2$ \citep[Figure 7b]{brawner2018bootstrap}.
They noted that using an unbiased estimate could underestimate the uncertainty of $\tilde m_{g,B}$ (the worst case is that $\hat\sigma_g^2$ is negative), causing undercoverage. Therefore, they solved this problem using an ad hoc ``conservative'' (larger) estimate $\hat\sigma_g^2$ of $\sigma_g^2$ with a smaller $c_{\alpha'}$. However, they did not prove any guarantee of coverage for this approach. In contrast, we prove Theorem \ref{thm:repro} on the coverage of our approach. In Section \ref{subsec:priv_nonpriv}, we use simulation to compare our method with theirs.
\end{remark}

Proposition \ref{prop:repro-rate} shows that under certain choices of $B$ and $\omega$, the width of the CI in Theorem \ref{thm:repro} is asymptotically optimal compared to the width derived from a normal approximation with variance computed by the Cram\'er-Rao lower bound.
\begin{proposition}[Optimality]\label{prop:repro-rate}
% For the sample mean setting, $g(D)=\frac{1}{n}\sum_{i=1}^n x_i$, $x_i\iid P$ with support $\mathcal{X}=[0,1]$. We denote $\theta = \mathbb{E}[X]$ and $\sigma_g^2=\frac{1}{n}\Var(X)$ where $X\sim P$, and let . 
Under the assumption of part 2 of Lemma \ref{lem:point-estimates}, we use the CI in Theorem \ref{thm:repro} where $\sigma_e^2 = \frac{(2-2/e)B}{n^2\mu^2}$. When $n\rightarrow \infty$, we assume $B \rightarrow \infty$, $B = o(n^2)$, $\alpha-\omega=o(1)$, and $B^{-\frac{1}{2}} = o(\alpha-\omega)$, then $\frac{\hat{r}_n}{r_{\mathrm{opt}}} \overset{p}{\to} 1$ where $r_{\mathrm{opt}} = \Phi^{-1}(1-\frac{\alpha}{2})\sigma_g$.
\end{proposition}

}

\subsection{Deconvolution for estimating the sampling distribution}

{The method developed in Section \ref{sec:population_mean} is asymptotic and does not offer insight into the whole sampling distribution as the CI depends only on the sample mean and sample variance of the bootstrap estimates. 
% Although the repro sample method is a finite-sample statistical inference method, the assumptions of Theorem \ref{thm:repro} are from the asymptotic results of part 2 of Lemma \ref{lem:point-estimates} based on point estimates and a normal approximation. 
In this section, we use deconvolution to build a non-parametric and non-asymptotic estimate for the sampling distribution of $g(D)$, and we derive CIs based on the estimated sampling distribution, which is similar to percentile intervals.
}

% Instead of using the standard interval, which uses normal approximation and has potentially negative variance issues, {we propose to use deconvolution on DP bootstrap estimates to recover the distribution of the non-private bootstrap estimates and build corresponding percentile intervals as CIs.} Note that although we analyze this deconvolution method on DP bootstrap with Gaussian mechanism, it can be applied to DP bootstrap with any additive noise mechanisms, e.g., Laplace, Truncated-Uniform-Laplace (Tulap)  \citep{awan2020differentially}, canonical noise distributions \citep{awan2023canonical}.

Deconvolution is the main tool to solve the contaminated measurement problem, i.e., we want to estimate the distribution of $\{X_i\}_{i=1}^B$ while our measurement is $\{Y_i\}_{i=1}^B$ where $Y_i = X_i + e_i$ and $e_i$ is the measurement error. This is exactly the relationship between $g(D_j)$ and $\tilde g(D_j)$ since $\tilde g(D_j) = g(D_j) + \xi_j$ and $\xi_j\iid\cN(0,\frac{(\Delta(g))^2 (2-2/\ee)B}{\mu^2})$ in Lemma \ref{lem:point-estimates}.
As DP allows for all details of the privacy mechanism to be publicly revealed (except the private dataset), the distribution of the added noise can be incorporated into our post-processing without raising any privacy concerns.

The deconvolution method is more flexible as it does not require a normal approximation to build CIs compared to the point estimates in Section \ref{sec:population_mean}. It also circumvents the problem caused by the possibly negative estimate of the variance of the sampling distribution. However, it is difficult to analyze the convergence rate of deconvolution in DP bootstrap with respect to $B$ and $n$ because the distribution of added noise $e_i$ flattens when $B$ increases, and the distribution to be recovered is from bootstrap estimates {and varies for different $n$}. 
At the end of this section, we propose a signal-noise-ratio (SNR) measure as a rule of thumb for the choice of $B$ {given $n$ for use in deconvolution}.

We compared multiple numerical deconvolution methods through preliminary simulations with different settings on $n$ and $B$.
Among different deconvolution methods, we choose to use \texttt{deconvolveR} \citep{efron2016empirical} since it performs the best in our settings without tuning its hyper-parameters. We briefly summarize this method as follows. For the model $Y = X + e$, \texttt{deconvolveR} assumes that $Y$ and $X$ are distributed discretely with the sizes of their supports $|\cY|=k$ and $|\cX|=m$. Then it models the distribution of $X$ by $f(\alpha)=\ee^{Q\alpha}/c(\alpha)$ where $Q$ is an $m\times p$ structure matrix with values from the natural spline basis of order $p$, $ns(\cX,p)$, and $\alpha$ is the unknown $p$-dimensional parameter vector; $c(\alpha)$ is the divisor necessary to make $f$ sum to $1$. The estimation of the distribution of $X$ is obtained through the estimation of $\alpha$: We estimate $\alpha$ by maximizing a penalized log-likelihood $m(\alpha)=l(Y;\alpha)-s(\alpha)$ with respect to $\alpha$ {where $s(\alpha)=c_0\|\alpha\|_2$ is the penalty term with a tunable parameter $c_0$ (default 1)}, and $l(Y;\alpha)$ is the log-likelihood function of $Y$ derived from $f(\alpha)$ and the known distribution of $e$.
% (the distribution of $X$ is determined by , and the distribution of $e$ is known).

{The output of deconvolution is an estimate of the sampling distribution of non-private bootstrap estimates $g(D_j)$. Then we construct $(1-\alpha)$-CI by the $\frac{\alpha}{2}$ and $1-\frac{\alpha}{2}$ percentiles of the sampling distribution estimate.

\begin{example}
    Let $n=10000$, $B=1000$, $D=(x_1,\ldots,x_n)$, $x_i\in[0,1]=\cX$, $g(D)=\frac{1}{n}\sum_{i=1}^n x_i$, $\{D_j\}_{j=1}^B$ are $B$ bootstrap samples of $D$, $\tilde g(D_j) = g(D_j) + \xi_j$ where $\xi_j \sim \cN(0,\sigma_e^2=\frac{B}{n^2})$.
    We know that $\{\tilde g(D_j)\}_{j=1}^B$ asymptotically satisfies $(\sqrt{2-2/\ee})$-GDP. 
    In Figure \ref{fig:deconv_demo}, we generate $D$ by $x_i\iid \mathrm{Unif}(0,1)$ for three different replicates, and the `non-private bootstrap' and `private bootstrap' density functions are computed from $\{g(D_j)\}_{j=1}^B$ and $\{\tilde g(D_j)\}_{j=1}^B$ respectively. 
    Since the distribution of `private bootstrap' is much flatter than the `non-private bootstrap' due to the additional noises added for privacy, percentile intervals directly from the `private bootstrap' would be much more conservative (wider). In contrast, the `deconvolved private bootstrap' has a distribution similar to the `non-private bootstrap'. 
    % Therefore, in the rest of this paper, we always use DP bootstrap with deconvolution for building CIs.
\end{example}
}
\begin{figure}[t]
    \centering
    \includegraphics[width=\textwidth]{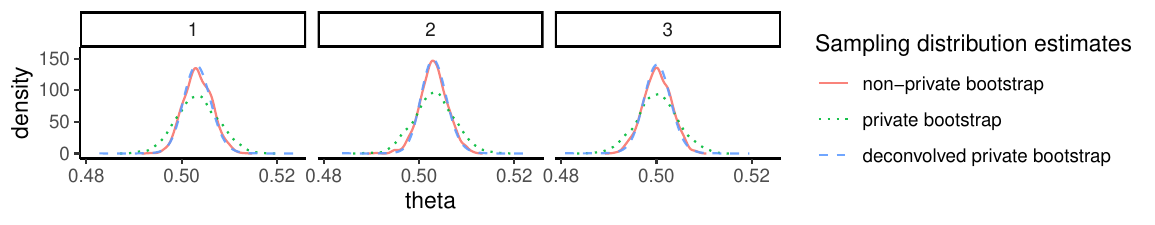}
    \caption{Comparison of the density functions from the non-private bootstrap result, the DP bootstrap result, and the deconvolved DP bootstrap result.}\label{fig:deconv_demo}
\end{figure}

\begin{remark}
Although the deconvolution method is specifically designed for additive noise mechanisms, we can potentially approximate more general mechanisms as additive ones asymptotically, e.g., exponential mechanism \citep{awan2019benefits, reimherr2019kng}. {Note that the covariance in the asymptotic distribution may depend on the confidential dataset and would have to be estimated.}
\end{remark}

\begin{remark}\label{rmk:snr}
{As it is difficult to analyze how to choose $B$ theoretically, we provide a general rule of thumb $B\in \Theta(\mu^2 n)$ and the reason is as follows. We define the oracle signal-noise ratio (SNR) as the ratio between the variance of the sampling distribution and the variance of the noise added for DP:} 1) If we have enough privacy budget, we choose the largest $B$ satisfying $\mathrm{SNR}\geq 1$; 2) If we only have a very limited privacy budget, e.g., $\mathrm{SNR}\leq 1$ for any $B\geq 10$, we choose the smallest $B$ such that $B\geq 2/\alpha$ for $1-\alpha$ CIs. Note that for the case of the population mean inference in Section \ref{sec:population_mean}, the expectation of the variance of non-private bootstrap estimates is $\frac{(n-1)\sigma_x^2}{n^2}$, and for the DP bootstrap, the added noise $\xi_j\sim \cN(0,\frac{(2-2/e)B}{\mu^2 n^2})$; therefore, $\mathrm{SNR}\geq 1$ suggests $B\in \Theta(\mu^2 n)$ which is consistent with our rate analysis $B = o(n^2)$ in Proposition \ref{prop:repro-rate} {for our asymptotic CIs.} 
More empirical results are available in the supplementary material.
\end{remark}

% \begin{remark}
% To the best of our knowledge, we are the first to use deconvolution to recover the non-private sampling distribution from DP estimates and conduct statistical inference.
% {\citet{farokhi2020deconvoluting} applied deconvolution to estimate the distribution of the sensitive data under local DP guarantees, which is different from the DP guarantee discussed in this paper.}
% \end{remark}

{\subsection{DP bootstrap algorithm for the Gaussian Mechanism}\label{sec:append_algorithm}
In this section, we summarize our DP bootstrap method for the Gaussian mechanism in Algorithm \ref{alg:dp_boot}, our method to build asymptotic CIs in Algorithm \ref{alg:dp_repro}, and the deconvolution method to obtain the DP sampling distribution estimate and non-parametric CIs in Algorithm \ref{alg:dp_dist_est}.

Note that our privacy analysis in this paper (for Algorithm \ref{alg:dp_boot}) is valid for $g: \cX^n \rightarrow \RR^d$ for any $d\in\{1,2,3,\ldots\}$, but the \texttt{deconvolveR} used in Algorithm \ref{alg:dp_dist_est} can only be applied when $d=1$. For $d\in\{2,3,\ldots\}$, one may try other deconvolution methods in Algorithm \ref{alg:dp_dist_est} or use our current procedure in each dimension separately. 

\begin{algorithm}[H]
	\caption{\texttt{DP\_bootstrap\_estimates} (with Gaussian mechanism)}\label{alg:dp_boot}
	\begin{algorithmic}[1]
		\STATE \textbf{Input} Dataset $D\in\cX^n$, statistic $g: \cX^n \rightarrow \RR$ with $\ell_2$ sensitivity $\Delta(g)$, privacy guarantee $\mu$-GDP, number of bootstrap samples $B$.\\
% 		\STATE Calculate the $\ell_2$ sensitivity of $g(D)$: $\Delta(g)=\max_{D_1,D_2\in\cX^n, D_1\simeq D_2}\|g(D_1)-g(D_2)\|_2$.
		% \STATE Let $\cM(D)=g(D) + \xi$, $\xi\sim\cN(0, \sigma_e^2), ~\sigma_e=(\sqrt{(2-2/\ee)B}) \Delta(g) / \mu$.
		\FOR {$j=1, \ldots, B$}
    		\STATE Obtain bootstrap sample (i.e., sample with replacement) $D_j\in\cX^n$ from $D$.
                \STATE DP bootstrap statistic: $\tilde{g}_j = g(D_j) + \xi_j$, $\xi_j\iid\cN(0, \sigma_e^2)$, $\sigma_e=(\sqrt{(2-2/\ee)B}) \Delta(g) / \mu$.
		\ENDFOR
		\STATE \textbf{Return} $(\tilde{g}_1,\tilde{g}_2,\ldots,\tilde{g}_B, \sigma_e^2)$ which approximately satisfies $\mu$-GDP.
	\end{algorithmic}
\end{algorithm}

{\begin{algorithm}[H]
	\caption{\texttt{DP\_bootstrap\_Asymptotic\_CI}}\label{alg:dp_repro}
	\begin{algorithmic}[1]
		\STATE \textbf{Input} Dataset $D\in\cX^n$, statistic $g: \cX^n \rightarrow \RR$, privacy guarantee $\mu$-GDP, number of bootstrap samples $B$, confidence level $1-\alpha$.\\
		\STATE Let $(\tilde{g}_1,\tilde{g}_2,\ldots,\tilde{g}_B, \sigma_e^2)=\mathtt{DP\_bootstrap\_estimates}(D, g, \mu, B)$. 
            \STATE Compute the statistics $s_1 = \frac{1}{B} \sum_{j=1}^B \tilde g_j$ and $s_2 = \frac{1}{B-1} \sum_{j=1}^B (\tilde g_j-s_1)^2$.
            \STATE Set $\omega \in (0,\alpha)$ and let $c$ be the $(\alpha - \omega)$ quantile of the $\chi^2_{B-1}$ distribution.
            \STATE Compute $\hat\sigma_{g}^2 = \max\l(0, \frac{B-1}{c}s_{2}  - \sigma_e^2\r)$ and $\hat\sigma_{\mathrm{upper}}^2 = \hat\sigma_{g}^2 + \frac{\hat\sigma_{g}^2 + \sigma_{e}^2}{B}$.
            \STATE Compute the confidence interval radius $r=\Phi^{-1}(1-\frac{\omega}{2}) \hat\sigma_{\mathrm{upper}}$.
		\STATE \textbf{Return} $(s_{1} - r,~ s_{1} + r)$ which satisfies approximately $\mu$-GDP.
	\end{algorithmic}
\end{algorithm}
}

\begin{algorithm}[H]
	\caption{\texttt{DP\_bootstrap\_deconvolution\_sampling\_distribution\_and\_CI}}\label{alg:dp_dist_est}
	\begin{algorithmic}[1]
		\STATE \textbf{Input} Dataset $D\in\cX^n$, statistic $g: \cX^n \rightarrow \RR$, privacy guarantee $\mu$-GDP, number of bootstrap samples $B$.\\
		\STATE Let $(\tilde{g}_1,\tilde{g}_2,\ldots,\tilde{g}_B, \sigma_e^2)=\mathtt{DP\_bootstrap\_estimates}(D, g, \mu, B)$. 
		\STATE DP estimate of sampling distribution of $g(D)$:  $\tilde{f}_g=\texttt{deconvolveR}((\tilde{g}_1,\tilde{g}_2,\ldots,\tilde{g}_B), \sigma_e^2)$.
            \STATE Let $\tilde{F}_g$ be the CDF corresponding to $\tilde{f}_g$.
		\STATE \textbf{Return} $(\tilde{f}_g,~ \tilde{F}_g^{-1}(\frac{\alpha}{2}),~ \tilde{F}_g^{-1}(1-\frac{\alpha}{2}))$ which satisfies approximately $\mu$-GDP.
	\end{algorithmic}
\end{algorithm}
}

\section{Simulations}\label{sec:simulation}
In this section, we use the confidence interval construction as a showcase for statistical inference with our DP bootstrap algorithm. First, we compare the non-private CI from the original bootstrap to the private CI from the recovered sampling distribution by using deconvolution on our DP bootstrap estimates. Then we compare our {deconvolution method with the asymptotic method, the method by \citet{brawner2018bootstrap}, and NoisyVar \citep{du2020differentially}, and we discuss the difference between these methods.} 
% {The algorithms used in this section are in Section \ref{sec:append_algorithm}.}

\subsection{Private CI compared to non-private CI for the population mean}\label{subsec:priv_nonpriv} 
{Consider $D=(x_1,x_2,\ldots,x_n)$ where $x_i\in[0,1]$ and $x_i\iid F_X$. We construct private CIs for $\mathbb{E}[x_i]$ with $D$. Let $g(D)=\frac{1}{n}\sum_{i=1}^n x_i$ be the non-private point estimate of $\mathbb{E}[x_i]$, and $\tilde{g}_B(D) = (\tilde{g}(D_1),\tilde{g}(D_2),\ldots,\tilde{g}(D_B))$ be DP bootstrap estimates where $\tilde{g}(D_j)={g}(D_j)+\xi_j$, $\xi_j\iid \cN(0,\frac{(2-2/\ee) B}{n^2 \mu^2})$ is Gaussian mechanism, and $D_j\iid \mathtt{boot}(D)$ is the $j$th bootstrap sample. From our privacy analysis, $\tilde{g}_B(D)$ asymptotically satisfies $\mu$-GDP. We construct private CIs for $\mathbb{E}[X]$ based on $\tilde{g}_B(D)$. 
% For different privacy guarantees $\mu=1, 0.5, 0.3, 0.1$, we choose $B=2000\mu^2$ correspondingly to have a constant SNR (defined in Remark \ref{rmk:snr}). 
The result is run with 2000 replicates where $x_i = \max(0, \min(1, z_i)), z_i \iid {N}(0.5,1)$ and $n=10000$. 

\begin{figure}[t]
    \centering
    \includegraphics[width=\linewidth]{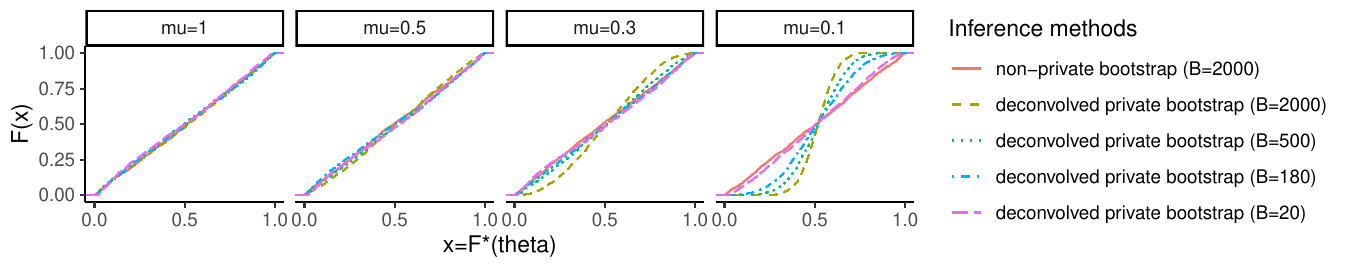}
    \caption{Coverage check of private CI with $\mu$-GDP where $\mu=1,0.5,0.3,0.1$. }\label{fig:exp_1gdp}
\end{figure}

We examine the coverage of CIs with all confidence levels in Figure \ref{fig:exp_1gdp}: Since the CI is built with the quantiles of the recovered distribution, and the coverage is determined by whether the true parameter value is in between the two quantiles, we evaluate the CDF of the recovered sampling distribution, $F^*$, at the true parameter value $\theta$, e.g., $0.05\leq F^*(\theta)\leq 0.95$ is equivalent to the 90\% CI covering $\theta$. Therefore, the coverage {at} different confidence {levels}  can be calculated by $\EE[\mathbbm{1}(p_{\mathrm{lower}}\leq F^*(\theta)\leq p_{\mathrm{upper}})]$, and we want it to be close to the nominal confidence level, $p_{\mathrm{upper}} - p_{\mathrm{lower}}$. This is achieved if the CDF of $u:=F^*(\theta)$ {is} close to the line $F(u)=u,~\forall u\in[0,1]$. In Figure \ref{fig:exp_1gdp}, we can see that our DP bootstrap result aligns with the $F(u)=u$ when $B=2000\mu^2$, similar to the non-private bootstrap, for $\mu=1, 0.5, 0.3, 0.1$, which corresponds to a constant SNR defined in Remark \ref{rmk:snr}.

In this simulation, our choice of $B$ varies from 20 to 2000. {We use the DP bootstrap with} a smaller $B$ under a stronger privacy guarantee because larger $B$ leads to smaller SNR, making deconvolution harder. If the coverage is satisfactory under many choices of $B$, e.g., $B=20,180,500,2000$ when $\mu=1$, the largest $B$ gives the shortest CI since the deconvolution accuracy is determined by $B$. 
Detailed comparisons of the width, coverage, and corresponding SNR for different choices of $B$ are provided in the supplementary material.
}

\subsection{Comparison between DP bootstrap with deconvolution and other methods}
{In this section, we compare our DP bootstrap with deconvolution to other methods, i.e., non-private bootstrap, DP bootstrap with asymptotic CI, the method by \citet{brawner2018bootstrap}, and NoisyVar \citep{du2020differentially}, under the settings we used in Section \ref{subsec:priv_nonpriv}.
}

In the non-private setting, the parametric CI for the population mean $\theta$ can be built with the $t$-statistic $t=\frac{\bar{X} - \theta}{\sqrt{s_X^2/n}}$ where $\bar{X}$ and $s_X^2$ are the sample mean and variance. Similarly, to construct a DP CI, one can obtain a DP $t$-statistic by replacing $\bar{X}$ and $s_X^2$ with their corresponding DP statistics, but this DP $t$-statistic may not follow the $t$-distribution {(or an approximate normal distribution)} because of the added noise for privacy. {To tackle this issue, our asymptotic method (Theorem \ref{thm:repro}) and the method by \citet{brawner2018bootstrap} essentially use conservative, over-estimates of $s_X^2$, which result in the over-coverage and larger width of the correpsonding CIs.} \citet{du2020differentially} adopted the idea of parametric bootstrap to construct the CI for the population mean: {They plugged in the DP sample mean and the DP sample variance to generate normally distributed samples and compute the corresponding DP sample means to estimate the sampling distribution of the DP sample mean.} We include NoisyVar in the appendix (Algorithm \ref{alg:noisyvar}) for easier reference.   

\begin{figure}[t]
        \centering
        \includegraphics[width=0.75\textwidth]{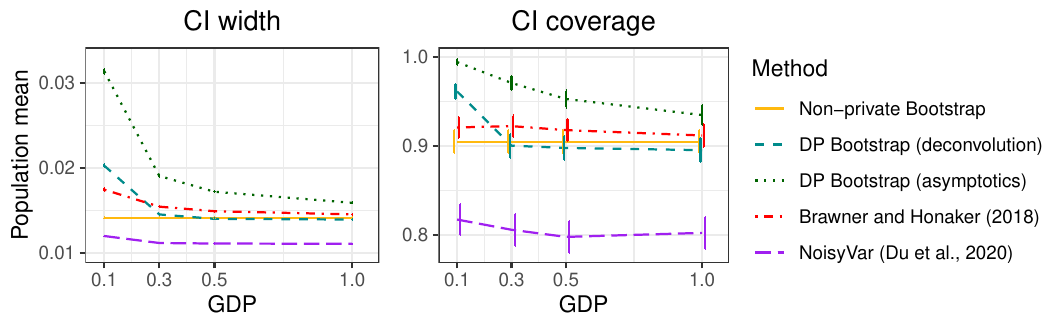}
        \caption{Coverage and width of CIs for the population mean with different privacy guarantees.  The confidence level is 90\%.}\label{tab:exp_1gdp}
\end{figure}

{In Figure \ref{tab:exp_1gdp}, the coverage and width of the 90\% CIs from the deconvolution method based on DP bootstrap are similar to the non-private bootstrap, except that under the strongest privacy guarantee $\mu=0.1$, DP bootstrap CI has over-coverage and larger width. In comparison, the asymptotic method and the method by \citet{brawner2018bootstrap} always have over-coverage and larger width even when $\mu$ is large, while NoisyVar has under-coverage and less width for all $\mu$.} 
The performance of NoisyVar is not as satisfactory as in \citep{du2020differentially} in terms of the coverage, because the distribution of our data, the clamped normal random variables, is not in the normal distribution family used by NoisyVar. Our comparison highlights the importance of non-parametric inference: The bootstrap CI does not assume the family of the sampling distribution or data distribution; therefore, our deconvolution results do not suffer from the coverage issue. 
Note that NoisyVar is also limited to building DP CIs for the population mean, while our DP bootstrap can be used on any DP statistic with additive noise mechanisms, {which we demonstrate in the real-world experiments below.}
% As a non-parametric method, the . We use deconvolution to correct the distribution of DP bootstrap estimates to the true sample distribution. 
% When the privacy guarantee is stronger, our results are more conservative, and the coverage is higher than the nominal coverage level in our simulation.

\section{Real-world experiments}\label{sec:realworld}
In this section, we conduct experiments with the 2016 Census Public Use Microdata Files (PUMF), which provide data on the characteristics of the Canadian population \citep{AB2-GDJRT8-2019}. We analyze the dependence between market income and shelter cost in Ontario by the inference of logistic regression and quantile regression under DP guarantees. We use DP bootstrap with output perturbation mechanism \citep{chaudhuri2011differentially} and compare our {asymptotic and deconvolution} method with \citet[Algorithm 5]{wang2019differentially}, Differentially Private Confidence Intervals for Empirical Risk Minimization, which we abbreviate as DP-CI-ERM\footnote{For the Line 5 in \citep[Algorithm 5]{wang2019differentially}, we replace $c$ with $0$ since in their analysis, the eigenvalues of the Hessian matrix is no less than $2c>0$ while the eigenvalues of the covariance matrix only need to be non-negative. This modification greatly improves the performance of DP-CI-ERM when $n$ is large, as overestimating the covariance matrix leads to over-coverage and wider CIs.}. Our main results are shown in Figure \ref{fig:realworld_experiment}, and more detailed comparisons are available in the supplementary material.

\subsection{Experiment settings}
The PUMF dataset contains 930,421 records of individuals, representing 2.7\% of the Canadian population. Among the 123 variables in this dataset, we choose three variables: the province or territory of current residence (named PR), the market income (named MRKINC), and the shelter cost (named SHELCO). We extract the records of MRKINC and SHELCO belonging to the people in Ontario (according to the values in PR). {After removing the records with unavailable values, the sample size is 217,360. We define the extracted data as the original dataset and analyze the relationship between MRKINC and SHELCO.

For the exploratory data analysis, we show the non-private empirical joint distribution between MRKINC and SHELCO in Figure \ref{fig:realworld_experiment}a, and we assume that their maximum values are prior information since the dataset is top-coded, as shown in Figure \ref{fig:realworld_experiment}a. We preprocess the data by scaling MRKINC and SHELCO so their ranges are $[0,1]$. 

To evaluate the performance of different statistical inference methods, we calculate the coverage and width of the CIs from 2000 simulations for each setting where the input datasets are sampled from the original dataset with replacement with size $n=1000$, 3000, 10000, 30000, 100000. We also calculate the probability that the CI of the slope parameter covers 0, which indicates that there is not enough evidence to reject the independence between MRKINC and SHELCO. The privacy guarantee is set to be $1$-GDP, the confidence level is 90\%, and we use $B=100$ for bootstrap and DP bootstrap.}

\begin{figure}[t]
        \centering
        \includegraphics[width=\textwidth]{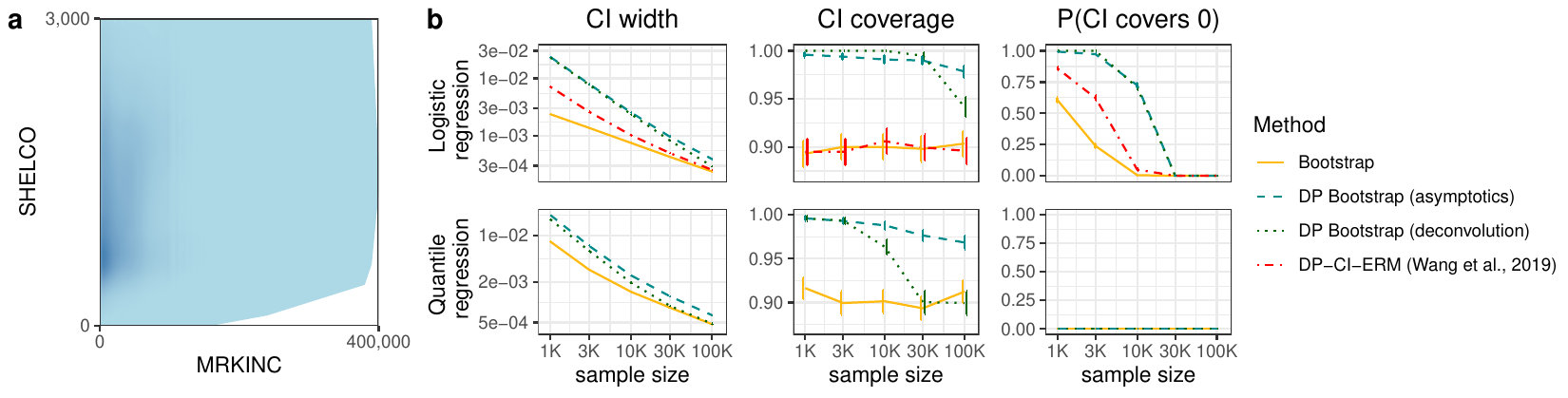}
        \caption{(a) Joint distribution between MRKINC and SHELCO in Ontario, Canada. The polygon region is the convex hull of all data points. (b) Results of 90\% CIs for the slope parameters in logistic regression and quantile regression between MRKINC and SHELCO. Note that DP-CI-ERM cannot be used in the inference of quantile regression.}\label{fig:realworld_experiment}
\end{figure}

{
\subsection{Logistic regression}\label{sec:logistic}
We set the response $y_i=1$ if $\mathrm{SHELCO}\geq 0.5$, otherwise $y_i=-1$. In logistic regression, the true model is $P(Y|X)=\frac{1}{1+\exp(-\theta^\intercal X\cdot Y)}$, and the empirical risk minimizer (ERM), also the maximum likelihood estimate of $\theta$, is $\hat\theta = \mathrm{argmin}_\theta R(\theta)$ where $R(\theta):=\frac{1}{n}\sum_{i=1}^n -\log(P(y_i|x_i))$. 

To obtain DP estimates, we implement the output perturbation mechanism following \citep[Algorithm 5]{wang2019differentially}, which replaces $R(\theta)$ by a regularized empirical risk $R(\theta) + c\|\theta\|_2^2$ and adds noise to the output: $\tilde\theta=\hat\theta + \xi$ where $\hat\theta=\mathrm{argmin}_\theta (R(\theta) + c\|\theta\|_2^2)$. As the sensitivity of the regularized ERM is $\Delta(\hat\theta) = \frac{1}{nc}$ \citep{wang2019differentially}, we use Gaussian mechanism, $\xi\sim\cN(0,\frac{1}{(\mu n c)^2 })$, then $\tilde\theta$ satisfies $\mu$-GDP; to satisfy the constraint, $\|x_i\|_2^2\leq 1$, in the sensitivity analysis, we let the covariate be $x_i=(1/\sqrt{2},\mathrm{MRKINC}/\sqrt{2})$. Following \citep{wang2019differentially}, we define the true parameter $\theta=(\theta_1,\theta_2)\in\mathbb{R}^2$ as the regularized ERM estimated with the original dataset under the same $c$. We build CI for the slope parameter $\theta_2$: If the $90\%$ CI does not cover 0, we are confident that MRKINC is related to SHELCO.

The results are shown in the upper figures of Figure \ref{fig:realworld_experiment}b where $c=1$. The CIs by DP bootstrap {from both deconvolution and asymptotic methods} are wider than the ones by DP-CI-ERM when the sample size $n$ is small: As a non-parametric method, DP bootstrap {is applicable to more general settings but} does not fully utilize the structure of the private ERM as opposed to DP-CI-ERM, so its CIs are often not as tight as the ones by DP-CI-ERM. Both types of private CIs are as wide as non-private bootstrap CI when $n=100000$, do not suffer from the under-coverage issue, and have $P(\mathrm{CI~covers~0})\approx 0$ when $n\geq 30000$.

\subsection{Quantile regression}
We use quantile regression as an example to demonstrate the advantage of DP bootstrap as a non-parametric method {being applicable to general settings where DP-CI-ERM is not}. We set the response $y_i=\mathrm{SHELCO}$ and the covariate $x_i=(1,\mathrm{MRKINC})$. Following \citep{reimherr2019kng}, we assume that the conditional quantile function of $Y$ given $X$ is $Q_{Y|X}(\tau)=X^\intercal \theta_\tau$, we estimate $\theta_\tau$ also by ERM with the objective function $R(\theta)=\frac{1}{n}\sum_{i=1}^n l(z_i)$ where $l(z_i) = (\tau-\mathbbm{1}(z_i\leq 0)) z_i$ and $z_i=y_i-x_i^\intercal \theta$. Similar to our experiment with logistic regression, we define the true parameter $\theta=(\theta_1,\theta_2)\in\mathbb{R}^2$ as the regularized ERM estimated with the original dataset under the same {regularization parameter} $c$. By \citep[Lemma 7]{chaudhuri2011differentially}, the sensitivity of the regularized ERM, $\hat\theta = \mathrm{argmin}_\theta \{R(\theta) + c\|\theta\|_2^2\}$, is bounded by $\Delta(\hat\theta)=\frac{\max\{2\tau, 2(1-\tau), \sqrt{2}\}}{2nc}$; the derivation is in Appendix \ref{app:quantile_sensitivity}. Then $\tilde\theta=\hat\theta + \xi$ satisfies $\mu$-GDP when $\xi\sim \mathcal{N}(0, \frac{\Delta(\hat\theta)^2}{\mu^2})$ (Gaussian mechanism.) 

To the best of our knowledge, this is the first result of building private CIs for the coefficients of quantile regression. As DP-CI-ERM uses Taylor expansion of the gradient of $R(\theta)$ to characterize the difference between the estimate $\hat\theta$ and the true parameter $\theta^*$, i.e., $\theta^* - \hat\theta \approx H[R(\hat\theta)]^{-1}(\nabla R(\theta^*)-\nabla R(\hat\theta))$, it is not usable in quantile regression as the Hessian of $R(\theta)$ is always $0$ when it exists. The results for DP bootstrap and non-private bootstrap are shown in the lower figures of Figure \ref{fig:realworld_experiment}b where we set $c=1$ and $\tau=0.5$. Similar to our experiment in logistic regression, DP bootstrap never suffers from the under-coverage issue. {The deconvolution CIs from DP bootstrap} perform similarly to non-private bootstrap CIs when $n\geq 30000$. We can see that $\mathbb{P}(\mathrm{CI~covers~0})\approx 0$ when $n\geq 1000$; therefore, we are confident that MRCINC and the median of corresponding SHELCO are not independent with 90\% confidence.

}
\section{Conclusion}\label{sec:conclusion}

Our analysis of the DP bootstrap provides a new perspective on resampling in DP by considering the output distribution as a mixture distribution which gives a tractable lower bound for the DP bootstrap in $f$-DP. Furthermore, our composition result for the DP bootstrap using the Gaussian mechanism gives a simple asymptotic privacy guarantee {confirmed by our numerical evaluation results of the exact cumulative privacy cost,} and it highlights the minimal cost of the DP bootstrap.
{For DP statistical inference, we show that the sample mean of the DP bootstrap estimates is a consistent point estimator, and we propose an asymptotic method and a deconvolution method to build CIs. We prove an asymptotic coverage guarantee of the asymptotic CIs, and show that their average width enjoys the optimal convergence rates. To improve the finite sample performance, we use deconvolution to construct CIs, and to the best of our knowledge, we are the first to use deconvolution\footnote{\citet{farokhi2020deconvoluting} applied deconvolution to estimate the distribution of the sensitive data under local DP guarantees, which is different from the DP guarantee discussed in this paper.} to recover the non-private sampling distribution from DP estimates and conduct statistical inference.}
% We are the first to use deconvolution to recover the non-private sampling distribution from DP bootstrap estimates for statistical inference. 
Our simulations and experiments show that the CIs generated by the deconvolved distribution achieve the nominal coverage, and our results are not only comparable to existing methods like NoisyVar and DP-CI-ERM but also applicable to the inference problems such as quantile regression where existing methods cannot be used.

One direction of future work is on improving the privacy analysis: The lower bound in Theorem \ref{thm:fdp_single_boot} could be tightened by considering all $\alpha_i$ jointly rather than individually. 

For statistical inference using the DP bootstrap, the choice of the number of bootstrap samples $B$ can potentially be further optimized for a given sample size $n$ and privacy parameter $\mu$. One may pursue finite sample utility results for the deconvolved distribution or the CIs obtained by the DP bootstrap deconvolution procedure.  Furthermore, our statistical results are limited to the Gaussian mechanism, which may not be an ideal mechanism, due to its additive nature and the need to calculate $\ell_2$-sensitivity. While our statistical approaches should be easily extended to allow for other additive noise mechanisms, non-additive noise mechanisms may require new inference techniques. We also notice in Figure \ref{tab:exp_1gdp} and Figure \ref{fig:realworld_experiment}b that with a strong privacy guarantee or small $n$, our CI may still be wider and have higher coverage than the non-private CI or the private CI from other methods, indicating that the deconvolution procedure can be further optimized to get a tighter estimate and improve the width of the CIs while maintaining the nominal coverage. Apart from the standard interval and the percentile interval, there are other inference methods based on the bootstrap estimates such as $\mathrm{BC}_\mathrm{a}$ (bias-corrected and accelerated) \citep{efron1987better} or ABC (approximate bootstrap confidence) \citep{diciccio1992more}, we leave it to future work to investigate the private versions of these methods. 

We can also use the DP bootstrap framework for high-dimensional inference. Our privacy analysis applies to any mechanism, including those multivariate mechanisms, e.g., the Gaussian mechanism on each dimension of the estimate. Future work in this direction is to apply existing methods of multivariate deconvolution \citep{youndje2008optimal, hazelton2009nonparametric, sarkar2018bayesian} and bootstrap \citep{hall1987bootstrap, van2005multivariatr} to our DP bootstrap framework.

\acks{This work was supported in part by the National Science Foundation [NSF grants no. SES-2150615, no. DMS-2134209, and no. CNS-2247795], the Office of Naval Research [ONR award no. N00014-22-1-2680], and Optum AI and CISCO research grants.  The authors would like to thank Dr. Chendi Wang for the discussion about the relationship between the advanced joint convexity property of $f$-DP and our Theorem \ref{thm:mixlow} and \ref{thm:fdp_single_boot}.}

\newpage
\appendix

\section{Proofs for Section \ref{sec:privacy_analysis}}\label{sec:append_proof}
In this section, we provide the proofs for the theorems and propositions in Section \ref{sec:privacy_analysis}.

\subsection{Proofs for Section \ref{sec:f_DP}}

We first restate the $(\ep,\delta)$-DP results in \citep{balle2018privacy} and provide some useful results for the proof of Proposition \ref{prop:epde_f_boot}.

% \balle*

\begin{restatable}[Theorem 10 in \citep{balle2018privacy}]{theorem}{balle}\label{thm:balle}
    Given $\ep\geq 0$, assume $\cM$ satisfies $(\ep,\delta_{\cM, i}(\ep))$-DP with group size $i$. Let $p_i={n\choose i}(\frac{1}{n})^i(1-\frac{1}{n})^{n-i}$ and $\ep'=\log(1+(1-p_0)(\ee^\ep-1))$, then $\cM \circ \mathtt{boot}$ satisfies $(\ep',\delta_{\cM \circ \mathtt{boot}}(\ep'))$-DP where $\delta_{\cM \circ \mathtt{boot}}(\ep')= \sum_{i=1}^n p_i\delta_{\cM,i}(\ep)$.
\end{restatable}

\begin{definition} 
	Let $f$ be a tradeoff function, $\bar{x} = \inf\{x\in[0,1]:-1\in\partial f(x)\}$. The symmetrization operator 
	{which maps a possibly asymmetric tradeoff function to a symmetric tradeoff function is} defined as
	\[\mathrm{Symm}(f):= \left\{
	\begin{array}{ll}
	\min\{f,f^{-1}\}^{**}, &~\text{if}~\,\, \bar{x}\leq f(\bar{x}),\\
	\max\{f,f^{-1}\}, &~\text{if}~\,\, \bar{x}>f(\bar{x}).
	\end{array}
	\right.\]
\end{definition}
\begin{proposition}[Proposition E.1 in \citealp{dong2021gaussian}]\label{prop:symm}
	Let $f$ be a tradeoff function. Suppose a mechanism is $(\ep,1+f^*(-e^{\ep}))$-DP for all $\ep\geq 0$, then it is $\mathrm{Symm}(f)$-DP.
\end{proposition}

% {Now we are ready to prove Proposition \ref{prop:epde_f_boot}.}
% \propepdefboot*

\begin{proof}[Proof of Proposition \ref{prop:epde_f_boot}]
% This proposition is a transformation from the $(\ep,\delta)$-DP result in \citep{balle2018privacy} to $f$-DP. We use the primal-dual mapping between the $(\ep,\delta)$-DP and $f$-DP \citep{dong2021gaussian}: $\delta(\ep)=1+f^*(-\ee^\ep)$.
We use $f$-DP to restate the $\delta_{\cM \circ \mathtt{boot}}(\ep')$ in Theorem \ref{thm:balle}:
Let $p=1-p_0$, $\delta_{\cM \circ \mathtt{boot}}(\ep') \leq \sum\limits_{i=1}^n p_i\delta_{\cM,i}(\ep)= \sum\limits_{i=1}^n p_i(1+f_{\cM,i}^*(-\ee^\ep)) 
    =p\l(1+\sum\limits_{i=1}^n \frac{p_i}{1-p_0} f_{\cM,i}^*(-\ee^\ep)\r)$ where
    $\cM$ satisfies $f_{\cM,i}$-DP with group size $i$ and $f_{\cM,i}$ is symmetric.

From the Supplement to \citep{dong2021gaussian} (Page 42), we know that $\ep'=\log(1-p + p\ee^\ep)$ and $\delta'=p\big(1+f^*(-\ee^{\ep})\big)$ can be re-parameterized into $\delta'=1+f_p^*(-\ee^{\ep'})$ where $f_p = pf+(1-p)\Id$. 
For symmetric $f$, we have $\mathrm{Symm}(f_p)=C_p(f)$ since $\bar{x}\leq f_p(\bar{x})$ where $\bar{x} = \inf\{x\in[0,1]:-1\in\partial f(x)\}$. Using Proposition \ref{prop:symm}, we have
$f_{\cM \circ \mathtt{boot}} = C_p\l(\l(\sum_{i=1}^n \frac{p_i}{1-p_0} f_{\cM,i}^*\r)^*\r).$
\end{proof}

\begin{restatable}{lemma}{lemmixwelldef}\label{lem:mix_welldef}
For $i=1,2,\ldots,k$, let $f_i$ be tradeoff functions and $p_i\in(0,1]$ satisfying $\sum_{i=1}^k p_i=1$. We write $\underline{f}=(f_1,\ldots,f_k)$ and $\underline{p}=(p_1,\ldots,p_k)$.
\begin{enumerate}
    \item $\mathrm{mix}(\underline{p}, \underline{f}): [0,1] \rightarrow [0,1]$ is a well-defined tradeoff function.
    \item If the tradeoff functions $f_i$ are all symmetric, then $\mathrm{mix}(\underline{p}, \underline{f})$ is symmetric.
\end{enumerate}
\end{restatable}

We first state some useful properties of tradeoff functions for proving Lemma \ref{lem:mix_welldef}.

% \defmixtradeoff*

\begin{proposition}[Proposition 1 in \citealp{dong2021gaussian}]\label{prop:convex-tradeoff}
    A function $f : [0, 1] \rightarrow [0, 1]$ is a tradeoff function if and only if $f$ is convex, continuous, non-increasing, and $f(x) \leq 1-x$.
\end{proposition}

\begin{proposition}\label{prop:tradeoffnewproperty}
For any tradeoff function $f$, 
% we denote the inverse of $f$ by $f^{-1}(\alpha):=\inf\{t\in[0,1]: f(t)\leq \alpha\}$.
\begin{enumerate}
    \item $f(\alpha)$ is strictly decreasing for $\alpha \in \{\alpha: f(\alpha) > 0\}$.
    \item $0\notin \partial f(\alpha)$ if and only if $\alpha \in \{\alpha: f(\alpha) > 0\}$.
    \item If $f^{-1}(f(y))\neq y$, then $f(y)=0$. 
    \item If $f=f^{-1}$ and there exists $C<0$ such that $C\in\partial f(\alpha)|_{\alpha=f(\alpha_0)}$, then we have $f(f(\alpha_0))=\alpha_0$ and $\frac{1}{C}\in \partial f(\alpha)|_{\alpha=f(\alpha_0)}$.
    \item There is exactly one $\bar\alpha$ such that $f(\bar\alpha)=\bar\alpha$.
    \item If $f=f^{-1}$ and $f(\bar\alpha)=\bar\alpha$, then $-1\in\partial f(\alpha)|_{\alpha=\bar\alpha}$.
    \item If $f=f^{-1}$ and $-1\in\partial f(\alpha)|_{\alpha=\alpha_0}$, then $-1\in\partial f(\alpha)|_{\alpha=f(\alpha_0)}$.
\end{enumerate}
\end{proposition}
\begin{proof}[Proof of Proposition \ref{prop:tradeoffnewproperty}]
We provide the proofs for each property of $f$ below.
\begin{enumerate}
    \item If there are $0\leq \alpha_1<\alpha_2\leq 1$ such that $f(\alpha_1)=f(\alpha_2)=\beta > 0$, since $g$ is convex, we will have $f(\alpha)\geq \beta ~\forall \alpha \geq \alpha_1$. Therefore, $f(1)\geq \beta > 0$ which contradicts with $f(1)\leq 1-1=0$. Since $f(\alpha)$ is not increasing, it is strictly decreasing for $\alpha \in \{\alpha: f(\alpha) > 0\}$.
    \item If $0\in \partial f(\alpha)$ and $f(\alpha)>0$, we have $f(y)\geq f(\alpha)>0 ~\forall y\in[0,1]$. This contradicts with the fact that $f(1)=0$. If $0\notin \partial f(\alpha)$ and $f(\alpha)=0$, there exists $y\in[0,1]$ such that $f(y)<0\cdot(y-\alpha)+f(\alpha)=0$ which contradicts with the fact that $f:[0,1]\rightarrow[0,1]$.
    \item If $f^{-1}(f(y))=\inf\{t\in[0,1]: f(t)\leq f(y)\}<y$, since $f$ is not increasing, $f(f^{-1}(f(y))) = f(y)$, which holds only when $f(y)=0$ (otherwise $f(y)$ is strictly decreasing).
    \item From the fact that $C$ is a sub-differential value of $f$ at $\alpha_0$, we have $f(z)\geq C(z-\alpha_0)+f(\alpha_0) ~\forall z\in[0,1]$. If there exists $z$ such that $z < \alpha_0$ and $f(z)\leq f(\alpha_0)$, it contradicts with $f(z)\geq C(z-\alpha_0)+f(\alpha_0)>f(\alpha_0)$. Therefore,
    $$f(f(\alpha_0))=f^{-1}(f(\alpha_0))=\inf\{t\in[0,1]: f(t)\leq f(\alpha_0)\}=\alpha_0.$$
    We prove $\frac{1}{C}\in\partial f(\alpha)|_{\alpha=f(\alpha_0)}$ by showing $f(y)\geq \frac{1}{C}(y-f(\alpha_0))+f(f(\alpha_0)) ~\forall y\in[0,1]$. Since $C\in(-\infty,0)$, if $f(y)\geq \frac{1}{C}(y-f(\alpha_0))+f(f(\alpha_0))$ holds when $y=f(0)$, then it also holds for $y> f(0)$ as $f(y)=0=f(f(0))\geq \frac{1}{C}(f(0)-f(\alpha_0))+\alpha_0\geq \frac{1}{C}(y-f(\alpha_0))+\alpha_0$. For $y\in[0,f(0)]$, we define $z:=f(y)$. Since $f=f^{-1}$ and $f$ is strictly decreasing when $f>0$, we know that $f(y)>0$ and $f(z)=f^{-1}(f(y))=y$ for $y\in[0,f(0))$. We also know that $f(z)=y$ when $y=f(0)$ since $z=f(y)=f(f(0))=0$. From the fact that $C$ is a sub-differential value of $f$ at $\alpha_i$, we have $f(z)\geq C(z-\alpha)+f(\alpha)$. Now we show $f(y)\geq \frac{1}{C}(y-f(\alpha_0))+f(f(\alpha_0)) ~\forall y\in[0,f(0)]$ through the analysis below.
    \begin{itemize}
        \item If $f(0)\geq y>f(\alpha_0)$, we have $z=f(y)<\alpha_0$, $f(z)=y$, and $C\geq \frac{f(z)-f(\alpha_0)}{z-\alpha_0}$. Therefore, $\frac{1}{C}(y-f(\alpha_0))+f(f(\alpha_0)) \leq \frac{z-\alpha_0}{f(z)-f(\alpha_0)}(y-f(\alpha_0))+\alpha_0=z=f(y)$.
        \item If $0\leq y<f(\alpha_0)$, we have $z=f(y)>\alpha_0$, $f(z)=y$, and $C\leq \frac{f(z)-f(\alpha_0)}{z-\alpha_0}$. Therefore, $\frac{1}{C}(y-f(\alpha_0))+f(f(\alpha_0)) \leq \frac{z-\alpha_0}{f(z)-f(\alpha_0)}(y-f(\alpha_0))+\alpha_0=z=f(y)$.
        \item If $y=f(\alpha_0)$, we have $\frac{1}{C}(y-f(\alpha_0))+f(f(\alpha_0)) = f(y)$.
    \end{itemize}
    \item If there are $\bar\alpha_1 < \bar\alpha_2$ that $\bar\alpha_1=f(\bar\alpha_1)$ and $\bar\alpha_2=f(\bar\alpha_2)$, then since $f$ is non-increasing, we have $f(\bar\alpha_1)\geq f(\bar\alpha_2)$ which contradicts with $\bar\alpha_1 < \bar\alpha_2$. Since $f(\alpha)$ is continuous and $f(0)-0\geq0-0= 0$, $f(1)-1=-1<0$, there exists $\bar\alpha\in[0,1]$ such that $f(\bar\alpha)-\bar\alpha=0$.
    \item 
    If $-1\notin\partial f(\bar\alpha)$, there exists $y\in[0,1]$ such that $f(y)<-(y-\bar\alpha)+f(\bar\alpha)$ and $f(f(y))=y$ because $f=f^{-1}$ (if $f(f(y))\neq y$, we have $f(y)=0$, then we can replace $y$ with $f(0)\leq y$ and we still have $f(y)<-(y-\bar\alpha)+f(\bar\alpha)$.) Therefore, $(y,f(y))$ and $(f(y),y)$ are both on the curve of $f$. Since $y\neq \bar\alpha$, we know $y\neq f(y)$. Without of the loss of generality, we assume that $y> f(y)$. Then we know that $y>\bar\alpha>f(y)$ since otherwise we will have contradictions: $\bar\alpha\geq y>f(y)\geq f(\bar\alpha)=\bar\alpha$ or $\bar\alpha\leq f(y)<y=f(f(y))\leq f(\bar\alpha)=\bar\alpha$. We denote $q=\frac{\bar\alpha-f(y)}{y-f(y)}>0$ and $1-q=\frac{y-\bar\alpha}{y-f(y)}>0$. Then $\bar\alpha=qy+(1-q)f(y)$, and by the convexity of $f$, we have $\bar\alpha = f(\bar\alpha)\leq qf(y)+(1-q)f(f(y))=qf(y)+(1-q)y$. Therefore, $f(\bar\alpha) + \bar\alpha \leq (qy+(1-q)f(y)) + (qf(y)+(1-q)y)=y+f(y)$. But from $f(y)<-(y-\bar\alpha)+f(\bar\alpha)$, we know $f(\bar\alpha) + \bar\alpha > y+f(y)$, which leads to a contradiction. Therefore, we have $-1\in\partial f(\bar\alpha)$.
    \item As $-1\in\partial f(\alpha)|_{\alpha=\alpha_0}$, we have $f(y)\geq -(y-\alpha_0)+f(\alpha_0) ~\forall y\in[0,1]$ and $f(f(\alpha_0))=\alpha_0$. Therefore, $f(y)\geq -(y-f(\alpha_0))+f(f(\alpha_0)) ~\forall y\in[0,1]$ and we have $-1\in\partial f(\alpha)|_{\alpha=f(\alpha_0)}$.
\end{enumerate}
\end{proof}

\begin{proof}[Proof of Lemma \ref{lem:mix_welldef}]
For part 1, first we show that for every $\alpha\in(0,1)$, there exists $C\in(-\infty,0]$ such that $\alpha\in A(C)$.
% and the corresponding $\alpha_i\in A_i(C)~\forall i\in\{1,2,\ldots,k\}$, $\sum_{i=1}^k p_i \alpha_i=\alpha$.
Since each $f_i$ is convex and non-increasing, its sub-differential $\partial f_i(\alpha_i)$ is in $(-\infty, 0]$ and non-decreasing with respect to $\alpha_i$. Therefore, for any $-\infty < C_1 < C_2 \leq 0$, we have $a_1 \leq a_2 ~\forall a_1\in A_i(C_1), a_2\in A_i(C_2)$, and for any $0 < a_1 < a_2 < 1$, we have $C_1 \leq C_2 ~\forall C_1\in \partial f_i(a_1), C_2\in \partial f_i(a_2)$. We name these two properties as the monotonicity of the sub-differential mapping.

Since each $f_i$ is convex and continuous, its sub-differential $\partial f_i(\alpha_i)$ is also continuous in the sense that for any $\ep > 0$, there exists $\delta > 0$ such that $\partial f_i(\alpha_i')\subset \partial f_i(\alpha_i) + (-\ep,\ep)$ whenever $\|\alpha_i'-\alpha_i\|< \delta$ (see Exercise 2.2.22(a) in \citep{borwein2010convex}). 

From the continuity and monotonicity of the sub-differential mapping, we know that for any $C\in(-\infty,0]$, $A_i(C)$ is a closed {interval}, e.g., $[a,b]$. Note that $A_i(C)$ is always nonempty: let $C_{i,\mathrm{range}} := \bigcup_{\alpha\in(0,1)} \partial f_i(\alpha)$; if $C < C_i ~\forall C_i\in C_{i,\mathrm{range}}$, we have $A_i(C)=\{0\}$; if $C > C_i ~\forall C_i\in C_{i,\mathrm{range}}$, we have $A_i(C)=\{1\}$; if $C \in C_{i,\mathrm{range}}$, there must be an $\alpha\in(0,1)$ such that $C \in \partial f_i(\alpha)$. By the same reasoning, we also have $(0,1)\subset \bigcup_{C\in(-\infty,0]} A_i(C)$ for any $i$. As $A(C)=\{\sum_{i=1}^k p_i \alpha_i | \alpha_i\in A_i (C)\}$, we have $(0,1)\subset \bigcup_{C\in(-\infty,0]} A(C)$, and we also have the monotonicity of $A(C)$ with respect to $C$.

Next, we show that $\mathrm{mix}(\underline{p}, \underline{f})$ is a {well-defined} function, i.e., for a given $\alpha$, although there could be multiple choices of $\{\alpha_i\}_{i=1}^k$ such that $\sum_{i=1}^k p_i \alpha_i=\alpha$, we will obtain the same value of $\sum_{i=1}^k p_i f_i(\alpha_i)$ for all choices. Consider two choices, $\{\alpha_i\}_{i=1}^k$ and $\{\alpha_i'\}_{i=1}^k$, that correspond to the same $\alpha$. Let $C$ and $C'$ correspond to $\{\alpha_i\}_{i=1}^k$ and $\{\alpha_i'\}_{i=1}^k$ respectively. As there exist $i\neq j$ such that $\alpha_i < \alpha_i'$ and $\alpha_j > \alpha_j'$ (since $\sum_{i=1}^k p_i \alpha_i=\sum_{i=1}^k p_i \alpha_i'$ and $p_i> 0$ for $i=1,2,\ldots,k$), from the monotonicity of the sub-differential mapping, we know that $C=C'$ since $C\leq C'$ and $C'\leq C$. {For $\{\alpha_i\}_{i=1}^k$ and $\{\alpha_i'\}_{i=1}^k$, since $\partial f_i(\alpha_i)=\partial f_i(\alpha_i') = C$, we have $f_i(\alpha_i)\geq C(\alpha_i-\alpha_i') +f_i(\alpha_i')\geq f_i(\alpha_i)$ for $i=1,2,\ldots,n$. Therefore, $f_i(\alpha_i) - f_i(\alpha_i') = C(\alpha_i-\alpha_i')$. As we know $\sum_{i=1}^k p_i \alpha_i=\sum_{i=1}^k p_i \alpha_i'$, we have $\sum_{i=1}^k p_i f_i(\alpha_i)-\sum_{i=1}^k p_i f_i(\alpha_i')=C\sum_{i=1}^k p_i(\alpha_i-\alpha_i')=0$, which means that $\mathrm{mix}(\underline{p}, \underline{f})$ is well-defined.}

Finally, we show that $\mathrm{mix}(\underline{p}, \underline{f})$ is a tradeoff function. 

Let $f=\mathrm{mix}(\underline{p}, \underline{f})$. We can see that $f(x)\in[0,1]~\forall x\in[0,1]$ and $f(x) \leq 1-x$. 
We also know $f$ is non-increasing because of the monotonicity of $A(C)$, the monotonicity of the sub-differential mapping, and $f_i$ being non-increasing. 

Now we prove that $f$ is continuous. For a fixed $\alpha$ and $\delta > 0$, we can find the $\{\alpha_i\}_{i=1}^k$ corresponding to this $\alpha$, and find $\ep_i$ such that $|f_i(\alpha_i')- f_i(\alpha_i)| < \delta$ whenever $|\alpha_i'-\alpha_i|<\ep_i$; then we let $\ep = \min_{i\in\{1,2,\ldots,k\}} \ep_i p_i$, and we have $|f(\alpha')-f(\alpha)|<\delta$ whenever $|\alpha'-\alpha|< \ep$. To prove this, without loss of generality, we assume $\alpha < \alpha'$, and we can find $\{\alpha_i\}_{i=1}^k$ and $\{\alpha_i'\}_{i=1}^k$ corresponding to $\alpha$ and $\alpha'$ respectively, where $\alpha_i \leq \alpha_i'$ for $i=1,2,\ldots,k$. Then if $\alpha'-\alpha<\ep$, we must have $\alpha_i'-\alpha_i<\ep_i$ for $i=1,2,\ldots,k$, therefore $|f(\alpha')-f(\alpha)|<\delta$.

{Now we prove that $f$ is convex. By the definition of convexity, we only need to show that for any $\alpha,\alpha',t\in[0,1]$, we have $t f(\alpha) + (1-t) f(\alpha') \geq f(t\alpha +(1-t)\alpha')$. From the construction of $\mathrm{mix}(\underline{p}, \underline{f})$, we can find $\{\alpha_i\}_{i=1}^k$, $\{\alpha_i'\}_{i=1}^k$, and $\{\tilde \alpha_i\}_{i=1}^k$ with their matched \mbox{sub-differential} being $C$, $C'$, and $\tilde C$ corresponding to $\alpha$, $\alpha'$, and $\tilde\alpha=t\alpha + (1-t)\alpha'$ respectively. Then  
$$
\begin{aligned}
    t f(\alpha) + (1-t) f(\alpha') &= \sum_{i=1}^k p_i (t f_i(\alpha_i) + (1-t) f_i(\alpha'_i)) 
    \geq \sum_{i=1}^k p_i f_i(t\alpha_i + (1-t)\alpha'_i) \quad(\textrm{convexity})\\
    &\geq \sum_{i=1}^k p_i (\tilde C(t\alpha_i + (1-t)\alpha'_i - \tilde \alpha_i) + f_i(\tilde \alpha_i)) \quad(\textrm{since~}\tilde C\in \partial f_i(\tilde \alpha_i)) \\
    &= \tilde C (t\alpha + (1-t)\alpha' - \tilde \alpha)  + f(t\alpha +(1-t)\alpha') = f(t\alpha +(1-t)\alpha'). \\
\end{aligned}
$$
Therefore, $f$ is convex, and we have proved that $f$ is a tradeoff function.}

For part 2, let $g = \mathrm{mix}(\underline{p}, \underline{f})$. 
We prove that $g$ is symmetric by showing $g^{-1} = \mathrm{mix}(\underline{p}, \underline{f})$.

By definition, $g^{-1}(\beta) = \inf\{\alpha\in[0,1]: g(\alpha)\leq \beta\}$. By the construction of $\mathrm{mix}(\underline{p}, \underline{f})$, for each $\alpha\in[0,1]$, there exist a constant $C$ and $\{\alpha_i\}_{i=1}^k$ such that $C\in\partial f_i(\alpha_i)$, $\alpha=\sum_{i=1}^k p_i \alpha_i$, and $g(\alpha)=\sum_{i=1}^k p_i f_i(\alpha_i)$.

If $\beta=0$, then $g(\alpha)=0$, and $f_i(\alpha_i)=0$. Therefore, $g^{-1}(0)=\inf\{\alpha\in[0,1]:\alpha=\sum_{i=1}^k p_i\alpha_i, f_i(\alpha_i)=0\}$. Since $f_i$ is symmetric, we have that $\inf\{\alpha_i\in[0,1]:f_i(\alpha_i)=0\}=f_i^{-1}(0)$. Therefore, $g^{-1}(0)=\sum_{i=1}^k p_i f_i^{-1}(0)=\sum_{i=1}^k p_i f_i(0)=g(0)$.

If $\beta \geq g(0)$, then from the definition of $g^{-1}$, we have $g^{-1}(\beta)=0$, {and we need to prove $g(\alpha)=0$ for $\alpha\geq g(0)$.  From the construction of $\mathrm{mix}(\underline{p}, \underline{f})$, we have $g(\alpha)=\sum_{i=1}^k p_if_i(\alpha_i)$ where $\alpha=\sum_{i=1}^k p_i \alpha_i$.  Let $\alpha \geq g(0)=\sum_{i=1}^k p_if_i(0)$. Then, if there exists $i$ that $\alpha_i> f_i(0)$ which means that $f_i(\alpha_i)=0$, then $\partial f_i(\alpha_i)=\{0\}$; therefore, $\partial f_j(\alpha_j)=\{0\}$ and $f_j(\alpha_j)=0$ for $j=1,2,\ldots,k$. We have $g(\alpha)=0$. If $\alpha_i \leq f_i(0)$ for $i=1,2,\ldots,k$, since $\sum_{i=1}^k p_i \alpha_i = \alpha \geq \sum_{i=1}^k p_if_i(0)$, we have $\alpha_i=f_i(0) ~\forall i$, which means $f_i(\alpha_i)=0 ~\forall i$; therefore, $g(\alpha)=0$.}

If $g(0)> \beta > 0$, since $g$ is a tradeoff function, there exists only one $\alpha\in[0,1]$ such that $g(\alpha)=\beta$. From the construction of $g$, i.e., $g(\alpha)=\sum_{i=1}^k p_i f_i(\alpha_i) =\beta > 0$, for the $\{\alpha_i\}_{i=1}^k$ corresponding to $\alpha$, there exists $i_0$ such that $f_{i_0}(\alpha_{i_0}) > 0$. Since $f_{i_0}(\alpha_{i_0}) > 0$, we have $0\notin\partial f_{i_0}(\alpha_{i_0})$. Therefore, the corresponding constant $C$ that $C\in\partial f_i(\alpha_i)$ for $i=1,2,\ldots,k$, and $C$ is not $0$. Therefore, $f_i(\alpha_i)\neq 0$ for $i=1,2,\ldots,k$. 
% Since $f_i$ are tradeoff functions, we have $f_i^{-1}(f_i(\alpha_i))=\alpha_i$ because $f_i$ are strictly decreasing when $f_i(\alpha_i)>0$. 

From Proposition \ref{prop:tradeoffnewproperty}, we know that $\frac{1}{C}\in\partial f_i(\alpha)|_{\alpha=f_i(\alpha_i)}$.
% , i.e., all $f_i$ have a common sub-differential value $\frac{1}{C}$ at $f_i(\alpha_i)$. 
Let $\alpha=\sum_{i=1}^k p_i \alpha_i$, and $g(\alpha)=\sum_{i=1}^k p_i f_i(\alpha_i)$. As $g$ is a tradeoff function, $g(\alpha)$ is strictly decreasing in $\{\alpha|g(\alpha) > 0\}$. Therefore, for any $g(\alpha)>0$, there is a one-to-one mapping between $\alpha$ and $g(\alpha)$, and $g^{-1}(g(\alpha))=\alpha$. 
Now we can view $g^{-1}$ as a mixture of $f_i$ at the values $f_i(\alpha_i)$: we let $\beta_i:=f_i(\alpha_i)$; since $f_i(f_i(\alpha_i))=\alpha_i$, we have $g^{-1}(\sum_{i=1}^k p_i \beta_i) = \sum_{i=1}^k p_i f_i(\beta_i)$, and there exists a constant $\frac{1}{C}$ such that $\frac{1}{C}\in\partial f_i(\alpha)|_{\alpha=\beta_i}$ for $i=1,2,\ldots,k$. Therefore, $g^{-1}=\mathrm{mix}(\underline{p}, \underline{f})$.
Since the mixture operation is well-defined, we know that $g=g^{-1}$.
\end{proof}

% \thmmixlow*

\begin{proof}[Proof of Theorem \ref{thm:mixlow}]
Consider the neighboring datasets $D_1, D_2$, and a rejection rule $\psi$ giving the type I error
$\alpha=\EE_{\cM(D_1)} \psi$, and type II error $\beta=\EE_{\cM(D_2)} (1-\psi).$
{We write} $\alpha_i=\EE_{\cM_i(D_1)} \psi,~ \beta_i=\EE_{\cM_i(D_2)} (1-\psi).$
Then $\alpha = \sum_{i=1}^k p_i\alpha_i, ~\beta = \sum_{i=1}^k p_i\beta_i.$ 

In order to obtain the lower bound of $\beta$ given $\alpha$, {which we denote as }%denoted by 
$f_{\min}(\alpha)$, we not only need the tradeoff between $\alpha_i$ and $\beta_i$, which is $f_i$, but also consider the tradeoff between $\alpha_i$ and $\alpha_j$ because of the constraint $\sum_{i=1}^k{p_i\alpha_i}=\alpha$. 
Therefore, we consider $f_{\min}(\alpha)=\min\{\sum_{i=1}^k{p_i\beta_i}~|~\beta_i\geq f_i(\alpha_i), \alpha=\sum_{i=1}^k{p_i\alpha_i}, \alpha_i\in[0,1]\}$,
which is a convex optimization problem since the objective function is linear, and the constraints $f_i$ are all convex (Proposition \ref{prop:convex-tradeoff}). Therefore, by Karush–Kuhn–Tucker theorem \citep{boyd2004convex}, let the Lagrangian function be
$L({\{\alpha_i,\beta_i,\mu_i,\nu_i,\kappa_i\}_{i=1}^k},\lambda) =\sum_{i=1}^k p_i \beta_i + \sum_{i=1}^k \big(\mu_i (f_i(\alpha_i)-\beta_i) + \nu_i(-\alpha_i) + \kappa_i(\alpha_i-1) \big) + \lambda\l(\alpha-\sum_{i=1}^k p_i \alpha_i \r)$,
and the minimum of $\sum_{i=1}^k{p_i\beta_i}$ is achieved at $\{\alpha_i,\beta_i\}_{i=1}^k$ {if and only if the following conditions are satisfied}
\begin{align*} 
    \text{Stationarity:}~ & p_i - \mu_i = 0,  ~ i=1,2,\ldots, k;\\
    & 0\in\mu_i \partial f_i(\alpha_i)-\nu_i+\kappa_i-\lambda p_i, ~ i=1,2,\cdots, k. \\
    \text{Primal feasibility:}~ & \sum_{i=1}^k \alpha_i=\alpha, ~\beta_i\geq f_i(\alpha_i), ~0\leq \alpha_i\leq 1, ~ i=1,2,\ldots, k;\\
    \text{Dual feasibility:}~ & \mu_i \geq 0, ~\nu_i \geq 0, ~\kappa_i \geq 0, ~ i=1,2,\ldots, k. \\
    \text{Complementary slackness:}~ & \sum_{i=1}^k \l(\mu_i (f_i(\alpha_i) - \beta_i) + \nu_i(-\alpha_i) + \kappa_i(\alpha_i-1) \r) = 0.
\end{align*}

Therefore, we have $\mu_i = p_i \neq 0$ from the stationarity condition, and $f_i(\alpha_i) = \beta_i$ from the complementary slackness condition. 
\begin{itemize}
    \item If $0<\alpha_i<1$, we have $\nu_i=\kappa_i=0$, therefore, $\lambda\in \partial f_i(\alpha_i)$; 
    \item If $\alpha_i=0$, we have $\kappa_i=0$ and $\partial f_i(\alpha_i)\ni\frac{\nu_i}{\mu_i}+\lambda\geq \lambda$, {therefore, for any $\alpha \in[0,1]$, we have $f_i(\alpha)\geq (\alpha-\alpha_i)(\frac{\nu_i}{\mu_i}+\lambda) + f_i(\alpha_i) \geq (\alpha-\alpha_i)\lambda + f_i(\alpha_i)$, i.e., $\lambda\in\partial f_i(\alpha_i)$};
    \item If $\alpha_i=1$, we have $\nu_i=0$ and $\partial f_i(\alpha_i)\ni-\frac{\kappa_i}{\mu_i}+\lambda\leq \lambda$; similarly, we have $\lambda\in\partial f_i(\alpha_i)$.  
\end{itemize}
As $f_i$ is a tradeoff functions, $\partial f_i(\alpha_i)$ is non-decreasing when $\alpha_i$ increases. We write $\partial f_i(\alpha_i) > \lambda$ if $a>\lambda ~\forall a\in\partial f_i(\alpha_i)$. For a given $\lambda$, 
\begin{itemize}
    \item {if there exist two constants $0<\alpha_{i,1}<\alpha_{i,2}<1$ such that $\partial f_i(\alpha_{i,1}) \leq \lambda \leq \partial f_i(\alpha_{i,2})$, we define $\alpha_i^{\lambda,\text{lower}}$ and $\alpha_i^{\lambda,\text{upper}}$ such that $\alpha_i\in[\alpha_i^{\lambda,\text{lower}}, \alpha_i^{\lambda,\text{upper}}]$ if and only if $\lambda \in \partial f_i(\alpha_i)$; 
    \item if $\partial f_i(\alpha_{i}) > \lambda$ for all $\alpha_{i}\in(0,1)$, we define $\alpha_i^{\lambda,\text{lower}}=\alpha_i^{\lambda,\text{upper}}=0$; intuitively, we do not want to have any of the type I error of $\alpha$ on the $f_i$ part since the corresponding type II error would be larger otherwise;
    \item if $\partial f_i(\alpha_{i}) < \lambda$ for all $\alpha_{i}\in(0,1)$, we define $\alpha_i^{\lambda,\text{lower}}=\alpha_i^{\lambda,\text{upper}}=1$; intuitively, we want to have as much of the type I error of $\alpha$ on the $f_i$ part as possible since the corresponding type II error would be larger otherwise.}
\end{itemize}
Define $\alpha^{\lambda,\text{lower}}=\sum_{i=1}^k p_i \alpha_i^{\lambda,\text{lower}}$, and $\alpha^{\lambda,\text{upper}}=\sum_{i=1}^k p_i \alpha_i^{\lambda,\text{upper}}$.
By definition, if $\lambda=-\infty$, we have $\alpha^{\lambda,\text{upper}} = \alpha^{\lambda,\text{lower}}=0$, and if $\lambda=+\infty$, we have $\alpha^{\lambda,\text{upper}} = \alpha^{\lambda,\text{lower}}=1$. For $\lambda\in(-\infty,+\infty)$, we have $[0,1]\subseteq\cup_{\lambda\in(-\infty,+\infty)}[\alpha^{\lambda,\text{lower}}, \alpha^{\lambda,\text{upper}}]$. Therefore, for any $\alpha\in[0,1]$, we can find $\lambda$ such that $\alpha\in[\alpha^{\lambda,\text{lower}},\alpha^{\lambda,\text{upper}}]$, and we can determine $\alpha_i$ by the $\lambda$.\footnote{If there are multiple choices of $\{\alpha_i\}_{i=1}^k$, all of them correspond to the same $\beta$.}

From the procedure above, we know $f_{\min}=\mathrm{mix}(\underline{p}, \underline{f})$, and $\cM$ satisfies $f_{\min}$-DP.
\end{proof}

We will use Lemma \ref{lem:cpformat} in the proof of Theorem \ref{thm:fdp_single_boot}.
\begin{lemma}[Equation (13) in \citep{dong2021gaussian}]\label{lem:cpformat}
For a symmetric tradeoff function $f$, define $f_p:= pf + (1-p)\Id$ for $0 \le p \le 1$, where $\Id(x) = 1-x$. Let $x^*$ be the unique fixed point of $f$, that is $f(x^*)= x^*$, we have
\begin{equation*}
		C_p(f)(x)=
			\left\{
			\begin{array}{ll}
			f_p(x),&x\in[0,x^*] \\
			x^*+f_p(x^*)-x, & x\in[x^*,f_p(x^*)]\\
			f_p^{-1}(x), &x\in[f_p(x^*),1].
			\end{array}
			\right.
	\end{equation*}
\end{lemma}
% \fdpsingleboot*
\begin{proof}[Proof of Theorem \ref{thm:fdp_single_boot}]
We are going to find a lower bound of $T_{\cM(\mathtt{boot}(D_1)), \cM(\mathtt{boot}(D_2))}$ uniformly for any neighboring datasets $D_1=(x_1,x_2,\cdots,x_n)$ and $D_2=(x_1',x_2,\cdots,x_n)$ (without loss of generality, we let $x_1$ be the different data point in $D_1$ and $D_2$). 
We use $\mathtt{boot}^i$ to denote the conditional bootstrap subsampling where $x_1$ or $x_1'$ are drawn for exactly $i$ times from $D_1$ or $D_2$. Note that both $\mathtt{boot}^i(D_1)$ and $\mathtt{boot}^i(D_2)$ are random variables for any $i=0,1,2,\cdots,n$. Furthermore, we define $\mathtt{boot}^>$ to denote the conditional bootstrap subsampling where $x_1$ or $x_1'$ is drawn for at least once from $D_1$ or $D_2$.

From Theorem \ref{thm:mixlow}, {we know $\mathrm{mix}(\{(q_i,g)\}_{i\in I}) = g$ for any $g$, $I$ and $q_i$. Therefore, we have} $T_{\cM(\mathtt{boot}^0(D_1)), \cM(\mathtt{boot}^0(D_2))}(\alpha)= f_0$ where $f_0(\alpha)=1-\alpha$ since $\mathtt{boot}^0(D_1)=\mathtt{boot}^0(D_2)$, and we also have $T_{\cM(\mathtt{boot}^k(D_1)), \cM(\mathtt{boot}^k(D_2))} \geq f_k$ since $\mathtt{boot}^k(D_1)$ and $\mathtt{boot}^k(D_2)$ are neighboring datasets with respect to group size $k$. {Now we consider $\cM\circ\mathtt{boot}^>$ as a mixture of $\cM\circ\mathtt{boot}^i$.} Using Theorem \ref{thm:mixlow} again, we have $T_{\cM(\mathtt{boot}^>(D_1)), \cM(\mathtt{boot}^>(D_2))} \geq f_>$ where $f_>:=\mathrm{mix}(\{(\frac{p_i}{1-p_0},f_i)\}_{i=1}^n)$ (here we use $\frac{p_i}{1-p_0}$ instead of $p_i$ because $\sum_{i=1}^n \frac{p_i}{1-p_0} = 1$).

{In order to obtain a better lower bound of $T_{\cM(\mathtt{boot}(D_1)), \cM(\mathtt{boot}(D_2))}$, we find the mixture of $f_>$ and $f_0$ with considering} the bootstrap resampling context because $\cM\circ \mathtt{boot}$ is a mixture of $\cM \circ \mathtt{boot}^>$ and $\cM \circ \mathtt{boot}^0$. 

Consider a rejection rule $\psi$. Let
\[ \begin{array}{ll}
\alpha:=\EE_{\cM(\mathtt{boot}(D_1))} \psi, &\beta:=\EE_{\cM(\mathtt{boot}(D_2))} (1-\psi), \\
\alpha_0:=\EE_{\cM(\mathtt{boot}^0(D_1))} \psi=\EE_{\cM(\mathtt{boot}^{0}(D_{2}))} \psi, &\beta_0:=\EE_{\cM(\mathtt{boot}^0(D_2))} (1-\psi)=1-\alpha_0,\\
\alpha_{>}:=\EE_{\cM(\mathtt{boot}^{>}(D_1))} \psi, &\beta_{>}:=\EE_{\cM(\mathtt{boot}^{>}(D_2))} (1-\psi).
\end{array}
\]
We prove that when calculating the mixture of $f_>$ and $f_0$, there are two additional constraints, $\beta_0\geq f_>(\alpha_>)$ and $\beta_>\geq f_>(\alpha_0)$, as the hypothesis testing between $\cM(\mathtt{boot}^0(D_1))$ and $\cM(\mathtt{boot}^{>}(D_1))$ is similar to the one between $\cM(\mathtt{boot}^>(D_2))$ and $\cM(\mathtt{boot}^{>}(D_1))$.

We consider the hypothesis testing between $\cM(\mathtt{boot}^0(D_1))$ and $\cM(\mathtt{boot}^{>}(D_1))$. Similar to the idea in \citep{balle2018privacy}, we consider replacing each $x_1$ in $\cM(\mathtt{boot}^{>}(D_1))$ with a data point independently and uniformly drawn from $(x_2,x_3,\ldots,x_n)$. We denote the distribution of $\mathtt{boot}^{>}(D_1)$ as $\omega_1$, the distribution of $\mathtt{boot}^0(D_1)$ as $\omega_0$, {the distribution of $\mathtt{boot}(D_1)$ as $\omega_{0\&1}$, and the replacement procedure as $\mathtt{replace}$}. Since \citet{balle2018privacy} did not provide proof for why this replacement procedure can transform $\omega_1$ to $\omega_0$, i.e., $\mathtt{replace}(\omega_1)=\omega_0$, we prove it below\footnote{The replacement procedure is defined deliberately: if it is defined to be replacing all $x_1$ with the same data point uniformly randomly drawn from $(x_2,x_3,\ldots,x_n)$, e.g., $x_1$ are all replaced by $x_2$, one can verify that this procedure will not transform $\omega_1$ to $\omega_0$ even for $n=3$.}. % and then build our proof based on it.

For one element in $\omega_1$, let {its} histogram be $h^> = (h_1,h_2,\cdots,h_n)$ where {$h_i$ is the number of occurrences of $x_i$ in this element,} $h_1\geq 1$, $h_i\geq 0$ for $i=2,3,\ldots,n$, $\sum_{i=1}^n h_i=n$. For one element in $\omega_0$, {we let its histogram be} $h^0 = (h_1=0,h_2,\cdots,h_n)$. Then 
\begin{align*}
P_{\omega_{0\& 1}}(H=h)&=\frac{1}{n^n}{n \choose h_1 ~ h_2 ~ \cdots ~ h_n}, \\ 
P_{\omega_{0}}(H=h^0)&=\frac{1}{(n-1)^n}{n \choose h_2 ~ h_3 ~ \cdots ~ h_n}, \\ 
P_{\omega_{1}}(H=h^>)&=\frac{1}{1-\l(1-\frac{1}{n}\r)^n} \frac{1}{n^n}{n \choose h_1 ~ h_2 ~ \cdots ~ h_n}=\frac{1}{n^n-(n-1)^n}{n \choose h_1 ~ h_2 ~ \cdots ~ h_n}.
\end{align*}
For the replacement procedure,
we replace the $h_1$ replicates of $x_1$ in $h^>$ with elements independently and uniformly drawing from $(x_2,x_3,\cdots,x_n)$, where the histogram of the replacement is $h'=(h_2',h_3',\cdots,h_n')$.
% , then 
% \begin{align*}
% P_{\mathtt{replace}(h_1)}(H'=h')\cdot P_{\omega_{1}}(H=h^>) &= \frac{{h_1 \choose h_2' ~ h_3' ~ \cdots ~ h_n'}}{(n-1)^{h_1}}  \frac{{n \choose h_1 ~ h_2 ~ \cdots ~ h_n}}{n^n-(n-1)^n} =  \frac{\frac{n!}{(h_2'! h_2!)(h_3'! h_3!)\cdots(h_n'! h_n!)}}{(n-1)^{h_1} (n^n-(n-1)^n)}.
% \end{align*}

% Then the final distribution of such replacement is 
% \begin{align*}
% &P_{\mathtt{replace}(\omega_{1})}(H=h^0) = \sum_{h',h_1}P_{\mathtt{replace}(h_1)}(H'=h')\cdot P_{\omega_{1}}(H^>=(h_1, h^0-h')) \\
% &= \sum_{h_1=1}^n \frac{1}{(n-1)^{h_1}(n^n-(n-1)^n)} \sum_{\{h' | \sum_{i=2}^n h_i' = h_1\}}  \frac{n!}{(h_2'! (h_2-h_2')!)\cdots(h_n'! (h_n-h_n')!)}
% \end{align*}

% It is not easy to show that $P_{\mathtt{replace}(\omega_1)}(H=h^0)=P_{\omega_{0}}(H=h^0)$ because of the requirement $\sum_{i=2}^n h_i' = h_1$ on $h'$. So, we use another way: 
Since for any element in $\omega_0$, the replacement does not change it, i.e., $\mathtt{replace}(\omega_0)=\omega_0$, we can perform the replacement on $\omega_{0\&1}$ and show $\mathtt{replace}(\omega_{0\&1}) = \omega_0$, then we also have 
$$\mathtt{replace}(\omega_{1})=\mathtt{replace}\l(\frac{\omega_{0\&1}-p_0\omega_0}{1-p_0}\r)=\frac{\l(\mathtt{replace}(\omega_{0\&1})-p_0\mathtt{replace}(\omega_0)\r)}{1-p_0}=\omega_0.$$
We prove $\mathtt{replace}(\omega_{0\&1}) = \omega_0$ below.

\begin{align*}
&P_{\mathtt{replace}(\omega_{0\&1})}(H^0=h^0) = \sum_{h'}P_{\mathtt{replace}(h_1)}(H'=h')\cdot P_{\omega_{0\&1}}(H=(h_1, h^0-h')) \\
&= \sum_{h_1=0}^n \frac{1}{(n-1)^{h_1}} \frac{1}{n^n}\sum_{\{h' | \sum_{i=2}^n h_i' = h_1, h_i'\leq h_i \forall i\}}  \frac{n!}{(h_2'! (h_2-h_2')!)\cdots(h_n'! (h_n-h_n')!)} \\
% &= \sum_{h_1=0}^n \frac{1}{(n-1)^{h_1}} \frac{1}{n^n}\sum_{\{h' | \sum_{i=2}^n h_i' = h_1, h_i'\leq h_i \forall i\}}  \frac{n!}{h_2!h_3!\cdots h_n!}\frac{h_2!}{h_2'! (h_2-h_2')!}\cdots\frac{h_n!}{h_n'! (h_n-h_n')!} \\
% &=  \frac{n!}{h_2!h_3!\cdots h_n!} \frac{1}{n^n} \sum_{h_1=0}^n \sum_{\{h' | \sum_{i=2}^n h_i' = h_1, h_i'\leq h_i \forall i\}}  \frac{h_2!}{(n-1)^{h_2'}h_2'! (h_2-h_2')!} \cdots\frac{h_n!}{(n-1)^{h_n'}h_n'! (h_n-h_n')!} \\
&=  \frac{n!}{h_2!h_3!\cdots h_n!} \frac{1}{n^n} \sum_{h_2'=0}^{h_2} \cdots \sum_{h_n'=0}^{h_n} \frac{h_2!}{(n-1)^{h_2'}h_2'! (h_2-h_2')!} \cdots\frac{h_n!}{(n-1)^{h_n'}h_n'! (h_n-h_n')!} \\
&=  \frac{n!}{h_2!h_3!\cdots h_n!} \frac{1}{n^n}  \l(\sum_{h_2'=0}^{h_2}  \frac{h_2!}{(n-1)^{h_2'}h_2'! (h_2-h_2')!}\r)
% \l(\sum_{h_3'}\frac{h_3!}{(n-1)^{h_3'}h_3'! (h_3-h_3')!}\r) 
\cdots\l(\sum_{h_n'0}^{h_n}\frac{h_n!}{(n-1)^{h_n'}h_n'! (h_n-h_n')!}\r) \\
&=  \frac{n!}{h_2!h_3!\cdots h_n!} \frac{1}{n^n}  \l(\frac{n}{n-1}\r)^{h_2}\l(\frac{n}{n-1}\r)^{h_3}\cdots \l(\frac{n}{n-1}\r)^{h_n} =  \frac{n!}{h_2!h_3!\cdots h_n!} \frac{1}{(n-1)^n}  \\
&\Rightarrow \mathtt{replace}(\omega_{0\&1}) = \omega_0.
\end{align*}
% Therefore, the replacement procedure indeed transforms $\omega_1$ into $\omega_0$.

Now we are ready to prove $\beta_0\geq f_>(\alpha_>)$ and $\beta_>\geq f_>(\alpha_0).$
From the result $\mathtt{replace}(\omega_{1}) = \omega_0$, if we consider the tradeoff function between the two distributions $\cM(\mathtt{boot}^{0}(D_1))$ and $\cM(\mathtt{boot}^{>}(D_{1}))$, we can break each of the two mixture distributions into parts following the replacement procedure so that there is a one-to-one mapping between the parts. When the number of occurrence of $x_1$ is $h_1$ {in the outcome of $\mathtt{boot}^>(D_1)$}, two parts in such a mapping pair have distance $h_1$ (between $\mathtt{boot}^{h_1}(D_{1})$ and $\mathtt{replace}(\mathtt{boot}^{h_1}(D_{1}))$), so the tradeoff function between $\cM(\mathtt{boot}^{h_1}(D_{1}))$ and $\cM(\mathtt{replace}(\mathtt{boot}^{h_1}(D_{1})))$ is $f_{h_1}$. We use Theorem \ref{thm:mixlow} to obtain the mixture of those $f_{h_1}$. Since $f_{h_1}$ only depends on $h_1$, its corresponding probability in $\mathtt{boot}^{>}(D_{1})$ is $\frac{p_{h_1}}{1-p_0}$. Therefore, the tradeoff function between $\cM(\mathtt{boot}^{>}(D_{1}))$ and $\cM(\mathtt{boot}^{0}(D_{1}))$ is $\mathrm{mix}(\{(\frac{p_i}{1-p_0},f_i)\}_{i=1}^n) = f_>$.
Recall that
\[ \begin{array}{ll}
\alpha_0:=\EE_{\cM(\mathtt{boot}^0(D_1))} \psi=\EE_{\cM(\mathtt{boot}^{0}(D_{2}))} \psi, &\beta_0:=\EE_{\cM(\mathtt{boot}^0(D_1))} (1-\psi)=1-\alpha_0,\\
\alpha_{>}:=\EE_{\cM(\mathtt{boot}^{>}(D_1))} \psi, &\beta_{>}:=\EE_{\cM(\mathtt{boot}^{>}(D_2))} (1-\psi),
\end{array} \]
we have $\beta_0\geq f_>(\alpha_>)$. Similarly, $\beta_>\geq f_>(\alpha_0).$

Now we have established the additional constraints for the mixture of $f_0$ and $f_>$. We are ready to derive the final mixture tradeoff function.
Notice that $\alpha = p_0 \alpha_0 + (1-p_0)\alpha_>$ and $\alpha_0,\alpha_>\in[0,1]$: For $\alpha=0$ and $\alpha=1$, we have $\alpha_0=\alpha_>=0$ and $\alpha_0=\alpha_>=1$ respectively. 

Now we consider the constrained optimization problem for $\alpha \in (0,1)$ where we replace $\alpha_>$ and $\beta_0$ with $\frac{\alpha-p_0\alpha_0}{1-p_0}$ and $1-\alpha_0$ respectively:
$f_{\min}(\alpha) = \min\{p_0(1-\alpha_0) + (1-p_0)\beta_> ~|~ \beta_>\leq 1, \alpha_0\geq 0,\beta_>\geq f_>(\alpha_0), \beta_>\geq f_>(\frac{\alpha-p_0\alpha_0}{1-p_0}), 1-\alpha_0 \geq f_>(\frac{\alpha-p_0\alpha_0}{1-p_0}) \}.$
% \begin{equation*}
% f_{\min}(\alpha) = \min_{\subalign{&\alpha_0,\beta_>, ~\mathrm{s.t.}~\\\beta_>&\leq 1, ~\alpha_0\geq 0,\\\beta_>&\geq f_>(\alpha_0),\\ \beta_>&\geq f_>(\frac{\alpha-p_0\alpha_0}{1-p_0}),\\ 1-\alpha_0 &\geq f_>(\frac{\alpha-p_0\alpha_0}{1-p_0}) }} p_0(1-\alpha_0) + (1-p_0)\beta_>,
% \end{equation*}
We ignore the constraint $\alpha_0\leq 1$ and $\beta_> \geq 0$ because they can be derived from $1-\alpha_0\geq f_>(\frac{\alpha-p_0\alpha_0}{1-p_0})$ and $\beta_>\geq f_>(\alpha_0)$ respectively. Since $f_>:\RR\mapsto\RR$ is a convex function, we use the Karush–Kuhn–Tucker theorem to solve to the convex optimization problem: let the Lagrangian function be
$L(\alpha_0,\beta_>,\mu_1,\mu_2,\mu_3, \mu_4,\mu_5) = p_0(1-\alpha_0)+(1-p_0)\beta_> + \mu_1(\beta_>-1)+\mu_2\l(-\alpha_0\r)+\mu_3(f_>(\alpha_0)-\beta_>)+\mu_4\l(f_>\l(\frac{\alpha-p_0\alpha_0}{1-p_0} \r)-\beta_>\r) + \mu_5\l(f_>\l(\frac{\alpha-p_0\alpha_0}{1-p_0}\r)-1+\alpha_0 \r)$,
and the minimum of $p_0(1-\alpha_0)+(1-p_0)\beta_>$ is achieved at $(\alpha_0,\beta_>)$ if and only if the following conditions are satisfied:\\
    \textbf{Stationarity:} $0\in -p_0- \frac{\mu_2p_0}{1-p_0}+\mu_3 \partial f_>(\alpha_0) - \frac{\mu_4p_0}{1-p_0}\partial f_>(\frac{\alpha-p_0\alpha_0}{1-p_0})+
    \mu_5 (-\frac{p_0}{1-p_0}\partial f_>(\frac{\alpha-p_0\alpha_0}{1-p_0})+1)$ and
    $0= (1-p_0)+\mu_1-\mu_3-\mu_4$.\\
    \textbf{Primal feasibility:} $1\geq\beta_>\geq f_>(\alpha_0),~\beta_>\geq f_>(\frac{\alpha-p_0\alpha_0}{1-p_0}),~1- f_>(\frac{\alpha-p_0\alpha_0}{1-p_0})\geq \alpha_0 \geq 0$. \\
    \textbf{Dual feasibility:} $\mu_i \geq 0, ~ i=1,2,3,4,5$. \\
    \textbf{Complementary slackness:} $\mu_1(\beta_>-1)+\mu_2(-\alpha_0)+\mu_3(f_>(\alpha_0)-\beta_>)+ \mu_4(f_>(\frac{\alpha-p_0\alpha_0}{1-p_0} )-\beta_>) + \mu_5 (f_>(\frac{\alpha-p_0\alpha_0}{1-p_0})-1+\alpha_0 )=0$.\\
% \begin{align*} 
%     &\text{Stationarity:}~  0\in -p_0- \frac{\mu_2p_0}{1-p_0}+\mu_3 \partial f_>(\alpha_0) - \frac{\mu_4p_0}{1-p_0}\partial f_>\l(\frac{\alpha-p_0\alpha_0}{1-p_0}\r)+\\
%     &\quad\quad\mu_5 \l(-\frac{p_0}{1-p_0}\partial f_>\l(\frac{\alpha-p_0\alpha_0}{1-p_0}\r)+1\r), \\
%     & 0= (1-p_0)+\mu_1-\mu_3-\mu_4.\\
%     &\text{Primal feasibility:}~ 1\geq\beta_>\geq f_>(\alpha_0),~\beta_>\geq f_>\l(\frac{\alpha-p_0\alpha_0}{1-p_0}\r),~1- f_>\l(\frac{\alpha-p_0\alpha_0}{1-p_0}\r)\geq \alpha_0 \geq 0. \\
%     &\text{Dual feasibility:}~  \mu_i \geq 0, ~ i=1,2,3,4,5. \\
%     &\text{Complementary slackness:}~  0=\mu_1(\beta_>-1)+\mu_2\l(-\alpha_0\r)+\mu_3(f_>(\alpha_0)-\beta_>)~+\\ 
%     &\quad\quad\mu_4\l(f_>\l(\frac{\alpha-p_0\alpha_0}{1-p_0} \r)-\beta_>\r) + \mu_5 \l(f_>\l(\frac{\alpha-p_0\alpha_0}{1-p_0}\r)-1+\alpha_0 \r).
% \end{align*} 
As our constrained optimization problem has both the convex objective function and convex constraints, we only need to find a solution of $\alpha_0,\beta_>$ satisfying the KKT conditions, and this solution will be a minimizer of the problem given fixed $\alpha$.

To simplify our analysis, first, we show that the solution satisfies $\alpha_0 \geq \alpha$. This is because when $\alpha_0 < \alpha$, there is always another choice, $(\alpha_0'=\alpha,\beta_>'=f_>(\alpha))$, achieving a lower value of the objective function: Notice that $\alpha = p_0 \alpha_0 + (1-p_0)\alpha_>$ where $p_0 \in (0,1)$. Therefore, if $\alpha_0 < \alpha$, we have $\alpha < \alpha_>$, and $\beta_>\geq f_>(\alpha_0)\geq f_>(\alpha)$ where the first inequality holds from previous analysis, and the second inequality holds due to $f_>$ being decreasing. Therefore,
\begin{align*}
    p_0 (1-\alpha_0) + (1-p_0) \beta_> &= p_0 (1-\alpha) + (1-p_0) f_>(\alpha) + p_0(\alpha-\alpha_0) + (1-p_0)(\beta_>-f_>(\alpha)) \\
    & > p_0 (1-\alpha) + (1-p_0) f_>(\alpha),
\end{align*}
which means that the choice $(\alpha_0'=\alpha,\beta_>'=f_>(\alpha))$ satisfies all the constraints and also achieves a value of the objective function lower than the choice $(\alpha_0,\beta_>)$. {This contradicts the optimality of $(\alpha_0, \beta_>)$}. Therefore, $\alpha_0 \geq \alpha > 0$.

Similarly, we show that we only need to consider $\beta_> < 1$. If $\beta_>=1$, since $\alpha_0 = \frac{\alpha - (1-p_0)\alpha_>}{p_0}\leq \frac{\alpha}{p_0}$, we have the objective function value $p_0(1-\alpha_0) + (1-p_0)\beta_>\geq 1-\alpha$. Now we consider another choice $(\alpha_0'=\alpha, \beta_>'=f_>(\alpha))$. We know that $\beta_>'=f_>(\alpha)\leq1-\alpha<1$, and the objective function value $p_0(1-\alpha) + (1-p_0)\beta_>'\leq p_0(1-\alpha) + (1-p_0)(1-\alpha)=1-\alpha$ which means that the new choice $(\alpha_0'=\alpha, \beta_>'=f_>(\alpha))$ is never worse than the choice of $(\alpha_0,\beta_>=1)$. Therefore, for $f_{\min}(\alpha)$ where $\alpha\in(0,1)$, we only need to consider $\beta_> < 1$.

{With $\alpha_0\geq \alpha>0,\beta_><1$,} from the complementary slackness, we know $\mu_1=\mu_2=0$. 

By Proposition \ref{prop:tradeoffnewproperty}, there is exactly one $\bar\alpha$ satisfying $\bar\alpha = f_>(\bar\alpha)$ and $-1\in \partial f_>(\bar\alpha)$.
We denote $C\in\partial f_>(\alpha)$, and $C'\in\partial f_>(\alpha)|_{\alpha=\frac{\alpha-p_0\alpha_0}{1-p_0}}$.

\begin{itemize}
    \item If $0<\alpha\leq\bar\alpha$, we verify that $(\alpha_0=\alpha, \beta_>=f_>(\alpha), \mu_3=(1-p_0)(p_0+p_0/C), \mu_4=(1-p_0)(1-p_0-p_0/C), \mu_5=0)$ satisfies the KKT conditions which means $(\alpha_0=\alpha, \beta_>=f_>(\alpha))$ is a minimizer. First, we see that $\frac{\alpha-p_0\alpha_0}{1-p_0}=\alpha$. 
    \begin{itemize}
        \item The primal feasibility is satisfied because $f_>(\alpha)\leq 1-\alpha$. 
        \item The complementary slackness is satisfied as $\mu_1=\mu_2=\mu_5=0$ and $f_>(\alpha)=\beta_>$. 
        \item Since $0<\alpha\leq\bar\alpha$, we know that $C\leq -1$. The dual feasibility conditions, $\mu_3\geq 0$ and $\mu_4>0$, hold because $p_0\in(0,1)$. 
        \item The stationarity conditions also hold when we plug in the value $C$ for $\partial f_>(\alpha)$.
    \end{itemize}
    \item If $\bar\alpha<\alpha\leq (1-p_0)\bar\alpha+p_0(1-\bar\alpha)$, we verify that $(\alpha_0=\frac{\alpha-(1-p_0)\bar\alpha}{p_0}, \beta_>=\bar\alpha, \mu_3=0, \mu_4=1-p_0, \mu_5=0)$ satisfies the KKT conditions. First, we see that $\frac{\alpha-p_0\alpha_0}{1-p_0}=\bar\alpha$.
    \begin{itemize}
        \item The primal feasibility is satisfied: since $\alpha>\bar\alpha$, we have {$\alpha_0>\bar\alpha$ and} $f_>(\alpha_0)\leq f_>(\bar\alpha)= \beta_>$; since $\alpha\leq(1-p_0)\bar\alpha+p_0(1-\bar\alpha)$, we have $\alpha_0\leq 1-\bar\alpha$, and $1-f_>(\bar\alpha)=1-\bar\alpha\geq \alpha_0$.
        \item The complementary slackness is satisfied as $\mu_1=\mu_2=\mu_3=\mu_5=0,f_>(\bar\alpha)=\beta_>$. 
        \item The dual feasibility condition, $\mu_4>0$, holds because $p_0\in(0,1)$. 
        \item The stationarity conditions also hold: We plug in the value $-1$ for $\partial f_>(\bar\alpha)$, then the right-hand {sides of both conditions are 0.}
    \end{itemize}
    \item If $(1-p_0)\bar\alpha+p_0(1-\bar\alpha)<\alpha<p_0+(1-p_0)f_>(0)$, we let $\alpha_0^*$ be the solution of $p_0\alpha_0 + (1-p_0) f_>(1-\alpha_0)=\alpha$. Since $f_>$ is {continuous and} not increasing, we know $g(\alpha_0)=p_0\alpha_0 + (1-p_0) f_>(1-\alpha_0)$ is {continuous and} strictly increasing with respect to $\alpha_0$. {We know $g(1)=p_0+(1-p_0)f_>(0)$ and $g(1-\bar\alpha)=(1-p_0)\bar\alpha+p_0(1-\bar\alpha)$. Therefore, $p_0\alpha_0 + (1-p_0) f_>(1-\alpha_0)=\alpha$ has only one solution which is $\alpha_0^*$ with $1>\alpha_0^* > 1-\bar\alpha$.} Now we verify that $(\alpha_0=\alpha_0^*, \beta_>=1-\alpha_0^*, \mu_3=0, \mu_4=1-p_0, \mu_5=\frac{p_0(1+C')}{1-\frac{p_0C'}{1-p_0}})$ satisfies the KKT conditions. First, we see that $\frac{\alpha-p_0\alpha_0}{1-p_0}=f_>(1-\alpha_0)> f_>(\bar\alpha)=\bar\alpha$ and $\frac{\alpha-p_0\alpha_0}{1-p_0}=f_>(1-\alpha_0)<f_>(0)$. Therefore, $0>C'\geq-1$.
    \begin{itemize}
        \item The primal feasibility is satisfied: since $\alpha_0\geq\alpha\geq\bar\alpha$, we have $1-\alpha_0<\bar\alpha$. {Therefore, by Proposition \ref{prop:tradeoffnewproperty}, since $f_>$ is convex and symmetric, there exists $C\leq -1$ such that $C\in\partial f_>(1-\alpha_0)$ and} $\beta_>=1-\alpha_0=f_>(f_>(1-\alpha_0))=f_>(\frac{\alpha-p_0\alpha_0}{1-p_0})$. We also have $\beta_>=1-\alpha_0\geq f_>(\alpha_0)$.
        \item The complementary slackness is also satisfied because $\mu_1=\mu_2=\mu_3=0$ and $\beta_>=1-\alpha_0=f_>(\frac{\alpha-p_0\alpha_0}{1-p_0})$. 
        \item The dual feasibility conditions hold as $p_0\in(0,1), C'\in[-1,0)$. 
        \item The stationarity conditions hold: We plug in the value $C'$ for $\partial f_>(\alpha)|_{\alpha=\frac{\alpha-p_0\alpha_0}{1-p_0}}$, then the right hand side is $0$.
    \end{itemize} 
    \item If $\alpha\geq p_0+(1-p_0)f_>(0)$, we let $(\alpha_0=1,\beta_>=0, \mu_5=p_0, \mu_3=(1-p_0)/2,\mu_4=(1-p_0)/2)$. Notice that $\frac{\alpha-p_0\alpha_0}{1-p_0} \geq f_>(0)$. Therefore, $f_>(\frac{\alpha-p_0\alpha_0}{1-p_0})=0$.
    \begin{itemize}
        \item The primal feasibility is satisfied.
        \item The complementary slackness is satisfied because $f_>(\alpha_0)-\beta_>=0,~ f_>(\frac{\alpha-p_0\alpha_0}{1-p_0} )-\beta_>=0,~ f_>\l(\frac{\alpha-p_0\alpha_0}{1-p_0}\r)-1+\alpha_0 =0$. 
        \item The dual feasibility conditions hold because $p_0\in(0,1)$. 
        \item The stationarity conditions hold: We use $0\in \partial f_>(\alpha_0)$ and $0\in\partial f_>(\alpha)|_{\alpha=\frac{\alpha-p_0\alpha_0}{1-p_0}}$ to plug in the first condition, and it holds because $-p_0+\mu_5=0$. The second condition holds because $ (1-p_0)-\mu_3-\mu_4=0$.
    \end{itemize} 
\end{itemize} 
Therefore, we have the tradeoff function $f_{\min}$ from $f_>$ as follows
  \begin{align*}
    f_{\min}(\alpha)=
    &\begin{cases}
    %   p_0+(1-p_0)f_>(0), & \text{if}~ \alpha=0 \\
      p_0+(1-p_0)f_>(0), & \text{if}~  \alpha=0 \\
      p_0(1-\alpha)+(1-p_0)f_>(\alpha), & \text{if}~ 0 < \alpha \leq \bar\alpha \\
      p_0-\alpha+2(1-p_0)\bar\alpha, & \text{if}~ \bar\alpha < \alpha \leq (1-p_0)\bar\alpha+p_0(1-\bar\alpha) \\
      1-\alpha_0^*, & \text{if}~ (1-p_0)\bar\alpha+p_0(1-\bar\alpha) < \alpha <   p_0+(1-p_0)f_>(0) \\
      0, & \text{if}~ \alpha\geq p_0+(1-p_0)f_>(0) \\
    \end{cases}
  \end{align*}
where {$\bar\alpha$ satisfies $f_>(\bar\alpha)=\bar\alpha$} and $\alpha_0^*$ is the only solution of $p_0\alpha_0 + (1-p_0) f_>(1-\alpha_0)=\alpha$ with respect to $\alpha_0$.

Now we verify the $f_{\min}$ above is the same as $C_{1-p_0}(f_>)$. {In Lemma \ref{lem:cpformat}, we replace $p$ and $f$ with $1-p_0$ and $f_>$ respectively:}
\begin{itemize}
    \item For $\alpha\in[0,\bar\alpha]$, we have $f_{\min}(\alpha)=(f_>)_{1-p_0}(\alpha)=C_{1-p_0}(f_>)(\alpha)$. 
    \item For $\alpha\in[\bar\alpha, (f_>)_{1-p_0}(\bar\alpha)]$, we have $f_{\min}(\alpha)=p_0-\alpha+2(1-p_0)\bar\alpha=\bar\alpha + ((1-p_0)\bar\alpha + p_0(1-\bar\alpha)) - \alpha = \bar\alpha + (f_>)_{1-p_0}(\bar\alpha) - \alpha=C_{1-p_0}(f_>)(\alpha)$. 
    \item For $\alpha\in[(f_>)_{1-p_0}(\bar\alpha), p_0+(1-p_0)f_>(0)]$, since $\alpha_0^*$ satisfies $p_0\alpha_0 + (1-p_0) f_>(1-\alpha_0)=\alpha$, we let $t=1-\alpha_0^*$, and we have $(f_>)_{1-p_0}(t)=(1-p_0)f_>(t)+p_0(1-t)=\alpha$. Since $f_>$ is non-increasing, we have that $(f_>)_{1-p_0}(t)$ is strictly decreasing because $p_0\in(0,1)$. Therefore, $(f_>)_{1-p_0}^{-1}(\alpha)=1-\alpha_0^*=f_{\min}(\alpha)$. 
    \item For $\alpha\in[p_0+(1-p_0)f_>(0),1]$, we know that $\alpha\geq (f_>)_{1-p_0}(0)$. Since $(f_>)_{1-p_0}$ is strictly decreasing, we have $(f_>)_{1-p_0}(t)\leq \alpha \forall t\in[0,1]$. Therefore, $(f_>)_{1-p_0}^{-1}(\alpha)=\inf\{t\in[0,1]:(f_>)_{1-p_0}(t)\leq \alpha\}=\inf\{t\in[0,1]\}=0$. We have $(f_>)_{1-p_0}^{-1}(\alpha)=0=f_{\min}(\alpha)$.
\end{itemize}
\end{proof}

\subsection{Derive \texorpdfstring{$(\ep,\delta)$}{(epsilon,delta)}-DP from $f$-DP for DP bootstrap}

In this section, we transform our Theorem \ref{thm:fdp_single_boot} to \citep[Theorem 10]{balle2018privacy}. 
% We use $\delta_{\cM,i}(\ep)$ to denote the group privacy guarantee of {a mechanism} $\cM$ with group size $i$, and restate the existing results below.

% \balle*

% \fdpsingleboot*

\begin{restatable}{proposition}{ourtoepdelta}\label{prop:our_to_epdelta}
    The $(\ep,\delta)$-DP result in Theorem \ref{thm:balle} can be derived from Theorem \ref{thm:fdp_single_boot}.
\end{restatable}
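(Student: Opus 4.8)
The plan is to run the argument in the proof of Proposition~\ref{prop:epde_f_boot} backwards. There, Balle's bound (Theorem~\ref{thm:balle}) was converted, through a chain of primal--dual identities plus one genuine implication (the symmetrization step, Proposition~\ref{prop:symm}), into the $f$-DP bound $C_{1-p_0}\big(\big(\sum_i \tfrac{p_i}{1-p_0}f_i^*\big)^*\big)$; I would reverse that chain, the only non-mechanical point being that on the range $\ep'\ge 0$ the symmetrization step is in fact an equivalence. As a preliminary, the bound of Theorem~\ref{thm:fdp_single_boot} is literally that of Proposition~\ref{prop:epde_f_boot}: writing $q_i=\tfrac{p_i}{1-p_0}$, the function $g:=\mathrm{mix}(\{(q_i,f_i)\}_{i=1}^n)$ equals $\big(\sum_i q_i f_i^*\big)^*$, so $g^*=\sum_i q_i f_i^*$. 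This is Fenchel conjugacy applied to the definition of $\mathrm{mix}$: if $\alpha=\sum_i q_i\alpha_i$ with a common subgradient $C\in\partial f_i(\alpha_i)$ for all $i$, then $C\in\partial g(\alpha)$ by convexity of the $f_i$, whence $g^*(C)=C\alpha-g(\alpha)=\sum_i q_i\big(C\alpha_i-f_i(\alpha_i)\big)=\sum_i q_i f_i^*(C)$; moreover $g$ is symmetric by Lemma~\ref{lem:mix_welldef}(2).

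Next I would convert the $f$-DP statement to $(\ep,\delta)$-DP. Put $p:=1-p_0$ and $g_p:=pg+(1-p)\Id$; by symmetry of $g$, exactly as in the proof of Proposition~\ref{prop:epde_f_boot}, $f_{\cM\circ\mathtt{boot}}=C_p(g)=\mathrm{Symm}(g_p)=\min\{g_p,g_p^{-1}\}^{**}$, which is a symmetric tradeoff function. By Theorem~\ref{thm:fdp_single_boot} and the primal--dual correspondence for symmetric tradeoff functions (Proposition~\ref{prop:primal-dual}), $\cM\circ\mathtt{boot}$ is $\big(\ep',\,1+(C_p(g))^*(-\ee^{\ep'})\big)$-DP for every $\ep'\ge 0$, and $(C_p(g))^*=\big(\min\{g_p,g_p^{-1}\}^{**}\big)^*=\max\{g_p^*,(g_p^{-1})^*\}$ (using $(\min\{a,b\})^*=\max\{a^*,b^*\}$ and $(h^{**})^*=h^*$). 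The key fact is that $(g_p^{-1})^*(y)=g_p^*(y)$ for every $y\le -1$, so that $(C_p(g))^*(-\ee^{\ep'})=g_p^*(-\ee^{\ep'})$ on $\ep'\ge 0$. To see it, realize $g_p$ as the tradeoff function of a pair obtained by mixing a pair $(P,Q)$ attaining $g$ with an uninformative pair, with weights $p$ and $1-p$; for $\gamma=\ee^{\ep'}\ge 1$ the uninformative component contributes nothing to $1+g_p^*(-\gamma)=\sup_T\big(R(T)-\gamma S(T)\big)$, so $1+g_p^*(-\gamma)=p\sup_T\big(P(T)-\gamma Q(T)\big)$, and symmetry of $g$ gives $\sup_T\big(P(T)-\gamma Q(T)\big)=\sup_T\big(Q(T)-\gamma P(T)\big)$; the same computation with $P,Q$ interchanged then gives $1+(g_p^{-1})^*(-\gamma)=1+g_p^*(-\gamma)$. (Equivalently, this is the fact underlying the subsampling $(\ep,\delta)$-formula of \citep{dong2021gaussian} and may simply be quoted.)

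Finally I would match the constants with the reparametrization of \citep[Supplement, p.~42]{dong2021gaussian} already used in the proof of Proposition~\ref{prop:epde_f_boot}: with $\ep'=\log\!\big(1+(1-p_0)(\ee^\ep-1)\big)=\log(1-p+p\ee^\ep)$ one has $1+g_p^*(-\ee^{\ep'})=p\big(1+g^*(-\ee^\ep)\big)$. Instantiating Theorem~\ref{thm:fdp_single_boot} with $f_i$ the tight symmetric group-$i$ tradeoff function of $\cM$ and substituting $g^*=\sum_i q_i f_i^*$, $\sum_{i=1}^n p_i=1-p_0$, yields
\[
1+g_p^*(-\ee^{\ep'})=(1-p_0)+\sum_{i=1}^n p_i f_i^*(-\ee^\ep)=\sum_{i=1}^n p_i\big(1+f_i^*(-\ee^\ep)\big)=\sum_{i=1}^n p_i\,\delta_{\cM,i}(\ep),
\]
the last equality because $\delta_{\cM,i}(\ep)=1+f_i^*(-\ee^\ep)$ (Proposition~\ref{prop:primal-dual}, $f_i$ symmetric). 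Combining the two displays, $\cM\circ\mathtt{boot}$ is $\big(\ep',\sum_{i=1}^n p_i\delta_{\cM,i}(\ep)\big)$-DP, i.e.\ $\delta_{\cM\circ\mathtt{boot}}(\ep')\le\sum_{i=1}^n p_i\delta_{\cM,i}(\ep)$ for $\ep'=\log(1+(1-p_0)(\ee^\ep-1))$, which is exactly Theorem~\ref{thm:balle}.

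The main obstacle is the equality $(g_p^{-1})^*(y)=g_p^*(y)$ on $y\le -1$ used in the second step: it is precisely what turns the one-directional symmetrization of Proposition~\ref{prop:symm} into an equivalence on the relevant range, and it is the only place in the argument where the symmetry assumption on the $f_i$ is genuinely needed. The mixture realization of $g_p$ makes it transparent, but for a clean write-up it may be simplest to cite the corresponding statement from the subsampling analysis of \citep{dong2021gaussian}.
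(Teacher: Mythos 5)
Your proposal is correct and follows essentially the same route as the paper's proof: primal--dual conversion, the observation that for $\ep'\ge 0$ only the $(f_>)_{1-p_0}$ branch of $C_{1-p_0}(f_>)$ contributes to the conjugate, the reparametrization $\ep'=\log(p_0+(1-p_0)\ee^{\ep})$, and the identity $(1-p_0)\bigl(1+f_>^*(-\ee^{\ep})\bigr)=\sum_i p_i\delta_{\cM,i}(\ep)$ obtained from the common-subgradient structure of the mixture. The only differences are in packaging: the paper justifies the branch selection via Lemma \ref{lem:cpformat} and the fixed point of $f_>$ rather than via $(\min\{g_p,g_p^{-1}\})^*=\max\{g_p^*,(g_p^{-1})^*\}$ plus a distributional realization, and it does not explicitly state your (correct) side observation that $\mathrm{mix}(\{(q_i,f_i)\})=\bigl(\sum_i q_i f_i^*\bigr)^*$, i.e., that the bounds of Proposition \ref{prop:epde_f_boot} and Theorem \ref{thm:fdp_single_boot} coincide.
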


\begin{proof}
{ 
We use the primal-dual transformation (see Proposition \ref{prop:primal-dual})
to obtain the $(\ep',\delta')$-DP result by our $f_{\cM\circ\mathtt{boot}}$-DP result:
$\delta'(\ep')=1+f_{\cM\circ\mathtt{boot}}^*(-\ee^{\ep'})$.

Since $f$ is convex function when $f$ is a tradeoff function, we have $\alpha y - f(\alpha) \leq \alpha_0 y - f(\alpha_0)$ for any $\alpha$ and $(\alpha_0,y)$ satisfying $y\in \partial f(\alpha_0)$. Therefore, we have $f^*(y) = \sup_\alpha \alpha y-f(\alpha) = \alpha_0 y - f(\alpha_0)$ where $y\in \partial f(\alpha_0)$, and $1-\delta(\ep) = -f^*(-\ee^\ep) = \ee^\ep \alpha_\ep + f(\alpha_\ep)$ where $-\ee^\ep \in \partial f(\alpha_\ep)$. 

We let $f_> = \mathrm{mix}(\{(\frac{p_i}{1-p_0}, f_{i})\}_{i=1}^n)$, and our bootstrap privacy guarantee is $f_{\cM\circ\mathtt{boot}}=C_{1-p_0}(f_>)$. We let $\bar\alpha$ be the solution of $\alpha = f_>(\alpha)$. By Proposition \ref{prop:tradeoffnewproperty} and Lemma \ref{lem:cpformat}, we have $-1\in\partial f_>(\bar\alpha)$ and $-1\in\partial f_{\cM\circ\mathtt{boot}}(\bar\alpha)$. Since $-\ee^{\ep'} \leq -1$ for any $\ep' \geq 0$, we can find $\alpha_{\ep'}\leq \bar\alpha$ that $-\ee^{\ep'}\in \partial f_{\cM\circ\mathtt{boot}}(\alpha_{\ep'})$. By Lemma \ref{lem:cpformat}, we also have $-\ee^{\ep'}\in \partial ((f_>)_{1-p_0})(\alpha_{\ep'})$ and $f_{\cM\circ\mathtt{boot}}(\alpha_{\ep'}) = C_{1-p_0}(f_>)(\alpha_{\ep'}) = (f_>)_{1-p_0}(\alpha_{\ep'}) = (1-p_0) f_>(\alpha_{\ep'}) + p_0(1-\alpha_{\ep'})$, and 
$$
\begin{aligned}
\delta'(\ep')&=1+f_{\cM\circ\mathtt{boot}}^*(-\ee^{\ep'}) =-\ee^{\ep'} \alpha_{\ep'} - f_{\cM\circ\mathtt{boot}}(\alpha_{\ep'}) \\
&=-\ee^{\ep'} \alpha_{\ep'} - ((f_>)_{1-p_0})(\alpha_{\ep'}) = 1+((f_>)_{1-p_0})^*(-\ee^{\ep'})\\
\end{aligned}
$$

From the Supplement to \citep{dong2021gaussian} (Page 42), we know the following two equations, $\ep'=\log(1-p + p\ee^\ep), \delta'=p\big(1+f^*(-\ee^{\ep})\big)$ can be re-parameterized into $\delta'=1+f_p^*(-\ee^{\ep'})$ where $f_p = pf+(1-p)\Id$. We can re-parameterize $\delta'(\ep') = 1+((f_>)_{1-p_0})^*(-\ee^{\ep'})$ and $\ep'=\log(p_0 + (1-p_0)\ee^\ep)$ into $\delta'(\ep')=(1-p_0)\big(1+f_>^*(-\ee^{\ep})\big)$. Since $\ep'=\log(p_0 + (1-p_0)\ee^\ep)$ is also the relationship between $\ep'$ and $\ep$ in Theorem \ref{thm:balle}, in order to prove Theorem \ref{thm:balle}, we only need to show that $(1-p_0)\big(1+f_>^*(-\ee^{\ep})\big) = \sum_{i=1}^n p_i \delta_{\cM,i}(\ep)$.

As we have $f_> = \mathrm{mix}(\{(\frac{p_i}{1-p_0}, f_{i})\}_{i=1}^n)$, from the construction of the mixture tradeoff function, we let $C = -\ee^\ep$ and find $\{\alpha_i\}_{i=1}^n$ such that $C\in\partial f_i(\alpha_i)$, then we know that for $\alpha = \sum_{i=1}^n \frac{p_i}{1-p_0}\alpha_i$, we also have $C \in \partial f_> (\alpha)$: This is because for any $\alpha' = \sum_{i=1}^n \frac{p_i}{1-p_0}\alpha_i'$ and $f_> (\alpha') = \sum_{i=1}^n \frac{p_i}{1-p_0}f_i(\alpha_i')$, we have $f_> (\alpha') \geq f_>(\alpha) + C(\alpha' -\alpha)$ by the fact that $C\in\partial f_i(\alpha_i)$.
Therefore, we can find $\alpha_{\ep} = \sum_{i=1}^n \frac{p_i}{1-p_0}\alpha_{\ep,i}$ such that $-\ee^\ep\in\partial f_>(\alpha_{\ep})$ and $-\ee^\ep\in\partial f_i(\alpha_{\ep,i})$ for $i=1,2,\ldots,n$. Then we can prove $(1-p_0)\big(1+f_>^*(-\ee^{\ep})\big) = \sum_{i=1}^n p_i \delta_{\cM,i}(\ep)$ using the primal-dual transformation of $\delta_{\cM,i}(\ep)$ and the fact that 
$(1-p_0)\big(-\ee^\ep \alpha_\ep -f_>(\alpha_\ep)\big) = \sum_{i=1}^n p_i \big(-\ee^\ep \alpha_{\ep,i} -f_i(\alpha_{\ep,i})\big)$
due to $\alpha_{\ep} = \sum_{i=1}^n \frac{p_i}{1-p_0}\alpha_{\ep,i}$ and $f_>(\alpha_{\ep}) = \sum_{i=1}^n \frac{p_i}{1-p_0}f_i(\alpha_{\ep,i})$.
}

\end{proof}

\subsection{Proofs for Section \ref{sec:clt}}\label{sec:append_approx_eval_f_comp}
{The composition computation follows the general result in \citep{zheng2020sharp}, and we only need to prove the $q(x)$ in our numerical composition result which is derived from Theorem \ref{thm:fdp_single_boot}, Remark \ref{rmk:mix_tradeoff} and Lemma \ref{lem:cpformat}.
\begin{proposition}[numerical computation of composition: \citealp{zheng2020sharp}]\label{prop:numerical-composition-cite}
     Let $f_1=T(P_1, Q_1)$ and $f_2=T(P_2, Q_2)$, and we use $\delta_1$, $\delta_2$, and $\delta_{\otimes}$ to denote the dual view for $f_1$, $f_2$, and $f_{\otimes}:=f_1\otimes f_2$ correspondingly. If $P_i$ and $Q_i$ are distributions on $x\in\mathbb{R}$ with densities $p_i(x)$ and $q_i(x)$ for $i=1,2$ with respect to Lebesgue measure, then $\delta_{\otimes}(\ep) = \int_{\mathbb{R}} \delta_1(\ep - L_2(x)) q_2(x)\;\mathrm{d}x$ where $L_2(x):= \log(\frac{q_2(x)}{p_2(x)})$ and $\delta_1(\ep) = \int_{\mathbb{R}} \max(0, q_1(x) - \mathrm{e}^\ep p_1(x))\;\mathrm{d}x$.
\end{proposition}
\begin{proof}[Proof of Proposition \ref{prop:numerical-composition}]
In this proof, we replace $C$ with $\lambda$ to avoid confusion with the subsampling function $C_p$ in Proposition \ref{prop:fdp_subsamling}.

We know that $f_{\cM \circ \mathtt{boot}} = C_{1-p_0}(\mathrm{mix}(\underline{p}, \underline{f}))$, $\mathrm{mix}(\underline{p}, \underline{f})=(\sum_{i=1}^k (\frac{p_i}{1-p_0} f_i \circ (f_i')^{-1})) \circ (\sum_{i=1}^k \frac{p_i}{1-p_0} (f_i')^{-1})^{-1}$, and
$$
    C_p(f)(x)=
        \left\{
        \begin{array}{ll}
        f_p(x),&x\in[0,x^*] \\
        x^*+f_p(x^*)-x, & x\in[x^*,f_p(x^*)]\\
        f_p^{-1}(x), &x\in[f_p(x^*),1]
        \end{array}
        \right.
$$
where $x^*$ is the unique fixed point of $f$, $f_p:= pf + (1-p)\Id$ for $0 \le p \le 1$, and $\Id(x) = 1-x$. 

As $f_i$ is symmetric, let $x_i = (f_i')^{-1}(-1)$, then we have $f_i(x_i^*)=x_i^*$. Therefore, the choice $x^* = \sum_{i=1}^n  \frac{p_i}{1-p_0}  (f_i')^{-1}(-1)$ satisfies $\mathrm{mix}(\underline{p}, \underline{f})(x^*) = x^*$. We have $\mathrm{mix}(\underline{p}, \underline{f})_{1-p_0}(x^*) = (1-p_0) x^* + p_0 (1-x^*)$.

Let $y=1-x$ and $g(y) = \mathrm{mix}(\underline{p}, \underline{f})(y)$.

When $x \geq 1-x^*$, $y\leq x^*$, $q(x)=-f_{\cM \circ \mathtt{boot}}'(1-x)=-f_{\cM \circ \mathtt{boot}}'(y) = -((1-p_0)g + p_0 \Id)'(y) = -(1-p_0)g'(y) + p_0$. We let $\lambda = g'(y)$ be the slope, then $y=\sum_{i=1}^k \frac{p_i}{1-p_0} (f_i')^{-1}(\lambda)$ is the corresponding type I error. 

When $1- p_0-(1-2p_0)x^*\leq x < 1-x^*$, $x^* < y \leq p_0+(1-2p_0)x^* = \mathrm{mix}(\underline{p}, \underline{f})_{1-p_0}(x^*)$. $q(x)=-f_{\cM \circ \mathtt{boot}}'(1-x) = -f_{\cM \circ \mathtt{boot}}'(y) = 1$.

When $1- p_0-(1-2p_0)x^* > x \geq 0$, $y > \mathrm{mix}(\underline{p}, \underline{f})_{1-p_0}(x^*)$. $q(x) = -f_{\cM \circ \mathtt{boot}}'(1-x) = -f_{\cM \circ \mathtt{boot}}'(y) = -(((1-p_0)g + p_0 \Id)^{-1})'(y) = -\frac{1}{(1-p_0)g'(((1-p_0)g + p_0 \Id)^{-1}(y)) - p_0}$. We let $\lambda = 1/g'(((1-p_0)g + p_0 \Id)^{-1}(y))$. Then $q(x) = 1 / (p_0 - (1-p_0)/\lambda)$. And by Proposition \ref{prop:tradeoffnewproperty}.4, we have $(g')^{-1}(1/\lambda) = g((g')^{-1}(\lambda))$ since $g$ is a symmetric tradeoff function. Therefore, 
\[
\begin{aligned}
y &= ((1-p_0)g + p_0 \Id)((g')^{-1}(1/\lambda)) = (1-p_0) (g')^{-1}(\lambda) + p_0(1-(g')^{-1}(1/\lambda))\\
&=(1-p_0)\sum_{i=1}^n  \frac{p_i}{1-p_0}  (f_i')^{-1}(\lambda) + p_0\l(1-\sum_{i=1}^n  \frac{p_i}{1-p_0} (f_i')^{-1}(1/\lambda)\r).
\end{aligned}
\]
The computation of the privacy profile of $f_1 \otimes \cdots \otimes f_B$ is based on Proposition \ref{prop:numerical-composition-cite}.
\end{proof}
}
To prove our asymptotic composition result, we first restate the results in \citep{dong2021gaussian} for comparing the results of $f_{>}$ and $C_{1-p_0}(f_{>})$ which are used in our proof.
 
\newcommand{\kl}{\mathrm{kl}}
\newcommand{\bkl}{\boldsymbol{\kl}}
\newcommand{\bk}{\boldsymbol{\kappa_2}}
\newcommand{\bkk}{\boldsymbol{\kappa_3}}
\newcommand{\bkbar}{\boldsymbol{\bar{\kappa}_3}}

\begin{theorem}[Berry-Esseen CLT for the composition of $f$-DP: \citealp{dong2021gaussian}]\label{thm:clt}
Define $\kl(f) := -\int_0^1\log |f'(x)|\diff x$, 
 $\kappa_2(f):=\int_0^1\log^2 |f'(x)| \diff x$, 
 $\kappa_3(f):=\int_0^1\big|\log |f'(x)|\big|^3\diff x$, 
 $\bar{\kappa}_3(f):=\int_0^1\big|\log |f'(x)|+\kl(f)\big|^3\diff x$. 
Let $f_1,\ldots, f_n$ be symmetric tradeoff functions such that $\kappa_3(f_i) < \infty$ for all $1 \le i \le n$. Denote $\bkl:=(\kl(f_1), \ldots, \kl(f_n))$, $\bk:=({\kappa}_2(f_1), \ldots, {\kappa}_2(f_n))$, $\bkbar:=(\bar{\kappa}_3(f_1), \ldots, \bar{\kappa}_3(f_n))$,
$\mu:= \frac{2\|\bkl\|_1}{\sqrt{\|\bk\|_1 - \|\bkl\|_2^2}}$, 
$\gamma:=\frac{0.56\|\bkbar\|_1}{\big(\|\bk\|_1 - \|\bkl\|_2^2\big)^{3/2}}$
and assume $\gamma < \frac12$. Then, for all $\alpha\in[\gamma,1-\gamma]$, we have
\begin{equation}\label{eq:lower_upper}
G_\mu(\alpha+\gamma)-\gamma\leqslant f_{1}\otimes f_{2} \otimes \cdots \otimes f_{n}(\alpha)\leqslant G_\mu(\alpha-\gamma)+\gamma.
\end{equation}
Let $\{f_{ni}: 1\leqslant i \leqslant n\}_{n=1}^{\infty}$ be a triangular array of symmetric tradeoff functions and assume for some constants $K \ge 0$ and $s > 0$ that
$\lim_{n\to \infty}\sum_{i=1}^n \kl(f_{ni})= K$,
$\lim_{n\to \infty}\max_{1\leqslant i\leqslant n} \kl(f_{ni}) = 0$,
$\lim_{n\to \infty}\sum_{i=1}^n \kappa_2(f_{ni})= s^2$,
$\lim_{n\to \infty}\sum_{i=1}^n \kappa_3(f_{ni})= 0$.
Then, uniformly for all $\alpha \in [0,1]$, we have
	$$\lim_{n\to \infty} f_{n1}\otimes f_{n2} \otimes \cdots \otimes f_{nn} (\alpha) = G_{2K/s}(\alpha).$$
\end{theorem}

\begin{lemma}[Lemma F.2 and F.3 in the Supplement to \citealp{dong2021gaussian}]\label{lem:functionals2}
	Suppose $f$ is a symmetric tradeoff function with $f(0)=1$ and $x^*$ is its unique fixed point. 
 Then
	\begin{align*}
		\kl(f) &= \int_0^{x^*}\big(|f'(x)|-1\big)\log |f'(x)|\diff x \\
		% \lk(f^{-1}) &= \kl(f)\\
		\kappa_2(f)&= \int_0^{x^*}\big(|f'(x)|+1\big)\big(\log |f'(x)|\big)^2\diff x \\
		\bar{\kappa}_3(f)&=\int_0^{x^*}\l(\big|\log |f'(x)|+\kl(f)\big|^3+|f'(x)|\cdot\big|\log |f'(x)|-\kl(f)\big|^3\r)\diff x\\
		\kappa_3(f)&=\int_0^{x^*}\big(|f'(x)|+1\big)\big(\log |f'(x)|\big)^3\diff x.
	\end{align*}
	Let $g(x) = -f'(x)-1 = |f'(x)|-1$. Then
	\begin{align*}
		\kl(C_p(f)) &= p\int_0^{x^*}g(x)\log \big(1+pg(x)\big)\diff x \\
		\kappa_2(C_p(f))&= \int_0^{x^*}\big(2+pg(x)\big)\big[\log \big(1+pg(x)\big)\big]^2\diff x \\
		\kappa_3(C_p(f))&=\int_0^{x^*}\big(2+pg(x)\big)\big[\log \big(1+pg(x)\big)\big]^3\diff x.
	\end{align*}
\end{lemma}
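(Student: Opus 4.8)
The plan is to establish the two blocks of identities separately: first the ``half-domain'' formulas valid for any symmetric tradeoff function, then the $C_p(f)$ formulas by specializing.

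\emph{Half-domain formulas.} Let $f$ be symmetric with $f(0)=1$ and let $x^*$ be its unique fixed point (Proposition~\ref{prop:tradeoffnewproperty}), so $-1\in\partial f(x^*)$; by convexity $f'$ is non-decreasing, whence $|f'|\geq 1$ on $[0,x^*]$ and $|f'|\leq 1$ on $[x^*,1]$, so $\log|f'|\geq 0$ on $[0,x^*]$ and $\leq 0$ on $[x^*,1]$ (this is why $(\log|f'|)^3$ may replace $|\log|f'||^3$ over $[0,x^*]$). For each functional I would split the defining integral over $[0,1]$ as $\int_0^{x^*}+\int_{x^*}^1$ and substitute $u=f(x)$, $x\in[0,x^*]$, in the second piece. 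Since $f=f^{-1}$, $f(f(x))=x$, so $f'(f(x))f'(x)=1$ and $\log|f'(f(x))|=-\log|f'(x)|$; also $\diff u=f'(x)\diff x=-|f'(x)|\diff x$, and $x=0,x^*$ correspond to $u=1,x^*$ (here $f(0)=1$ is used). This gives $\int_{x^*}^1\Phi\big(\log|f'(u)|\big)\diff u=\int_0^{x^*}\Phi\big({-\log|f'(x)|}\big)\,|f'(x)|\diff x$ for any $\Phi$, legitimate since a convex $f$ is absolutely continuous on compact subintervals. Taking $\Phi(t)=-t$, $t^2$, $|t|^3$ and, for $\bar\kappa_3$, $\Phi(t)=|t+\kl(f)|^3$ together with $|{-t}+\kl(f)|=|t-\kl(f)|$, and adding the $[0,x^*]$ contribution, reproduces the four stated formulas.

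\emph{Specialization to $C_p(f)$.} By Lemma~\ref{lem:cpformat}, $C_p(f)=f_p=pf+(1-p)\Id$ on $[0,x^*]$, so $C_p(f)(0)=1$ and, a.e.\ on $[0,x^*]$, $|C_p(f)'(x)|=(1-p)+p|f'(x)|=1+p\,g(x)$ with $g(x)=|f'(x)|-1\geq 0$. Moreover $C_p(f)$ is symmetric and, again by Lemma~\ref{lem:cpformat}, affine of slope $-1$ on $[x^*,f_p(x^*)]$; since $x^*=f(x^*)\leq 1-x^*$ gives $x^*\leq\tfrac{1}{2}$ and hence $f_p(x^*)\geq x^*$, the fixed point $\tilde x^*=\tfrac{1}{2}\big(x^*+f_p(x^*)\big)$ of $C_p(f)$ satisfies $x^*\leq\tilde x^*\leq f_p(x^*)$ with $|C_p(f)'|\equiv 1$ on $[x^*,\tilde x^*]$. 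Applying the half-domain formulas of the first part to $C_p(f)$ expresses each functional as an integral over $[0,\tilde x^*]$; but the integrands $(|C_p(f)'|-1)\log|C_p(f)'|$, $(|C_p(f)'|+1)(\log|C_p(f)'|)^2$, $(|C_p(f)'|+1)(\log|C_p(f)'|)^3$ all vanish on $[x^*,\tilde x^*]$, so each integral over $[0,\tilde x^*]$ collapses to the integral over $[0,x^*]$. Substituting $|C_p(f)'|=1+p\,g$ (so $|C_p(f)'|-1=p\,g$, $|C_p(f)'|+1=2+p\,g$) yields the three claimed formulas, the leading $p$ in the $\kl$ expression coming from $p\,g\log(1+p\,g)=p\cdot g\log(1+p\,g)$.

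The only genuine subtlety is this last point: the half-domain construction naturally integrates up to the fixed point $\tilde x^*$ of $C_p(f)$, whereas the statement integrates up to the fixed point $x^*$ of $f$. Bridging the gap needs the explicit piecewise form of $C_p(f)$ from Lemma~\ref{lem:cpformat} — namely that $C_p(f)$ is linear with slope exactly $-1$ between $x^*$ and $\tilde x^*$, so that stretch contributes nothing to any of the three integrands — while everything else is the change of variables of the first part plus careful sign bookkeeping.
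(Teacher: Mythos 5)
Your sketch is correct. Note that the paper does not prove this lemma at all — it is imported verbatim as Lemmas F.2 and F.3 from the supplement of \citet{dong2021gaussian} — so there is no internal proof to compare against; your reconstruction follows the same natural route as the source: fold $[x^*,1]$ onto $[0,x^*]$ via $u=f(x)$ using $f=f^{-1}$, $f(0)=1$ and $f'(f(x))f'(x)=1$ a.e., observe $\log|f'|\geq 0$ on $[0,x^*]$ (so the absolute value in $\kappa_3$ can be dropped there), and then specialize to $C_p(f)$ through the piecewise description in Lemma \ref{lem:cpformat}, with the slope-$(-1)$ stretch between $x^*$ and the fixed point $\tfrac{1}{2}(x^*+f_p(x^*))$ of $C_p(f)$ contributing zero to every integrand. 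The two points you flag as the real content — that $|C_p(f)'|=1+p\,g$ a.e. on $[0,x^*]$ and that the bridge from the fixed point of $C_p(f)$ back to $x^*$ costs nothing — are indeed exactly what makes the second block follow from the first.
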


For DP bootstrap with Gaussian mechanism, we can apply our Theorem \ref{thm:fdp_single_boot} as follows.
\begin{restatable}{corollary}{ourbootgauss}\label{cor:our_boot_gauss}
Let $\cM:\cX^n\rightarrow \cY$ satisfy $\mu$-GDP.
% Let $f$ be a tradeoff function and $f=G_{\mu}$.
% which is from Gaussian mechanism satisfying $\mu$-GDP. 
Then $\cM\circ\mathtt{boot}$ satisfies $f_{\mathtt{boot}}$-DP where $f_{\mathtt{boot}}=C_{1-p_0}(f_>)$, $f_>=\mathrm{mix}(\{\frac{p_i}{1-p_0},f_i\}_{i=1}^n)$, $p_i={n\choose i}(1/n)^i(1-1/n)^{n-i}$, $f_i=G_{i\mu}$.
\end{restatable}
Now we provide the exact representation of $f_>$ in Corollary \ref{cor:our_boot_gauss}.
\begin{restatable}{lemma}{ourbootgaussmixpart}\label{lem:our_boot_gauss_mixpart}
Let $f_>=\mathrm{mix}(\{\frac{p_i}{1-p_0},f_i\}_{i=1}^n)$ where $p_i={n\choose i}(1/n)^i(1-1/n)^{n-i}$, $f_i=G_{i\mu}$. Then $f_>$ is the tradeoff function between $\sum_{i=1}^n \frac{p_i}{1-p_0}\cN\l(-\frac{i^2\mu^2}{2}, i^2\mu^2\r)$ and $\sum_{i=1}^n \frac{p_i}{1-p_0}\cN\l(\frac{i^2\mu^2}{2}, i^2\mu^2\r)$.
\end{restatable}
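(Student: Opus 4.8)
The plan is to unwind the definition of $\mathrm{mix}(\{(p_i/(1-p_0), f_i)\}_{i=1}^n)$ from Definition \ref{def:mix_tradeoff} and recognize that the resulting curve is exactly the tradeoff (ROC) curve of the stated pair of Gaussian mixtures. First I would recall that $f_i = G_{i\mu} = T_{\cN(0,1),\cN(i\mu,1)}$, and that this tradeoff function has the well-known closed form $G_{i\mu}(\alpha)=\Phi(\Phi^{-1}(1-\alpha)-i\mu)$; equivalently, after the affine change of variables that centers the two hypotheses, $G_{i\mu}$ is the tradeoff function of $\cN(-\tfrac{i^2\mu^2}{2}, i^2\mu^2)$ versus $\cN(\tfrac{i^2\mu^2}{2}, i^2\mu^2)$ (this is the log-likelihood-ratio parametrization: for a single Gaussian pair the LLR is itself Gaussian with these moments under $H_0$ and $H_1$ respectively). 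The key point is that for each $i$ the Neyman--Pearson optimal test is a threshold on the same statistic, namely the LLR, which takes the form $-\tfrac{i^2\mu^2}{2}+ (\text{noise})$ under $H_0$ and $\tfrac{i^2\mu^2}{2}+(\text{noise})$ under $H_1$.

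Next I would identify the role of the slope-matching constant $C$ in the mixture definition. For the pair $P=\sum_i q_i \cN(-\tfrac{i^2\mu^2}{2}, i^2\mu^2)$ vs $Q=\sum_i q_i \cN(\tfrac{i^2\mu^2}{2}, i^2\mu^2)$ (with $q_i = p_i/(1-p_0)$), the Neyman--Pearson lemma says the optimal level-$\alpha$ test rejects when $\diff Q/\diff P \ge t$ for an appropriate threshold $t$, and the slope of the tradeoff function at that point equals $-t$. Because each summand of $P$ and $Q$ shares the same variance $i^2\mu^2$ and opposite means, one checks that the event $\{\diff Q/\diff P \ge t\}$ decomposes coordinatewise: conditional on the mixture component being $i$, it is exactly the event that the $i$-th Gaussian LLR exceeds a component-specific threshold, and crucially all these component thresholds correspond to the \emph{same} tradeoff-function slope $C=-t$. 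This is precisely the slope-matching condition $C\in\partial f_i(\alpha_i)$ in Definition \ref{def:mix_tradeoff}. Writing $\alpha_i$ for the type I error and $f_i(\alpha_i)$ for the type II error of the $i$-th component test at slope $C$, the overall type I error of the NP test is $\sum_i q_i \alpha_i$ and the overall type II error is $\sum_i q_i f_i(\alpha_i)$ — which is exactly the value $\mathrm{mix}(\{(q_i,f_i)\})(\sum_i q_i\alpha_i)$. Varying $t$ (equivalently $C$) traces out the whole curve, and the boundary values $\alpha\in\{0,1\}$ match the definition's conventions.

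The main obstacle — and the step I would write out most carefully — is verifying that the likelihood-ratio event for the \emph{mixture} pair really does decompose into the per-component likelihood-ratio events with a common slope. This requires showing $\frac{\diff Q}{\diff P}(x)$ is a monotone function of a single sufficient statistic that, restricted to the effective support of component $i$, agrees with the $i$-th component likelihood ratio up to a common monotone reparametrization. Concretely, with $p_i(x)$ the density of $\cN(-i^2\mu^2/2, i^2\mu^2)$ and $q_i(x)=p_i(-x)$ its mirror, one has $q_i(x)/p_i(x)=e^{x}$ (a clean identity special to this centering: the log-density-ratio of $\cN(m,\sigma^2)$ vs $\cN(-m,\sigma^2)$ at $x$ equals $2mx/\sigma^2 = x$ when $m=i^2\mu^2/2$, $\sigma^2=i^2\mu^2$). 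Hence $\frac{\diff Q}{\diff P}(x)=\frac{\sum_i q_i p_i(x) e^{x}}{\sum_i q_i p_i(x)} = e^{x}$, so the mixture LLR is simply $x$ itself — a single threshold on $x$ is globally Neyman--Pearson optimal. Thresholding $x$ at level $c$ gives type I error $\sum_i q_i\bar\Phi\!\big(\tfrac{c+i^2\mu^2/2}{i\mu}\big)$ and type II error $\sum_i q_i\Phi\!\big(\tfrac{c-i^2\mu^2/2}{i\mu}\big)$; checking that the $i$-th terms here coincide with $(\alpha_i, f_i(\alpha_i))$ at the common slope $C=-e^{c}$ (using $G_{i\mu}'(\alpha)=-e^{-i\mu\Phi^{-1}(1-\alpha)+i^2\mu^2/2}$ and solving for the matching $\alpha_i$) closes the argument. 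Once this identity $\diff Q/\diff P = e^x$ is in hand, everything else is bookkeeping, so I would lead with it.
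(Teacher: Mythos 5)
Your proposal is correct and follows essentially the same route as the paper's proof: identify each $G_{i\mu}$ with the tradeoff of $\cN(-i^2\mu^2/2,i^2\mu^2)$ versus $\cN(i^2\mu^2/2,i^2\mu^2)$, observe that the slope-matching condition $C\in\partial f_i(\alpha_i)$ corresponds to a common threshold $\log(-C)$ on $x$ across all components, and invoke Neyman--Pearson via the identity that the mixture log-likelihood ratio equals $\pm x$. The only blemish is a sign slip in your parenthetical formula for $G_{i\mu}'(\alpha)$ (it should be $-\exp\bigl(i\mu\Phi^{-1}(1-\alpha)-i^2\mu^2/2\bigr)$), which does not affect the argument.
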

\begin{proof}[Proof of Lemma \ref{lem:our_boot_gauss_mixpart}]
First, we know that $f_i(\alpha)=G_{i\mu}(\alpha)=\Phi(\Phi^{-1}(1-\alpha)-i\mu)$ where $\Phi$ is the CDF of the standard normal distribution.
We first obtain the subdifferential of $f_i$: $\frac{\diff f_i(\alpha_i)}{\alpha_i}=-\mathrm{exp}({-{i^2\mu^2}/{2}+i\mu\Phi^{-1}(1-\alpha_i)})$. We let the type I error and type II error in $f_i$ be $\alpha_i$ and $\beta_i$, and we have $f_i(\alpha_i)=\beta_i$. We let $\frac{\diff f_i(\alpha_i)}{\alpha_i}=C$. Then we have $\alpha_i = 1-\Phi\l({\log(-C)}/{(i\mu)}+{i\mu}/{2}\r),~\beta_i = \Phi\l({\log(-C)}/{(i\mu)}-{i\mu}/{2}\r)$.
This setting of $(\alpha_i,\beta_i)$ can also be achieved by using the rejection rule $\phi(x)=I_{x\geq \log(-C)}$ for the hypothesis testing between $H_0: x\sim \cN\l(-\frac{i^2\mu^2}{2}, i^2\mu^2\r)$ and $H_1: x\sim \cN\l(\frac{i^2\mu^2}{2}, i^2\mu^2\r)$.

For $f_>=\mathrm{mix}(\{\frac{p_i}{1-p_0},f_i\}_{i=1}^n)$, we let $f_>(\alpha)=\beta$ where $\alpha=\sum_{i=1}^n \frac{p_i}{1-p_0} \alpha_i$, $\beta=\sum_{i=1}^n \frac{p_i}{1-p_0} \beta_i$, $\beta_i=f_i(\alpha_i)$, and $\frac{\diff f_i(\alpha_i)}{\alpha_i}=C$. Then $\alpha$ and $\beta$ as type I error and type II error can be achieved by using the rejection rule $\phi(x)=I_{\{x\geq \log(-C)\}}$ for the hypothesis testing between $H_0: x\sim \sum_{i=1}^n \frac{p_i}{1-p_0} \cN\l(-\frac{i^2\mu^2}{2}, i^2\mu^2\r)$ and $H_1: x\sim \sum_{i=1}^n \frac{p_i}{1-p_0}\cN\l(\frac{i^2\mu^2}{2}, i^2\mu^2\r)$. 

{We let $h_1(x)=\sum_{i=1}^n \frac{p_i}{1-p_0}\frac{1}{i\mu_B}\phi(\frac{x}{i\mu_B}+\frac{i\mu_B}{2})$ and  $h_2(x)=\sum_{i=1}^n\frac{p_i}{1-p_0}\frac{1}{i\mu_B}\phi(\frac{x}{i\mu_B}-\frac{i\mu_B}{2})$ be the density functions of $\sum_{i=1}^n \frac{p_i}{1-p_0} \cN\l(-\frac{i^2\mu^2}{2}, i^2\mu^2\r)$ and $\sum_{i=1}^n \frac{p_i}{1-p_0}\cN\l(\frac{i^2\mu^2}{2}, i^2\mu^2\r)$ respectively. We have $\log\l(\frac{h_1(x)}{h_2(x)}\r)=\log(\ee^x)=x$.} Therefore, by the Neyman-Pearson Lemma, the most powerful rejection rule for the test between $H_0$ and $H_1$ is $\phi^*(x)=I_{\{x\geq \lambda\}}$ where $\lambda$ is a constant. This aligns with our previous rejection rule $\phi(x)=I_{\{x\geq \log(-C)\}}$. 
Therefore, $f_>$ is the tradeoff function between $\sum_{i=1}^n \frac{p_i}{1-p_0}\cN\l(-\frac{i^2\mu^2}{2}, i^2\mu^2\r)$ and $\sum_{i=1}^n \frac{p_i}{1-p_0}\cN\l(\frac{i^2\mu^2}{2}, i^2\mu^2\r)$.
\end{proof}
\begin{proof}[Proof of Theorem \ref{thm:boot_comp}]
Let $n$ be the sample size for the bootstrap resampling.
From the result of Corollary \ref{cor:our_boot_gauss} and Lemma \ref{lem:our_boot_gauss_mixpart}, the tradeoff function is $f_{Bi,\mathtt{boot}}=C_{1-p_0}(f_{>})$ where $p_0=(1-1/n)^n$ and $f_{>}$ is the tradeoff function between two Gaussian mixtures, $\sum_{i=1}^n \frac{p_i}{1-p_0}\cN(\mu=-\frac{i^2\mu_B^2}{2}, i^2\mu_B^2)$ vs $\sum_{i=1}^n \frac{p_i}{1-p_0}\cN(\mu=\frac{i^2\mu_B^2}{2}, i^2\mu_B^2)$ where $p_i={n\choose i}(1/n)^i(1-1/n)^{n-i}$.
{We let $h_1(x)=\sum_{i=1}^n \frac{p_i}{1-p_0}\frac{1}{i\mu_B}\phi(\frac{x}{i\mu_B}+\frac{i\mu_B}{2})$ and  $h_2(x)=\sum_{i=1}^n\frac{p_i}{1-p_0}\frac{1}{i\mu_B}\phi(\frac{x}{i\mu_B}-\frac{i\mu_B}{2})$. From the proof of Lemma \ref{lem:our_boot_gauss_mixpart}, we can parameterize the tradeoff function $f_>(\alpha_C) = \beta_C$ using $C\in(-\infty, \infty)$ with $\alpha_C=\int_{C}^{\infty} h_1(x) \diff x$ and $\beta_C=\int_{-\infty}^{C} h_2(x) \diff x$. We have $\frac{\diff \alpha_C}{\diff C}=-h_1(C)$, $\frac{\diff \beta_C}{\diff C}=h_2(C)$, $\frac{h_1(C)}{h_2(C)} = \ee^{-C}$, $\frac{h_1(0)}{h_2(0)} = 1$, $\alpha_{-\infty}=\beta_{\infty} = 1$, $\alpha_{\infty}=\beta_{-\infty} = 0$, $\alpha_0=\beta_0$, and $f'(\alpha_C)=\frac{{\diff \beta_C}/{\diff C}}{{\diff \alpha_C}/{\diff C}} = -\ee^{C}$. we can transform the result in Lemma \ref{lem:functionals2} to
	\begin{align*}
		&\kl(f_>) = \int_{\alpha_\infty}^{\alpha_0}(\ee^C-1) C \diff \alpha_C = \int_{0}^{\infty}(\ee^C-1) C h_1(C)\diff C, \\
		&\kappa_2(f_>)= \int_{\alpha_\infty}^{\alpha_0}(\ee^C+1)C^2\diff \alpha_C = \int_{0}^{\infty}(\ee^C+1) C^2 h_1(C)\diff C,  \\
		&\bar{\kappa}_3(f_>)=\int_{\alpha_\infty}^{\alpha_0}(|C+\kl(f)|^3+\ee^C|C-\kl(f)|^3)\diff \alpha_C = \int_{0}^{\infty} (|C+\kl(f)|^3+\ee^C|C-\kl(f)|^3)\diff C, \\
		&\kappa_3(f_>)=\int_{\alpha_\infty}^{\alpha_0} (\ee^C+1) C^3\diff \alpha_C = \int_{0}^{\infty}(\ee^C+1) C^3 h_1(C)\diff C. \\
		&\kl(C_{1-p_0}(f_>)) = (1-p_0)\int_{0}^{\infty}(\ee^C-1)\log \big(1+(1-p_0)(\ee^C-1)\big) h_1(C)\diff C, \\
		&\kappa_2(C_{1-p_0}(f_>))= \int_{0}^{\infty}\big(2+(1-p_0)(\ee^C-1)\big)\big[\log \big(1+(1-p_0)(\ee^C-1)\big)\big]^2 h_1(C)\diff C, \\
		&\kappa_3(C_{1-p_0}(f_>))=\int_{0}^{\infty}\big(2+(1-p_0)(\ee^C-1)\big)\big[\log \big(1+(1-p_0)(\ee^C-1)\big)\big]^3 h_1(C)\diff C.
	\end{align*}

Now we consider $\mu_B\rightarrow 0$. Since $h_1(x)=\sum_{i=1}^n \frac{p_i}{1-p_0}\frac{1}{i\mu_B}\phi(\frac{x}{i\mu_B}+\frac{i\mu_B}{2})$,
$$
\begin{aligned}
\kl(f_>) &= \sum_{i=1}^n \frac{p_i}{1-p_0} \int_{0}^{\infty}(\ee^x-1) \frac{x}{i\mu_B} \phi\l(\frac{x}{i\mu_B}+\frac{i\mu_B}{2}\r)\diff x \\
&= \sum_{i=1}^n \frac{p_i}{1-p_0} \int_{0}^{\infty}(\ee^x-1) \frac{x}{i\mu_B}  \frac{1}{\sqrt{2\pi}}\ee^{-\frac{(\frac{x}{i\mu_B}+\frac{i\mu_B}{2})^2}{2}}\diff x\\
&= \sum_{i=1}^n \frac{p_i}{1-p_0} \int_{0}^{\infty} (\ee^{i\mu_B y}-1) (i\mu_B y) \frac{1}{\sqrt{2\pi}}\ee^{-\frac{(y+\frac{i\mu_B}{2})^2}{2}}\diff y \quad (\mathrm{let~} y=\frac{x}{i\mu_B}).
\end{aligned}
$$
Similarly, we have
$$
\begin{aligned}
\kl(C_{1-p_0}(f_>)) &= \sum_{i=1}^n p_i \int_{0}^{\infty} (\ee^{i\mu_B y}-1) \log\big(1+(1-p_0)(\ee^{i\mu_B y}-1)\big) \frac{1}{\sqrt{2\pi}}\ee^{-\frac{(\frac{y}{i}+\frac{i\mu_B}{2})^2}{2}}\diff y.
\end{aligned}
$$
Now we have $\lim_{\mu_B\rightarrow0}\frac{\kl(C_{1-p_0}(f_>))}{\kl(f_>)} = (1-p_0)^2$ from the two following facts
$$
\begin{aligned}
\lim_{\mu_B\rightarrow0}\frac{\int_{0}^{\infty} \ee^{i\mu_B y} \log\big(1+(1-p_0)(\ee^{i\mu_B y}-1)\big) \frac{1}{\sqrt{2\pi}}\ee^{-\frac{(y+\frac{i\mu_B}{2})^2}{2}}\diff y}{\int_{0}^{\infty} \ee^{i\mu_B y} (i\mu_B y) \frac{1}{\sqrt{2\pi}}\ee^{-\frac{(y+\frac{i\mu_B}{2})^2}{2}}\diff y} &= 1-p_0, \\
\lim_{\mu_B\rightarrow0}\frac{\int_{0}^{\infty}  \log\big(1+(1-p_0)(\ee^{i\mu_B y}-1)\big) \frac{1}{\sqrt{2\pi}}\ee^{-\frac{(y+\frac{i\mu_B}{2})^2}{2}}\diff y}{\int_{0}^{\infty} (i\mu_B y) \frac{1}{\sqrt{2\pi}}\ee^{-\frac{(y+\frac{i\mu_B}{2})^2}{2}}\diff y} &= 1-p_0
\end{aligned}
$$
which are obtained using L'Hospital's rule and the Leibniz integral rule (i.e., the interchange of the integral and partial differential operators).

Similarly, $\lim_{\mu_B\rightarrow0}\frac{\kappa_2(C_{1-p_0}(f_>))}{\kappa_2(f_>)} = (1-p_0)^2$ and $\lim_{\mu_B\rightarrow0}\frac{\kappa_3(C_{1-p_0}(f_>))}{\kappa_3(f_>)} = (1-p_0)^3$.

We re-parameterize the definition of $\kl(f_>)$ in Theorem \ref{thm:clt} with $C$ and calculate it using $h_1(x)=\sum_{i=1}^n \frac{p_i}{1-p_0}\frac{1}{i\mu_B} \phi(\frac{x}{i\mu_B}+\frac{i\mu_B}{2})$:
\begin{align*}
	\kl(f_>) &= -\int_{\alpha_\infty}^{\alpha_{-\infty}} C \diff \alpha_C = \int_{-\infty}^{\infty} C h_1(C)\diff C 
=\sum_{i=1}^n \frac{p_i}{1-p_0}\int_{-\infty}^{\infty} \frac{-x}{i\mu_B} \phi(\frac{x}{i\mu_B}+\frac{i\mu_B}{2})\diff x \\
&=\sum_{i=1}^n \frac{p_i}{1-p_0}\int_{-\infty}^{\infty} (-i\mu_B y) \phi(y+\frac{i\mu_B}{2})\diff y = \sum_{i=1}^n \frac{p_i}{1-p_0} \frac{i^2\mu_B^2}{2} = \frac{(2-1/n)\mu_B^2}{2(1-(1-1/n)^n)}.
\end{align*}
Similarly, for any $n=1,2,\ldots$, and $\mu_B\rightarrow 0$, we have 
\begin{align*}
\kappa_2(f_{>})&=\sum_{i=1}^n \frac{p_i}{1-p_0}\int_{-\infty}^{\infty} \frac{x^2}{i\mu_B} \phi(\frac{x}{i\mu_B}+\frac{i\mu_B}{2})\diff x = \sum_{i=1}^n \frac{p_i}{1-p_0} \l(\frac{i^4\mu_B^4}{4} +i^2\mu_B^2\r) \\
&=\frac{(2-1/n)\mu_B^2}{1-(1-1/n)^n}+\Theta(\mu_B^4), \\
\kappa_3(f_{>})&=\sum_{i=1}^n \frac{p_i}{1-p_0}\int_{-\infty}^{\infty} \frac{|x|^3}{i\mu_B} \phi(\frac{x}{i\mu_B}+\frac{i\mu_B}{2})\diff x = \sum_{i=1}^n \frac{p_i}{1-p_0}(i\mu_B)^3 \int_{-\infty}^\infty \l|z-\frac{i\mu_B}{2}\r|^3 \phi(z) \diff z \\
&\leq \sum_{i=1}^n \frac{p_i(i\mu_B)^3}{1-p_0} \int_{-\infty}^\infty \l(|z|^3+3\frac{i\mu_B}{2}|z|^2+3\l(\frac{i\mu_B}{2}\r)|z|+\l(\frac{i\mu_B}{2}\r)^3\r) \phi(z) \diff z \in \Theta(\mu_B^3).
\end{align*}

By Theorem \ref{thm:clt}, we have $\lim_{B\rightarrow\infty} f_{\mathtt{boot}}^{\otimes B}=G_{2K/s}$  if 
\begin{align*}
\lim_{B\rightarrow\infty} \sum_{i=1}^B\kl(f_{Bi,\mathtt{boot}})&=K, \quad 
\lim_{B\rightarrow\infty} \max_{1\leq i\leq B} \kl(f_{Bi,\mathtt{boot}})=0, \\
\lim_{B\rightarrow\infty} \sum_{i=1}^B\kappa_2(f_{Bi,\mathtt{boot}})&=s^2, \quad
\lim_{B\rightarrow\infty} \sum_{i=1}^B\kappa_3(f_{Bi,\mathtt{boot}})=0.
\end{align*}
Since we assume that $\sqrt{B}\mu_B \rightarrow \mu'$, we have $\mu_B\in\Theta(B^{-\frac{1}{2}})$ and
\begin{align*}
\lim_{B\rightarrow\infty} \sum_{i=1}^B\kl(f_{Bi,\mathtt{boot}})&=\lim_{B\rightarrow\infty} B(1-p_0)^2 \frac{(2-1/n)\mu_B^2}{2(1-(1-1/n)^n)} = (1-p_0)^2 \frac{(2-1/n)(\mu')^2}{2(1-(1-1/n)^n)}, \\
 \lim_{B\rightarrow\infty}\max_{1\leq i\leq B} \kl(f_{Bi,\mathtt{boot}})&=\lim_{B\rightarrow\infty} (1-p_0)^2 \frac{(2-1/n)\mu_B^2}{2(1-(1-1/n)^n)}=0, \\
\lim_{B\rightarrow\infty} \sum_{i=1}^B\kappa_2(f_{Bi,\mathtt{boot}})&=\lim_{B\rightarrow\infty} B(1-p_0)^2 \l(\frac{(2-1/n)\mu_B^2}{1-(1-1/n)^n}+\Theta(\mu_B^4)\r) \\
&= (1-p_0)^2 \frac{(2-1/n)(\mu')^2}{1-(1-1/n)^n}, \\ 
\lim_{B\rightarrow\infty} \sum_{i=1}^B\kappa_3(f_{Bi,\mathtt{boot}})&=\lim_{B\rightarrow\infty} B(1-p_0)^3\Theta(\mu_B^3)=0.
\end{align*}
Therefore,
% \begin{align*}
    $\frac{2K}{s} = \frac{2(1-p_0)^2 \frac{(2-1/n)(\mu')^2}{2(1-(1-1/n)^n)}}{\sqrt{(1-p_0)^2 \frac{(2-1/n)(\mu')^2}{1-(1-1/n)^n}}} = \sqrt{\l(2-\frac{1}{n}\r)\l(1-\l(1-\frac{1}{n}\r)^n\r)} \mu' < (\sqrt{2-2/\ee}) \mu'.$
% \end{align*}
}

This bound holds for $n\in\{1,2,3,\cdots\}$ since the function is monotonically increasing with respect to $n$ and $\lim_{n\rightarrow\infty}\sqrt{\l(2-\frac{1}{n}\r)\l(1-\l(1-\frac{1}{n}\r)^n\r)} =\sqrt{2-2/\ee}=1.12438\ldots$ 
\end{proof}
% Our comment of Theorem \ref{thm:boot_comp}, the privacy guarantee of the composition result above can be seen as running $(\sqrt{2-2/\ee})\mu_B$-GDP mechanism on the original dataset (not on the bootstrap sample) for $B$ times, is based on the following Theorem. 
% \begin{theorem}[Composition of $\mu$-GDP; Corollary 2 \citep{dong2021gaussian}]
% The $n$-fold composition of $\mu_i$-GDP mechanisms is $\sqrt{\mu_1^2+\ldots+\mu_n^2}$-GDP.
% \end{theorem}

{\section{Proofs for Section \ref{sec:inference}}
In this section, we provide the proofs for the lemmas and theorems in Section \ref{sec:inference}.

\subsection{Proofs for Section \ref{sec:population_mean}}
To prove Lemma \ref{lem:point-estimates}, we use Lemma \ref{lem:delta-method} by Guillaume F.\footnote{\url{https://math.stackexchange.com/questions/2793833}} and include their proof here.
\begin{lemma}[Generalized delta method]\label{lem:delta-method}
    Let $\{X_n\}$ and $\{Y_n\}$ be sequences of random vectors taking values in $\mathbb{R}^k$ and let $f: \mathbb{R}^k \to \mathbb{R}^s$ be a transformation. Assume that 
    \begin{enumerate}
        \item[(A1)] $\{a_n\}$ is a real-valued sequence that $a_n > 0$, $a_n\to \infty$ and $a_n(\|X_n - Y_n\|_2) = O_p(1)$, 
        \item[(A2)] For any $\ep > 0$, there exists a set $S$ where $\limsup\limits_{n\to\infty} P(Y_n \notin S) < \ep$ and $\nabla f(y)$ is uniformly continuous for $y\in S$.
    \end{enumerate}
    Then we have $\|a_n[f(X_n) - f(Y_n)] - a_n \nabla f(Y_n^*)^\intercal (X_n - Y_n)\|_2 \overset{p}{\to} 0$ where $Y_n^* = Y_n$ if $f$ is differentiable at $Y_n$ and $Y_n^*$ is an arbitrary value in $S$ otherwise.
\end{lemma}
\begin{proof}[Proof of Lemma \ref{lem:delta-method}]    
    Define $A_n = a_n \|f(X_n) - f(Y_n) - \nabla f(Y_n^*)^\intercal (X_n - Y_n)\|_2$. 
    Let $R(h; y) = \frac{\|f(y+h) - f(y) - \nabla f(y)^\intercal h\|_2}{\|h\|_2}$.
    By the mean value theorem, we have $R(h; y) = \frac{\|[\int_0^1 \nabla f(y+th)\mathrm{d}t - \nabla f(y)]^\intercal h\|_2}{\|h\|_2} \leq \|\int_0^1 \nabla f(y+th)\mathrm{d}t - \nabla f(y)\|_2 \leq \max_{t\in[0,1]}\|\nabla f(y+th) - \nabla f(y)\|_2$.
    From (A2) where $\nabla f$ is uniformly continuous, we have $\lim_{\|h\|_2\to 0} (\sup_{y\in S} R(h; y)) = 0$. 
    
    Given $\ep > 0$, we can find $S$ and $N_y$ such that $P(Y_n \notin S) < \ep/2$ for $n > N_y$ by (A2). For $Y_n \in S$, we can write $A_n = a_n\|X_n - Y_n\|_2 R(X_n - Y_n; Y_n) $. 
    From (A1), we know $\|X_n - Y_n\|_2 \overset{p}{\to} 0$.
    Therefore, $a_n\|X_n - Y_n\|_2 (\sup_{y\in S} R(X_n - Y_n; y)) \overset{p}{\to} 0$. Therefore, for any $\delta > 0$ and $\ep > 0$, we can find $N_b$ such that $P(a_n\|X_n - Y_n\|_2 (\sup_{y\in S} R(X_n - Y_n; y)) > \delta) \leq \ep/2$ when $n > N_b$. Therefore, for any $\ep$ and $\delta$, we can let $N=\max(N_y, N_b)$, and for $n > N$, we have 
    $P(A_n > \delta) \leq P(Y_n \notin S) + P(a_n\|X_n - Y_n\|_2 (\sup_{y\in S} R(X_n - Y_n; y)) > \delta) \leq \ep.$
\end{proof}

\begin{proof}[Proof of Lemma \ref{lem:point-estimates}]
For part 1, we have 
    \begin{align}
        \mathbb{E}[\tilde m_{g,B} - \theta]^2 &= \mathbb{E}\l[\mathbb{E}\l[\l(\frac{1}{B} \sum_{j=1}^B (g(D_j) + \xi_j) - g(D) + g(D) - \theta\r)^2 \Bigg| D\r]\r] \\
        &= \mathbb{E}\l[\mathbb{E}\l[\l(\frac{1}{B} \sum_{j=1}^B (g(D_j) - g(D)) + \frac{1}{B}\sum_{j=1}^B \xi_j\r)^2 \Bigg| D\r]\r]  + \mathbb{E}[\mathbb{E}[(g(D) - \theta)^2) | D]] \label{eq:rate1} \\
        &= \mathbb{E}\l[\mathbb{E}\l[\l(\frac{1}{B} \sum_{j=1}^B (g(D_j) - g(D)) \r)^2+ \l(\frac{1}{B}\sum_{j=1}^B \xi_j\r)^2 \Bigg| D\r] \r] + \frac{1}{n}\mathbb{E}(x_i-\theta)^2 \label{eq:rate2}\\
        &= \mathbb{E}\l[\mathbb{E}\l[\frac{1}{B} \sum_{j=1}^B\l( g(D_j) - g(D) \r)^2 \Bigg| D\r] \r] + \frac{\sigma_e^2}{B}  + \frac{1}{n}\mathbb{E}(x_i-\theta)^2 \\
        &= \frac{1}{B}\mathbb{E}\l[\mathbb{E}\l[\l( g(D_j) - g(D) \r)^2 \big| D\r]\r] + \frac{\sigma_e^2}{B}  + \frac{1}{n}\mathbb{E}(x_i-\theta)^2 \label{eq:rate3}\\
        &= \frac{1}{B}\mathbb{E}\l[\frac{1}{n} \l(\frac{1}{n}\sum_{i=1}^n\l(x_i-\frac{1}{n}\sum_{i=1}^n x_i\r)^2\r)\r] + \frac{\sigma_e^2}{B}  + \frac{1}{n}\mathbb{E}(x_i-\theta)^2 \label{eq:rate4}\\
        &= \frac{1}{B}\l(\frac{n-1}{n^2}\mathbb{E}(x_i-\theta)^2\r) + \frac{2-2/e}{\mu^2 n^2} + \frac{1}{n}\mathbb{E}(x_i-\theta)^2
    \end{align}
    where Equation (\ref{eq:rate1}) is due to $\mathbb{E}\l[\frac{1}{B} \sum_{j=1}^B (g(D_j) - g(D)) + \frac{1}{B}\sum_{j=1}^B \xi_j \big| D\r] = 0$ and $(g(D) - \theta | D) = g(D) - \theta$, Equation (\ref{eq:rate2}) is from $\mathbb{E}[\xi_j]=0$ and the independence between $\xi_j$ and $D_j$, Equation (\ref{eq:rate3}) is because $D_1,\ldots,D_B$ are independent given $D$, and Equation (\ref{eq:rate4}) is by viewing $D$ as the population and $D_j$ as a sample from $D$ to obtain $\mathbb{E}[(g(D_j) - g(D))^2 | D ] = \frac{1}{n}\Var(D)$.

    For part 2, we use the techniques in the proofs of \citet[Theorem 2.2]{beran1997diagnosing} and \citet[Theorem 2]{awan2025one}.

    In the Lindeberg-Feller central limit theorem \citep[Proposition 2.27]{van2000asymptotic}, we let $Y_{n,i}=\frac{1}{\sqrt{n}}\sum_{k=1}^{d-1} I(x_i=s_k) e_k$ where $e_k$ is the unit vector with $k$-th entry being 1, then $\sqrt{n}(\hat\eta-\eta)=\sum_{i=1}^n (Y_{n,i}-\EE Y_{n,i}) \overset{d}{\to} \cN(0, \Sigma)$ since $\sum_{i=1}^n \Cov(Y_{n,i}) = \Sigma$ where $\Sigma = \diag(\eta) - \eta \eta^\intercal$ \citep[Example 12.7]{severini2005elements} and $\sum_{i=1}^{n} \EE\|Y_{n,i}\|^2 I(\|Y_{n,i}\| \geq \epsilon) \rightarrow 0$ for every $\epsilon > 0$ since $\|Y_{n,i}\|\leq \frac{1}{\sqrt{n}}$.
    
    \citet{beran1997diagnosing} showed that a sufficient condition for the bootstrap distribution converging to the sampling distribution is the local asymptotic equivariance (LAE) of $\hat\eta$, i.e., for the distribution of $\sqrt{n}(\hat\eta-\eta)$ denoted by $H_n(\eta)$, there exists $H(\eta)$ such that $H_n(\eta + \frac{h_n}{\sqrt{n}}) \rightarrow H(\eta)$ for every converging sequence $h_n\rightarrow h$ where $h_n, h \in \mathbb{R}^{d-1}$. For our setup of $\hat\eta$, we have $\|Y_{n,i}\|\leq \frac{1}{\sqrt{n}}$ and $\sum_{i=1}^n \Cov(Y_{n,i}) =  \diag(\eta+ \frac{h_n}{\sqrt{n}}) - (\eta+ \frac{h_n}{\sqrt{n}}) (\eta+ \frac{h_n}{\sqrt{n}})^\intercal \rightarrow \Sigma$. Therefore, the LAE condition holds with $H(\eta)=\cN(0, \Sigma)$ by the Lindeberg-Feller central limit theorem.
    
    Using the Skorohod Representation Theorem \citep[Theorem 1.6.3]{serfling2009approximation}, there exist $U_0\sim \mathrm{Uniform}([0,1])$ and measurable functions $A, A_n: [0,1] \rightarrow \mathbb{R}^d$ for $n\in\mathbb{N}$ such that $A_n(U_0) \sim H_n(\eta)$ for all $n\in\mathbb{N}$, $A(U_0) \sim \cN(0, \Sigma)$, and $A_n(U_0) \overset{a.s.}{\to} A(U_0)$. In other words, $A_n(U_0) \overset{d}{=} \sqrt{n}(\hat\eta - \eta)$ and $\hat\eta \overset{d}{=} \eta + \frac{1}{\sqrt{n}}A_n(U_0)$. Let $S=\{u_0\in[0,1] | A_n(u_0) \rightarrow A(u_0)\}$. We know $\mathbb{P}(U_0 \in S)=1$ and we can use $A_n(u_0)$ and $A(u_0)$ to replace $h_n$ and $h$ for $u_0 \in S$, respectively, in the LAE condition: For $j=1,\ldots, B$, let $\hat\eta^*_j$ be the empirical distribution parameter of the $j$-th bootstrap dataset $D_j$, and we have $\sqrt{n}(\hat\eta^*_j - \hat\eta) \big| (\hat\eta = \eta + A_n(u_0)/\sqrt{n}) \sim H_n(\eta + \frac{1}{\sqrt{n}}A_n(u_0)) \rightarrow \cN(0, \Sigma)$. Again, by the Skorohod Representation Theorem, we define $U_j \iid \mathrm{Uniform}([0,1])$ independent of $U_0$, and for all $u_0\in S$, there exist measurable functions $C, C_{n,A_n(u_0)}: [0,1] \rightarrow \mathbb{R}^d$ for all $n \in \mathbb{N}$ such that $C_{n,A_n(u_0)}(U_j) \sim H_n(\eta + \frac{1}{\sqrt{n}}A_n(u_0))$ for all $n \in \mathbb{N}$, $C(U_j) \sim \cN(0, \Sigma)$, and $C_{n,A_n(u_0)}(U_j) \overset{a.s.}{\to} C(U_j)$. 

    From the above construction of $A_n(U_0)$ and $C_{n,A_n(u_0)}(U_j)$, $j=1,\ldots, B$, we have the joint distribution
    \begin{equation}\label{eq:initial_normal}
    \begin{pmatrix}
\sqrt{n}(\hat\eta - \eta) \\
\sqrt{n}(\hat\eta^*_1 - \hat\eta) \\
\cdots \\
\sqrt{n}(\hat\eta^*_B - \hat\eta) 
\end{pmatrix} \overset{d}{=} 
    \begin{pmatrix}
A_n(U_0) \\
C_{n,A_n(U_0)}(U_1) \\
\cdots \\
C_{n,A_n(U_0)}(U_B)
\end{pmatrix} \overset{a.s.}{\to} 
    \begin{pmatrix}
A(U_0) \\
C(U_1) \\
\cdots \\
C(U_B)
\end{pmatrix} \sim
\cN\l(0,
    \begin{pmatrix}
\Sigma & & & \\
& \Sigma &  & \\
&  &  \ddots & \\
& & & \Sigma  
\end{pmatrix}\r)
    \end{equation}
    where we have $(B+1)$ sub-matrices $\Sigma \in \mathbb{R}^{(d-1)\times(d-1)}$ on the block diagonal of the covariance matrix for the multivariate normal distribution and zeros elsewhere.

    We define a function $\tilde T: \mathbb{R}^{(d-1)(B+1)} \to \mathbb{R}^{(B+1)}$ that \begin{equation*}
    \tilde \eta = \begin{pmatrix}
\hat\eta  \\
\hat\eta^*_1  \\
\cdots \\
\hat\eta^*_B  
\end{pmatrix}, \quad
    \tilde T(\tilde \eta) {=} 
    \begin{pmatrix}
T(\hat\eta)  \\
T(\hat\eta^*_1)  \\
\cdots \\
T(\hat\eta^*_B)  
\end{pmatrix}, \quad\mathrm{and}\quad \frac{\partial \tilde T}{\partial \tilde\eta}=
    \begin{pmatrix}
\frac{\partial T}{\partial \eta}\big|_{\eta=\hat\eta} & & & \\
& \frac{\partial T}{\partial \eta}\big|_{\eta=\hat\eta_1^*} &  & \\
&  &  \ddots & \\
& & & \frac{\partial T}{\partial \eta}\big|_{\eta=\hat\eta_B^*}
\end{pmatrix}.
    \end{equation*}
    where we have $(B+1)$ sub-matrices of $\frac{\partial T}{\partial \eta}\in \mathbb{R}^{(d-1)\times 1}$ on the block diagonal of $\frac{\partial \tilde T}{\partial \tilde\eta}$ and zeros elsewhere.
    
    In Lemma \ref{lem:delta-method}, we let $f=\tilde T$, $X_n=\tilde \eta$, $Y_n=(\eta, \hat\eta, \ldots, \hat\eta)^\intercal \in \mathbb{R}^{(B+1)}$, $a_n = \sqrt{n}$, and $Y=(\eta, \eta. \ldots, \eta)^\intercal \in \mathbb{R}^{(B+1)}$. We know that (A1) is satisfied by Equation (\ref{eq:initial_normal}). As $\|Y_n - Y\|_2 \overset{p}{\to} 0$ from Equation (\ref{eq:initial_normal}), for any $\ep > 0$, we can find $N_y > 0$ and $\delta > 0$ such that $P(Y_n \notin B) < \ep$ for $n > N_y$ where $B=\{Y_n | \|Y_n - Y\|_2 \leq \delta\}$. We also know that $\nabla f$ is uniformly continuous on $B$ because $\nabla f$ is continuous and $B$ is compact. Therefore, (A2) is also satisfied. Using Lemma \ref{lem:delta-method}, we have
    $\|a_n[f(X_n) - f(Y_n)] - a_n \nabla f(Y_n)^\intercal (X_n - Y_n)\|_2 \overset{p}{\to} 0$.
    Furthermore, since $\nabla f$ is continuous and $\|Y_n - Y\|_2 \overset{p}{\to} 0$, we have $\|\nabla f(Y_n) - \nabla f(Y)\|_2 \overset{p}{\to} 0$, and $\|a_n[f(X_n) - f(Y_n)] - a_n \nabla f(Y)^\intercal (X_n - Y_n)\|_2 \overset{p}{\to} 0$. In other words, we have
    $$\begin{pmatrix}
\sqrt{n}(T(\hat\eta) - T(\eta)) \\
\sqrt{n}(T(\hat\eta^*_1) - T(\hat\eta)) \\
\cdots \\
\sqrt{n}(T(\hat\eta^*_B) - T(\hat\eta)) 
\end{pmatrix}  \overset{d}{\to} 
\cN\l(0, ~(n\sigma_g^2) I_{(B+1)}\r).
    $$ 
    Since the $\xi_j$ are independent of $T(\hat\eta^*_j)$ and $T(\hat\eta)$, we have 
    $$\begin{pmatrix}
\sqrt{n}(T(\hat\eta) - T(\eta)) \\
\sqrt{n}(T(\hat\eta^*_1) - T(\hat\eta) + \xi_1) \\
\cdots \\
\sqrt{n}(T(\hat\eta^*_B) - T(\hat\eta) + \xi_B) 
\end{pmatrix}  \overset{d}{\to} 
\cN\l(0,
    \begin{pmatrix}
n\sigma_g^2 & & & \\
& (n\sigma_g^2 + n\sigma_e^2) &  & \\
&  &  \ddots & \\
& & & (n\sigma_g^2 + n\sigma_e^2)  
\end{pmatrix}\r)
    $$ 
    where the off-diagonal values in the covariance matrix are zeros.
    
    Using $\tilde m_{g,B} = \frac{1}{B} \sum_{j=1}^B \tilde g(D_j) = \frac{1}{B} \sum_{j=1}^B (T(\hat\eta^*_j) + \xi_j)$ and $\theta = T(\eta)$, we know
    $$
    \sqrt{n}(\tilde m_{g,B} - \theta) = \sqrt{n}(T(\hat\eta) - T(\eta)) + \frac{1}{B}\sum_{j=1}^B \sqrt{n}(T(\hat\eta^*_j) - T(\hat\eta) + \xi_j).
    $$
    Then, using the continuous mapping theorem and the joint distribution above, we know that $\sqrt{n}(\tilde m_{g,B} - \theta)$ converges to a normal random variable with variance $(n\sigma_g^2+\frac{n\sigma_g^2 + n\sigma_e^2}{B})$. Therefore, we have $\frac{\sqrt{n}(\tilde m_{g,B} - \theta)}{\sqrt{n\sigma_g^2+\frac{n}{B}(\sigma_g^2 + \sigma_e^2)}} \overset{d}{\to} \cN(0, 1)$.
    
    Similarly, from $\tilde s^2_{g,B} = \frac{1}{B-1} \sum_{j=1}^B (\tilde g(D_j)-\tilde m_{g,B})^2$, we  know
    $$
    n\tilde s^2_{g,B} = \frac{1}{B-1}\sum_{j=1}^B \l[\sqrt{n}(T(\hat\eta^*_j) - T(\hat\eta) + \xi_j) - \frac{1}{B}\sum_{j=1}^B \sqrt{n}(T(\hat\eta^*_j) - T(\hat\eta) + \xi_j)\r]^2.
    $$
    Using the continuous mapping theorem and the joint distribution above, $n\tilde s^2_{g,B}$ is the sample variance of $B$ random variables converging to $B$ i.i.d. normal random variables with mean 0 and variance $(n\sigma_g^2 + n\sigma_e^2)$. Therefore,
    we have $(B-1)\frac{n\tilde s^2_{g,B}}{n\sigma_g^2 + n\sigma_e^2} \overset{d}{\to} \chi^2_{B-1}.
    $
\end{proof}

\begin{proof}[Proof of Theorem \ref{thm:repro}]
    Let $r = \Phi^{-1}(1-\frac{\omega}{2}) \sqrt{\sigma_{g}^2 + \frac{\sigma_{g}^2 + \sigma_{e}^2}{B}}$. 
    
    We know 
    $\mathbb{P}\l(\hat{r}_n \geq r\r) = \mathbb{P}\l(\hat\sigma_{g}^2 \geq \sigma_g^2 \r) = \mathbb{P}\l(Y_n \geq \frac{c}{B-1}(\sigma_g^2 + \sigma_e^2) \r) \rightarrow (1-\alpha+\omega)$ and $\mathbb{P}\l(\theta \in \l[X_n - r,~ X_n + r\r]\r) \rightarrow (1 - \omega)$. Therefore,
    $$
    \begin{aligned}
    \lim_{n\rightarrow\infty}\mathbb{P}\l(\theta \in \l[X_n - \hat{r}_n,~ X_n + \hat{r}_n\r]\r) &\geq \l(\lim_{n\rightarrow\infty}\mathbb{P}\l(\theta \in \l[X_n - r,~ X_n + r\r]\r) \r) \l(\lim_{n\rightarrow\infty}\mathbb{P}\l(\hat{r}_n \geq r\r) \r)  \\
    & = (1-\omega)(1-\alpha+\omega) = 1-\alpha+\omega(\alpha-\omega) > 1-\alpha.    
    \end{aligned} 
    $$
\end{proof}

\begin{proof}[Proof of Proposition \ref{prop:repro-rate}]
    First, we note that the Cram\'er-Rao lower bound \citep[Theorem 3.3]{shao2003mathematical} indicates that $\Var(T(\hat\eta))\geq\sigma_g^2=\frac{1}{n}\l(\frac{\partial T}{\partial \eta}\r)^\intercal \Sigma \frac{\partial T}{\partial \eta}$ where $\Sigma = \diag(\eta) - \eta \eta^\intercal$ is equal to the inverse of the Fisher information matrix for the multinomial distribution.

    We will prove $\hat r_n \overset{a.s.}{\to} \Phi^{-1}(1-\frac{\alpha}{2})\sigma_g$.
    As $B = o(n^2)$, we have $\sigma_e^2 \rightarrow 0$ and $\sigma_g^2 \rightarrow 0$ when $n\rightarrow \infty$.
    By the Berry-Esseen theorem and the fact that the Chi-square distribution is the distribution of a sum of the squares of independent standard normal random variables, for $X\sim \chi^2_{B-1}$, we have $\l|\mathbb{P}\l(\frac{X - (B-1)}{\sqrt{2(B-1)}} \leq x\r) - \Phi(x)\r| = O(\frac{1}{\sqrt{B}})$ as $B\rightarrow \infty$. 
    
    Then we prove that $\frac{(B-1) - c_B}{B}$ converges to 0 where $c_B$ is the $(\alpha - \omega)$ quantile of the $\chi^2_{B-1}$ distribution. As $(\alpha-\omega) \rightarrow 0$, if there is $\Delta>0$ such that $\Delta <\frac{(B-1) - c_B}{B}$ for a subsequence of $\{c_B\}_{B=1}^{\infty}$, then $\alpha - \omega=\mathbb{P}(X \leq c_B) \leq \mathbb{P}(X < (B-1)-B\Delta ) = \mathbb{P}(\frac{X - (B-1)}{\sqrt{2(B-1)}} < -\frac{B\Delta}{\sqrt{2(B-1)}} ) \leq \Phi(-\frac{B\Delta}{\sqrt{2(B-1)}}) + O(\frac{1}{\sqrt{B}})$. From the Chernoff's bound, we know $\mathbb{P}(Z>c) \leq \ee^{-c^2/2}$ for $c>0$ and $Z \sim N(0,1)$, and we have $\Phi(-\frac{B\Delta}{\sqrt{2(B-1)}}) = o(\frac{1}{\sqrt{B}})$.
    % We follow Chernoff's bounding method to prove $\Phi(-\frac{B\Delta}{\sqrt{2(B-1)}}) = o(\frac{1}{\sqrt{B}})$. For $x>0$ and $Z \sim N(0,1)$, we know $\Phi(-x) = \mathbb{P}(Z > x) = \inf\limits_{t>0} \mathbb{P}(\ee^{tZ} > \ee^{tx}) \leq \inf\limits_{t>0} \frac{\EE(\ee^{tZ}) }{\ee^{tx}}$ by Markov's inequality, and $\inf\limits_{t>0} \frac{\EE(\ee^{tZ}) }{\ee^{tx}} = \inf\limits_{t>0} \ee^{\frac{1}{2}t^2 - tx} = \ee^{-\frac{1}{2}x^2}$. Therefore, we have $\Phi(-x) \leq \ee^{-\frac{1}{2}x^2}$ and $\Phi(-\frac{B\Delta}{\sqrt{2(B-1)}}) \leq \ee^{-\frac{B^2\Delta^2}{{4(B-1)}}} = o(\frac{1}{\sqrt{B}})$.
    This means $\alpha - \omega = O(B^{-\frac{1}{2}})$ which contradicts our assumption $B^{-\frac{1}{2}}=o(\alpha-\omega)$. Therefore, $\frac{(B-1) - c_B}{B}$ must converge to 0 which means $\frac{c_B}{B} \rightarrow 1$. 
    
    Let $\nu \sim \chi^2_{B-1}$. As $\chi^2_{B-1}$ is the sum of $(B-1)$ i.i.d. $\chi^2_1$ random variables, by the strong law of large numbers, we have $\frac{1}{B-1}\nu \overset{a.s.}{\to} 1$. As we have $(B-1)\frac{Y_n}{\sigma_g^2 + \sigma_e^2} \overset{d}{\to} \chi^2_{B-1}$, we also have $\l(\frac{Y_n - \sigma_e^2}{\sigma_g^2} - 1\r) \frac{\sigma_g^2}{\sigma_g^2 + \sigma_e^2} =\frac{Y_n }{\sigma_g^2 + \sigma_e^2} - 1 \overset{d}{\to} \l(\frac{1}{B-1}\nu -1\r) \overset{a.s.}{\to} 0$ as $B\rightarrow \infty$ and $n\rightarrow \infty$, which means $\frac{Y_n - \sigma_e^2}{\sigma_g^2} \overset{p}{\to} 1$.
    Using $\frac{c_B}{B} \rightarrow 1$, we have $\frac{\hat\sigma_g^2}{\sigma_g^2} \overset{p}{\to} 1$. 
    
    Then, using the definition of $\sigma_g^2$ and $\sigma_e^2$, we have $\frac{\sigma_{e}^2}{B\sigma_{g}^2}=\frac{1}{n}\frac{(2-2/e)}{\mu^2 \l(\frac{\partial T}{\partial \eta}\r)^\intercal \Sigma \frac{\partial T}{\partial \eta}} \to 0$. Therefore, $\frac{\sqrt{\hat\sigma_{g}^2 + \frac{1}{B}(\hat\sigma_{g}^2 + \sigma_{e}^2)}}{\sigma_g} = \sqrt{(1+ \frac{1}{B})\frac{\hat\sigma_{g}^2}{\sigma_{g}^2}  + \frac{\sigma_{e}^2}{B\sigma_{g}^2}}\overset{p}{\to} 1$. As we know $\omega \rightarrow \alpha$, we have $\hat r_n \overset{a.s.}{\to} \Phi^{-1}(1-\frac{\alpha}{2})\sigma_g$ when $n\rightarrow \infty$ and $B\rightarrow \infty$.
\end{proof}

}

\section{Privacy analysis of DP bootstrap with Gaussian mechanism}\label{sec:append_dpboostrap_gaussian}
In this section,
we first derive the curve for our lower bound on DP bootstrap with Gaussian mechanism in Figure \ref{fig:mix_gdp1}a; then
we show why the privacy analysis by \citet{brawner2018bootstrap} is incorrect;
Finally, we show why the PLDs by \citet{koskela2020computing} is incorrect.
\subsection{Our lower bound}
In this section, we explain how the curves in Figure \ref{fig:mix_gdp1}a were derived.

We evaluate our lower bound $C_{1-p_0}(f_>)$ based on Corollary \ref{cor:our_boot_gauss} and Lemma \ref{lem:our_boot_gauss_mixpart} and visualize it in Figure \ref{fig:mix_gdp1}a. From the proof of Lemma \ref{lem:our_boot_gauss_mixpart}, the tradeoff function $\beta=f_>(\alpha)$ is parameterized by $C\in(-\infty,0)$ where  
$\alpha=\sum_{i=1}^n \frac{p_i}{1-p_0} \alpha_i$, $\alpha_i = 1-\Phi\l({\log(-C)}/{(i\mu)}+{i\mu}/{2}\r)$, $\beta=\sum_{i=1}^n \frac{p_i}{1-p_0} \beta_i$, $\beta_i = \Phi\l({\log(-C)}/{(i\mu)}-{i\mu}/{2}\r)$;
Then we use Lemma \ref{lem:cpformat} to obtain $C_{1-p_0}(f_>)$.
% More details can be found in the proof of Lemma \ref{lem:our_boot_gauss_mixpart}.

In Figure \ref{fig:mix_gdp1}a, we also visualize the tradeoff functions for specific neighboring datasets.
{We let $D=(x_1,x_2, \ldots, x_n)$, and $\cM_G(D)=\frac{1}{n}\sum_{i=1}^n x_i + \xi$ where $\xi\sim\cN(0,1/(n\mu)^2)$.}
We study the tradeoff function between $\cM_G\circ\mathtt{boot}(D_1)$ and $\cM_G\circ\mathtt{boot}(D_2)$ where $D_1=(a,0,0,\ldots,0),\quad D_2=(a-1,0,0,\ldots,0), \quad |D_1|=|D_2|=n$.
The tradeoff function between $\cM_G(D_1)$ and $\cM_G(D_2)$ is $G_\mu$.
By the number of occurrences of $a$ and $1-a$, we have 
\begin{equation*}
\cM_G\circ\mathtt{boot}(D_1)\sim \sum_{i=0}^n p_i \cN\l(\frac{ia}{n}, \frac{1}{(n\mu)^2}\r),\quad \cM_G\circ\mathtt{boot}(D_2)\sim \sum_{i=0}^n p_i \cN\l(\frac{i(a-1)}{n}, \frac{1}{(n\mu)^2}\r),
\end{equation*}
{{where we are referring to the} distribution of the output of $\cM_G$ applied to one bootstrap sample which includes the randomness of $\cM_G$ as well as the randomness of $\mathtt{boot}$.}
Since the tradeoff function is {a lower bound for the curve between type I error and type II error from any} rejection rule, we consider the hypothesis tests $H_0:D=D_1,~H_1:D=D_2$ and the rejection rule $\{\cM_G\circ\mathtt{boot}(D) < C\}$ {(which need not be the optimal rejection rule)}. The type I and type II errors are $\alpha = \sum_{i=0}^n \Phi(Cn\mu-ia\mu)$ and $\beta = \sum_{i=0}^n \Phi(i(a-1)\mu-Cn\mu)$. In Figure \ref{fig:mix_gdp1}a, we show the curves of $(\alpha,\beta)$ for $\mu=1,~n=1000,~a\in{0,0.2,0.4,0.6,0.8,1}$. We can see that the curve for $a=0$ is not lower bounded by 1-GDP which shows that the bootstrap cannot be used for free with the same $f$-DP guarantee.

\subsection{Counterexample of the privacy analysis by \texorpdfstring{\citet{brawner2018bootstrap}}{Brawner and Honaker (2018)}}

In this section, we show an example disproving the statement `bootstrap for free' by \citet{brawner2018bootstrap}. Their result is under zCDP, a variant of DP.
% , based on R\'enyi divergences: %DP definitions:
\begin{definition}[Zero-Concentrated DP (zCDP): \citealp{bun2016concentrated}]
$\cM$ is $\rho$-zCDP if for all neighboring datasets $D_1$ and $D_2$ and all $\alpha\in(1,\infty)$, $D_\alpha(\cM(D_1) || \cM(D_2))\leq \rho\alpha$ where $D_\alpha(P||Q)$ is the $\alpha$-R\'enyi divergence, $D_\alpha(P||Q)=\frac{1}{\alpha-1}\log\l[\EE_{x\sim Q}\l(\frac{\diff P}{\diff Q}(x)\r)^\alpha\r]$, and $\frac{\diff P}{\diff Q}$ is the Radon-Nikodym derivative of $P$ with respect to $Q$. 
\end{definition} 

We consider $\cX = [0,1]$. For any dataset $D$ containing two individuals from $\cX$, i.e., $D\in \cX^2$, $D=(x_1,x_2)$, we define the Gaussian mechanism on the sample sum: $\cM(D):= x_1+x_2+\xi$ where $\xi\sim\cN(0,1)$.
{From the results in \citep{bun2016concentrated}}, we know that $\cM$ satisfies $\frac{1}{2}$-zCDP, i.e., $\rho=\frac{1}{2}$. 
Now we show that $\cM\circ \mathtt{boot}$ does not satisfy $\frac{1}{2}$-zCDP. 

To find a counterexample, we consider the neighboring datasets $D_1 = (1,0)$, $D_2=(0,0)$. Under this case, we have $\cM\circ\mathtt{boot}(D_1)\sim\frac{1}{4} \cN(0,1) + \frac{1}{2} \cN(1,1) + \frac{1}{4} \cN(2,1)$ and $\cM\circ\mathtt{boot}(D_2)\sim\cN(0,1)$. We check the $\alpha$-R\'enyi divergence when $\alpha=2$: 
\begin{align*}
D_{2}\l(\frac{1}{4} \cN(0,1) + \frac{1}{2} \cN(1,1) + \frac{1}{4} \cN(2,1)\Bigg\|\cN(0,1)\r) &= \log\l(\frac{1+4\ee+\ee^4+4+2+4\ee^2}{16}\r) \\
&\approx 1.85265\ldots > \rho \alpha=1.
\end{align*}

Therefore, there exists a mechanism $\cM$ satisfying $\frac{1}{2}$-zCDP such that $\cM \circ \mathtt{boot}$ does not satisfy $\frac{1}{2}$-zCDP which disproves the result in \citep{brawner2018bootstrap}.

{
\subsection{Counterexample of the privacy analysis by \texorpdfstring{\citet{koskela2020computing}}{Koskela et al. (2020)}}
In this section, we show that the privacy loss distribution (PLD) of the DP bootstrap with Gaussian mechanism cannot be the one shown in \citet{koskela2020computing}.

The privacy loss function and privacy loss distribution are defined as below.
 
\begin{definition}[Definition 3 and 4 \citep{koskela2020computing}] \label{def:plf}
Let $\mathcal{M} \, : \, \mathcal{X}^N \rightarrow \mathbb{R}$ be a randomised mechanism and
let $X \simeq Y$. Let $f_X(t)$ denote the density function of
$\mathcal{M}(X)$ and $f_Y(t)$ the density function of $\mathcal{M}(Y)$. Assume
$f_X(t)>0$ and $f_Y(t) > 0$ for all $t \in \mathbb{R}$.
 We define the privacy loss function
of $f_X$ over $f_Y$ as
$
\mathcal{L}_{X/Y}(t) = \log \frac{f_X(t)}{f_Y(t)}.
$
% \end{definition}

% The privacy loss distribution is defined as below.
% \begin{definition}[Definition 4 in \citep{koskela2020computing}] \label{def:pld}
{Using the notation of 
	%Let the assumptions of 
	Def.~\ref{def:plf},} % hold and
	 suppose that $\mathcal{L}_{X/Y} \, : \,\mathbb{R} \rightarrow D$, $D \subset \mathbb{R}$ is a continuously differentiable bijective function.
 The privacy loss distribution (PLD) of $\mathcal{M}(X)$ over $\mathcal{M}(Y)$ is defined to be a random variable
which has the density function 
\begin{equation*}
	\omega_{X/Y}(s)= \begin{cases}
		 f_X\big( \mathcal{L}_{X/Y}^{-1}(s)  \big)  \,  \frac{\diff \mathcal{L}_{X/Y}^{-1}(s)}{\diff s}, & s \in \mathcal{L}_{X/Y}(\mathbb{R}),\\
		 0, &\mathrm{else.}
	\end{cases}
\end{equation*}
\end{definition}
{Although \citet{koskela2020computing} did not mention in this definition of the privacy loss distribution, their density function requires $\mathcal{L}_{X/Y}(s)$ to be a monotonically increasing function with respect to $s$. When $\mathcal{L}_{X/Y}(s)$ monotonically decreases, we need to replace $\frac{\diff \mathcal{L}_{X/Y}^{-1}(s)}{\diff s}$ by $\l|\frac{\diff \mathcal{L}_{X/Y}^{-1}(s)}{\diff s}\r|$.}
Then they state the following result on privacy profile.
\begin{lemma}  [Lemma 5 \citep{koskela2020computing}]\label{lem:maxrepr}
Assume $(\ep,\infty) \subset \mathcal{L}_{X/Y}(\mathbb{R})$.
$\mathcal{M}$  is tightly $(\varepsilon,\delta)$-DP for 
$
\delta(\ep) = \max_{X\simeq Y} \{ \delta_{X/Y}(\ep)\}, \text{ where }
		\delta_{X/Y}(\ep) = \int_\ep^\infty  (1-\ee^{\ep - s}) \, \omega_{X/Y}(s)   \, \diff s. 
  $
\end{lemma}
Their result on subsampling with replacement is as below.
\begin{proposition}[Section 6.3 in \citep{koskela2020computing}]\label{prop:koskela}
Consider the sampling with replacement and the $\simeq$-neighbouring relation. 
The number of contributions of each member of the dataset is not limited.
{The size of the new sample is $m$.}
Then $\ell$, the number of times the differing sample $x'$ is in the batch, is binomially distributed,
i.e., $\ell \sim \mathrm{Binomial}(1/n,m)$, {where $n$ is the size of the original sample. Then the subsampled Gaussian mechanism, $\cM(D)=\sum_{x\in B} f(x) + \cN(0,\sigma^2 I_d)$ where $B$ is a uniformly randomly drawn subset (with replacement) of $D=\{x_1,\ldots,x_n\}$ and $\|f(x)\|_2 \leq 1$ for all $x\in X$, satisfies $(\ep,\delta(\ep))$-DP where $\delta(\ep) = \delta_{Y/X}(\ep) = \delta_{X/Y}(\ep)$ which is derived from}\\
$$
\begin{aligned}
		f_X(t) &=   \frac{1}{\sqrt{2 \pi \sigma^2}} \sum\limits_{\ell=0}^m \left( \frac{1}{n} \right)^\ell \left( 1- \frac{1}{n} \right)^{m-\ell} {m\choose\ell} \ee^{ \frac{-(t-\ell)^2}{2 \sigma^2}}, \\
 	f_Y(t) &=   \frac{1}{\sqrt{2 \pi \sigma^2}}  \sum\limits_{\ell=0}^m \left( \frac{1}{n} \right)^\ell \left( 1- \frac{1}{n} \right)^{m-\ell} {m\choose\ell} \ee^{ \frac{-(t+\ell)^2}{2 \sigma^2}}.
\end{aligned}
  $$
\end{proposition}

After restating the results by \citet{koskela2020computing}, we prove that the privacy profile $\delta(\ep)$ from the above theorem is not a valid privacy guarantee for some neighboring datasets. 

Consider {the}  Gaussian mechanism $\cM$ on $f(x)=x$ as $\cM(D)=\sum_{x\in D}f(x)+\xi$ where $\xi\sim\cN(0,1)$, $D=(x_1,x_2)$, $x_1,x_2\in\cX:=[-1,1]$. We consider two neighboring datasets,  $D_1=(1,1)$ and $D_2=(-1,1)$, and bootstrap, i.e., sampling with replacement when sample size remaining the same. We have the two output distributions as $\cM(\mathtt{boot}(D_1))\sim \cN(2,1)$ and $\cM(\mathtt{boot}(D_2))\sim \frac{1}{4}\cN(-2,1) + \frac{1}{2} \cN(0,1) +  \frac{1}{4}\cN(2,1)$ {with their density functions being}\\
% \awan{what are $X$ and $Y$ in this example? Is is $X=D_1$ and $Y=D_2$?}
$
		f_{D_1,1}(t) =  \tfrac{1}{\sqrt{2 \pi}} \l(\ee^{ \frac{-(t-2)^2}{2} }\r), 
		f_{D_2,1}(t) =  \tfrac{1}{\sqrt{2 \pi}} \l(\frac{1}{4} \ee^{ \frac{-(t-2)^2}{2} } +\frac{1}{2} \ee^{ \frac{-t^2}{2} } + \frac{1}{4} \ee^{ \frac{-(t+2)^2}{2} }\r).$

% Now we can compare the $(\ep,\delta)$ result from the distributions and the guarantee from Eq (\ref{eq:ko}) (we rewrite it below).\awan{Restate: However, Proposition C.5 gives the following privacy loss distributions:}
However, Proposition \ref{prop:koskela} gives the privacy loss distribution from the following {(which can also be derived from $D_1'=(1,0)$ and $D_2'=(-1,0)$)}:\\
$$
\begin{aligned}
		f_{D_1,2}(t) &=  \tfrac{1}{\sqrt{2 \pi}} \l(\frac{1}{4} \ee^{ \frac{-t^2}{2} } +\frac{1}{2} \ee^{ \frac{-(t-1)^2}{2} } + \frac{1}{4} \ee^{ \frac{-(t-2)^2}{2} }\r), \\
		f_{D_2,2}(t) &=  \tfrac{1}{\sqrt{2 \pi}} \l(\frac{1}{4} \ee^{ \frac{-t^2}{2} } +\frac{1}{2} \ee^{ \frac{-(t+1)^2}{2} } + \frac{1}{4} \ee^{ \frac{-(t+2)^2}{2} }\r). 
    \end{aligned}
$$

Now, we can calculate the $\delta$ when $\ep=1$. 
{
\begin{equation*}% \label{eq:deltarepr}
	\begin{aligned}
		\delta_{X/Y}(\ep) &= \int_\ep^\infty  (1-\ee^{\ep - s}) \, \omega_{X/Y}(s)   \, \diff s 
		= \int_\ep^\infty  (1-\ee^{\ep - s}) \, f_X\big( \mathcal{L}_{X/Y}^{-1}(s)  \big)  \,  \frac{\diff \mathcal{L}_{X/Y}^{-1}(s)}{\diff s}   \, \diff s \\
		&= \int_\ep^\infty  (1-\ee^{\ep - s}) \, f_X\big( \mathcal{L}_{X/Y}^{-1}(s)  \big)  \,  \diff \mathcal{L}_{X/Y}^{-1}(s) 
		  = \int_{\mathcal{L}_{X/Y}^{-1}(\ep)}^{\mathcal{L}_{X/Y}^{-1}(\infty)}  (f_X(z)-\ee^{\ep}f_Y(z)) \,  \diff z.
% 		&= \int_{\mathcal{L}_{X/Y}^{-1}(\ep)}^\infty  (F_X(z)-\ee^{\ep}F_Y(z)) \,  \diff z \\
	\end{aligned}
\end{equation*}
}

{For $f_{D_1,1}$ vs $f_{D_2,1}$, we know that $\frac{f_{D_2,1}}{f_{D_1,1}}$ is monotonically decreasing since the ratio between each of the three parts in $f_{D_2,1}$ and $f_{D_1,1}$ is decreasing. Therefore, $\log(\frac{f_{D_2,1}}{f_{D_1,1}})$ is decreasing, and we let it be $\mathcal{L}_{X/Y}$ with $f_X = f_{D_2,1}$ and $f_Y = f_{D_1,1}$. To calculate the $\delta$, we find $t_1$ such that $t_1=\mathcal{L}_{X/Y}^{-1}(1)$ and $t_2=\mathcal{L}_{X/Y}^{-1}(\infty)$, i.e., $\frac{f_{D_2,1}(t_1)}{f_{D_1,1}(t_1)}=\ee$ and $\frac{f_{D_2,1}(t_2)}{f_{D_1,1}(t_2)}=\infty$. Using \texttt{R}, we have $t_1=0.2228743\ldots$, $t_2=-\infty$. Then $\delta_1 = F_{D_2,1}(t_1) - \ee * F_{D_1,1}(t_1)$ where $F$ is the CDF corresponding to $f$. Using \texttt{R}, we have $\delta_1=0.4475773$.}

Similarly, we know $\frac{f_{D_2,2}}{f_{D_1,2}}$ is monotonically decreasing from the explanation in the Section 6.3 in \citep{koskela2020computing}. We find $t_3$ such that $\frac{f_{D_2,2}(t_3)}{f_{D_1,2}(t_3)}=\ee$, then $\delta_2 = F_{D_2,2}(t_3) - \ee * F_{D_1,2}(t_3)$. Using \texttt{R}, we have $t_3=-0.7830073$, and $\delta_2=0.369344$.   

{ 
Since we have $\delta_1 > \delta_2$, the claim by \citet{koskela2020computing} that ${f_{D_1,2}}$ and ${f_{D_2,2}}$ gives an upper bound of $\delta(\ep)$ is not true. %a valid one, i.e., the $(\ep,\delta)$-DP guarantee that it satisfies may not be satisfied for all the neighboring datasets.
}
}

\section{Details of the simulation and real-world experiment}\label{app:quantile_sensitivity}
In this section, we include NoisyVar (Algorithm \ref{alg:noisyvar}) used in our simulation, and we derive the sensitivity of the regularized ERM of quantile regression for using DP bootstrap with output perturbation in quantile regression. 
% In the end, we explain how we obtain the rule of thumb in Remark \ref{rmk:snr} by the results in the inference of the population mean and the slope parameters in logistic regression and quantile regression.

\begin{algorithm}[H]
	\caption{\texttt{NoisyVar} (DP CI for population mean \citep{du2020differentially})}\label{alg:noisyvar}
	\begin{algorithmic}[1]
		\STATE \textbf{Input} Dataset $D=\{x_1,x_2,\ldots,x_n\}\in\cX^n$, $x_i\in[0,1]$ for $i=1,\ldots,n$, $\mu>0$ for $\mu$-GDP, number of bootstrap samples $B$, confidence level $\alpha\in[0,1]$, simulation number $nsim$.
		\STATE $\cM_1(D)\leftarrow\frac{1}{n}\sum_{i=1}^n x_i + \xi_1,\quad \xi_1\sim\cN(0, \frac{2B}{(n\mu)^2})$.
		\STATE $\cM_2(D)\leftarrow\max\l\{0, \frac{1}{n-1}\sum_{i=1}^n \l(x_i - \frac{1}{n}\sum_{j=1}^n x_j \r)^2 + \xi_2\r\},\quad \xi_2\sim\cN(0, \frac{2B}{(n\mu)^2})$.
		\FOR {$i=1, 2, \ldots, nsim$}
    	\STATE $D' \leftarrow \{x_1', x_2', \ldots, x_n'\}$ where $\tilde{x}_i' \sim \cN(\cM_1(D), \cM_2(D))$ and $x_i' = \min(1, \max(0, \tilde{x}_i'))$.
            \STATE $\hat\theta_i \leftarrow \frac{1}{n}\sum_{i=1}^n x_i' + \xi_1,\quad \xi_1\sim\cN(0, \frac{2B}{(n\mu)^2})$
		\ENDFOR
        \STATE $MoE \leftarrow \frac{q(\hat\theta, 1-\frac{\alpha}{2}) - q(\hat\theta, \frac{\alpha}{2})}{2}$ where $\hat\theta = (\hat\theta_1, \ldots, \hat\theta_{nsim})$ and $q(x, \alpha)$ is a non-private empirical quantile function that outputs the $\alpha$ quantile of $x$.
		\STATE \textbf{Return} $(\cM_1(D) - MoE, \cM_1(D) + MoE)$ which satisfies $\mu$-GDP.
        
	\end{algorithmic}
\end{algorithm}

Let $D=\{(x_1,y_1),(x_2,y_2),\ldots,(x_n,y_n)\}$, $l(z_i) = (\tau-\mathbbm{1}(z_i\leq 0)) z_i$ and $z_i=y_i-x_i^\intercal \theta$.
The regularized empirical risk function $R_D^c(\theta) = \frac{1}{n}\sum_{i=1}^n l(z_i) + c\|\theta\|_2^2$ is $2c$-strongly convex. We consider a neighboring dataset to $D$: $D=\{(x_1',y_1'),(x_2,y_2),\ldots,(x_n,y_n)\}$, where $x_i=(1,w_i)$, $x_1'=(1,w_1')$, and $w_i\in[0,1]$, $w_1'\in[0,1]$. Let $\hat\theta_D^c = \mathrm{argmin}_\theta R_D^c(\theta)$. Denote $a=\mathbbm{1}(y_1\leq \theta^\intercal x_1), b=\mathbbm{1}(y_1'\leq \theta^\intercal x_1')$. By \citep[Lemma 7]{chaudhuri2011differentially}, $\|\hat\theta_D^c - \hat\theta_{D'}^c\| \leq \frac{1}{2c}\max_\theta \|\nabla (l(z_1) - l(z_1'))/n\| = \frac{1}{2nc}\|-\tau(x_1-x_1') + a x_1 - b x_1'\| = \frac{1}{2nc}\|{a-b \choose -\tau(w_1-w_1')+aw_1-bw_1'}\|$.\\
If $a=b=1$, $\|{a-b \choose -\tau(w_1-w_1')+aw_1-bw_1'}\|=\|{0\choose (1-\tau)(w_1-w_1')}\|\leq 2(1-\tau)$;\\
If $a=b=0$, $\|{a-b \choose -\tau(w_1-w_1')+aw_1-bw_1'}\|=\|{0\choose -\tau(w_1-w_1')}\|\leq 2\tau$; \\
If $a=0,b=1$, $\|{a-b \choose -\tau(w_1-w_1')+aw_1-bw_1'}\|=\|{-1\choose -\tau w_1 - (1-\tau)w_1'}\|\leq \sqrt{2}$; \\
If $a=1,b=0$, $\|{a-b \choose -\tau(w_1-w_1')+aw_1-bw_1'}\|=\|{1\choose (1-\tau)w_1 + \tau w_1'}\|\leq \sqrt{2}$.\\
Therefore,  $\|\hat\theta_D^c - \hat\theta_{D'}^c\| \leq \frac{1}{2nc} \max\{2\tau, 2(1-\tau), \sqrt{2}\}$ which is an upper bound of the sensitivity of the regularized ERM.

\bibliography{dp_boot}

\begin{thebibliography}{54}
\providecommand{\natexlab}[1]{#1}
\providecommand{\url}[1]{\texttt{#1}}
\expandafter\ifx\csname urlstyle\endcsname\relax
  \providecommand{\doi}[1]{doi: #1}\else
  \providecommand{\doi}{doi: \begingroup \urlstyle{rm}\Url}\fi

\bibitem[Aelst and Willems(2005)]{van2005multivariatr}
Stefan~Van Aelst and Gert Willems.
\newblock Multivariate regression s-estimators for robust estimation and inference.
\newblock \emph{Statistica Sinica}, 15:\penalty0 981--1001, 2005.

\bibitem[Awan and Cai(2025)]{awan2025one}
Jordan Awan and Zhanrui Cai.
\newblock One step to efficient synthetic data.
\newblock \emph{Statistica Sinica}, 35:\penalty0 539--561, 2025.

\bibitem[Awan and Slavkovi{\'c}(2020)]{awan2020differentially}
Jordan Awan and Aleksandra Slavkovi{\'c}.
\newblock Differentially private inference for binomial data.
\newblock \emph{Journal of Privacy and Confidentiality}, 10\penalty0 (1):\penalty0 1--40, 2020.

\bibitem[Awan and Wang(2024)]{awan2024simulation}
Jordan Awan and Zhanyu Wang.
\newblock Simulation-based, finite-sample inference for privatized data.
\newblock \emph{Journal of the American Statistical Association}, pages 1--14, 2024.

\bibitem[Awan et~al.(2019)Awan, Kenney, Reimherr, and Slavkovi{\'c}]{awan2019benefits}
Jordan Awan, Ana Kenney, Matthew Reimherr, and Aleksandra Slavkovi{\'c}.
\newblock Benefits and pitfalls of the exponential mechanism with applications to {H}ilbert spaces and functional {PCA}.
\newblock In \emph{Proceedings of the 36th International Conference on Machine Learning}, volume~97, pages 374--384. PMLR, 2019.

\bibitem[Balle et~al.(2018)Balle, Barthe, and Gaboardi]{balle2018privacy}
Borja Balle, Gilles Barthe, and Marco Gaboardi.
\newblock Privacy amplification by subsampling: Tight analyses via couplings and divergences.
\newblock In \emph{Advances in Neural Information Processing Systems}, volume~31, 2018.

\bibitem[Beran(1997)]{beran1997diagnosing}
Rudolf Beran.
\newblock Diagnosing bootstrap success.
\newblock \emph{Annals of the Institute of Statistical Mathematics}, 49:\penalty0 1--24, 1997.

\bibitem[Biswas et~al.(2020)Biswas, Dong, Kamath, and Ullman]{biswas2020coinpress}
Sourav Biswas, Yihe Dong, Gautam Kamath, and Jonathan Ullman.
\newblock Coinpress: Practical private mean and covariance estimation.
\newblock In \emph{Advances in Neural Information Processing Systems}, volume~33, pages 14475--14485, 2020.

\bibitem[Borwein and Vanderwerff(2010)]{borwein2010convex}
Jonathan~M. Borwein and Jon~D. Vanderwerff.
\newblock \emph{Convex Functions: Constructions, Characterizations and Counterexamples}.
\newblock Encyclopedia of Mathematics and its Applications. Cambridge University Press, 2010.

\bibitem[Boyd and Vandenberghe(2004)]{boyd2004convex}
Stephen Boyd and Lieven Vandenberghe.
\newblock \emph{Convex Optimization}.
\newblock Cambridge University Press, 2004.

\bibitem[Brawner and Honaker(2018)]{brawner2018bootstrap}
Thomas Brawner and James Honaker.
\newblock Bootstrap inference and differential privacy: Standard errors for free.
\newblock \emph{Unpublished Manuscript}, 2018.

\bibitem[Bun and Steinke(2016)]{bun2016concentrated}
Mark Bun and Thomas Steinke.
\newblock Concentrated differential privacy: Simplifications, extensions, and lower bounds.
\newblock In \emph{Proceedings, Part I, of the 14th International Conference on Theory of Cryptography - Volume 9985}, page 635–658, 2016.

\bibitem[Bun et~al.(2018)Bun, Dwork, Rothblum, and Steinke]{bun2018composable}
Mark Bun, Cynthia Dwork, Guy~N. Rothblum, and Thomas Steinke.
\newblock Composable and versatile privacy via truncated cdp.
\newblock In \emph{Proceedings of the 50th Annual ACM SIGACT Symposium on Theory of Computing}, STOC 2018, page 74–86, 2018.

\bibitem[Cai et~al.(2021)Cai, Wang, and Zhang]{cai2021cost}
Tony Cai, Yichen Wang, and Linjun Zhang.
\newblock The cost of privacy: Optimal rates of convergence for parameter estimation with differential privacy.
\newblock \emph{The Annals of Statistics}, 49\penalty0 (5):\penalty0 2825--2850, 2021.

\bibitem[Canada(2019)]{AB2-GDJRT8-2019}
Statistics Canada.
\newblock {2016 Census Public Use Microdata File (PUMF). Individuals File}, 2019.

\bibitem[Chadha et~al.(2024)Chadha, Duchi, and Kuditipudi]{chadha2024resampling}
Karan Chadha, John Duchi, and Rohith Kuditipudi.
\newblock Resampling methods for private statistical inference.
\newblock \emph{arXiv preprint arXiv:2402.07131}, 2024.

\bibitem[Chaudhuri et~al.(2011)Chaudhuri, Monteleoni, and Sarwate]{chaudhuri2011differentially}
Kamalika Chaudhuri, Claire Monteleoni, and Anand~D Sarwate.
\newblock Differentially private empirical risk minimization.
\newblock \emph{Journal of Machine Learning Research}, 12\penalty0 (3), 2011.

\bibitem[Covington et~al.(2025)Covington, He, Honaker, and Kamath]{covington2025unbiased}
Christian Covington, Xi~He, James Honaker, and Gautam Kamath.
\newblock Unbiased statistical estimation and valid confidence intervals under differential privacy.
\newblock \emph{Statistica Sinica}, 35:\penalty0 651--670, 2025.

\bibitem[Diciccio and Efron(1992)]{diciccio1992more}
Thomas Diciccio and Bradley Efron.
\newblock {More accurate confidence intervals in exponential families}.
\newblock \emph{Biometrika}, 79\penalty0 (2):\penalty0 231--245, 1992.

\bibitem[Dong et~al.(2022)Dong, Roth, and Su]{dong2021gaussian}
Jinshuo Dong, Aaron Roth, and Weijie~J. Su.
\newblock Gaussian differential privacy.
\newblock \emph{Journal of the Royal Statistical Society: Series B (Statistical Methodology)}, 84\penalty0 (1):\penalty0 3--37, 2022.

\bibitem[D'Orazio et~al.(2015)D'Orazio, Honaker, and King]{dorazio2015differential}
Vito D'Orazio, James Honaker, and Gary King.
\newblock Differential privacy for social science inference.
\newblock \emph{Sloan Foundation Economics Research Paper}, \penalty0 (2676160), 2015.

\bibitem[Drechsler et~al.(2022)Drechsler, Globus-Harris, Mcmillan, Sarathy, and Smith]{drechsler2022nonparametric}
J{\"o}rg Drechsler, Ira Globus-Harris, Audra Mcmillan, Jayshree Sarathy, and Adam Smith.
\newblock Nonparametric differentially private confidence intervals for the median.
\newblock \emph{Journal of Survey Statistics and Methodology}, 10\penalty0 (3):\penalty0 804--829, 2022.

\bibitem[Du et~al.(2020)Du, Foot, Moniot, Bray, and Groce]{du2020differentially}
Wenxin Du, Canyon Foot, Monica Moniot, Andrew Bray, and Adam Groce.
\newblock Differentially private confidence intervals.
\newblock \emph{arXiv preprint arXiv:2001.02285}, 2020.

\bibitem[Dwork and Roth(2014)]{dwork2014algorithmic}
Cynthia Dwork and Aaron Roth.
\newblock The algorithmic foundations of differential privacy.
\newblock \emph{Foundations and Trends® in Theoretical Computer Science}, 9\penalty0 (3–4):\penalty0 211--407, 2014.

\bibitem[Dwork et~al.(2006)Dwork, McSherry, Nissim, and Smith]{dwork2006calibrating}
Cynthia Dwork, Frank McSherry, Kobbi Nissim, and Adam Smith.
\newblock Calibrating noise to sensitivity in private data analysis.
\newblock In \emph{Theory of Cryptography}, pages 265--284, 2006.

\bibitem[Dwork et~al.(2010)Dwork, Rothblum, and Vadhan]{dwork2010boosting}
Cynthia Dwork, Guy~N. Rothblum, and Salil Vadhan.
\newblock Boosting and differential privacy.
\newblock In \emph{IEEE 51st Annual Symposium on Foundations of Computer Science}, pages 51--60, 2010.

\bibitem[Efron(1979)]{efron1979bootstrap}
Bradley Efron.
\newblock Bootstrap methods: Another look at the jackknife.
\newblock \emph{The Annals of Statistics}, 7\penalty0 (1):\penalty0 1 -- 26, 1979.

\bibitem[Efron(1987)]{efron1987better}
Bradley Efron.
\newblock Better bootstrap confidence intervals.
\newblock \emph{Journal of the American Statistical Association}, 82\penalty0 (397):\penalty0 171--185, 1987.

\bibitem[Efron(2016)]{efron2016empirical}
Bradley Efron.
\newblock {Empirical Bayes deconvolution estimates}.
\newblock \emph{Biometrika}, 103\penalty0 (1):\penalty0 1--20, 2016.

\bibitem[Efron and Tibshirani(1994)]{efron1994introduction}
Bradley Efron and Robert~J Tibshirani.
\newblock \emph{An Introduction to the Bootstrap}.
\newblock CRC press, 1994.

\bibitem[Farokhi(2020)]{farokhi2020deconvoluting}
Farhad Farokhi.
\newblock Deconvoluting kernel density estimation and regression for locally differentially private data.
\newblock \emph{Scientific Reports}, 10\penalty0 (1):\penalty0 1--11, 2020.

\bibitem[Ferrando et~al.(2022)Ferrando, Wang, and Sheldon]{ferrando2022parametric}
Cecilia Ferrando, Shufan Wang, and Daniel Sheldon.
\newblock Parametric bootstrap for differentially private confidence intervals.
\newblock In \emph{International Conference on Artificial Intelligence and Statistics}, pages 1598--1618. PMLR, 2022.

\bibitem[Gopi et~al.(2021)Gopi, Lee, and Wutschitz]{gopi2021numerical}
Sivakanth Gopi, Yin~Tat Lee, and Lukas Wutschitz.
\newblock Numerical composition of differential privacy.
\newblock \emph{Advances in Neural Information Processing Systems}, 34:\penalty0 11631--11642, 2021.

\bibitem[Gopi et~al.(2022)Gopi, Lee, and Liu]{gopi2022private}
Sivakanth Gopi, Yin~Tat Lee, and Daogao Liu.
\newblock Private convex optimization via exponential mechanism.
\newblock In \emph{Conference on Learning Theory}, pages 1948--1989. PMLR, 2022.

\bibitem[Hall(1987)]{hall1987bootstrap}
Peter Hall.
\newblock {On the bootstrap and likelihood-based confidence regions}.
\newblock \emph{Biometrika}, 74\penalty0 (3):\penalty0 481--493, 09 1987.

\bibitem[Hazelton and Turlach(2009)]{hazelton2009nonparametric}
Martin~L. Hazelton and Berwin~A. Turlach.
\newblock Nonparametric density deconvolution by weighted kernel estimators.
\newblock \emph{Statistics and Computing}, 19\penalty0 (3):\penalty0 217--228, Sep 2009.

\bibitem[Karwa and Vadhan(2018)]{karwa2017finite}
Vishesh Karwa and Salil Vadhan.
\newblock Finite sample differentially private confidence intervals.
\newblock In \emph{9th Innovations in Theoretical Computer Science Conference (ITCS 2018)}, volume~94, pages 44:1--44:9, 2018.

\bibitem[Koskela et~al.(2020)Koskela, J\"alk\"o, and Honkela]{koskela2020computing}
Antti Koskela, Joonas J\"alk\"o, and Antti Honkela.
\newblock Computing tight differential privacy guarantees using fft.
\newblock In \emph{Proceedings of the 23rd International Conference on Artificial Intelligence and Statistics}, volume 108, pages 2560--2569. PMLR, 2020.

\bibitem[O'Keefe and Charest(2019)]{o2019bootstrap}
Christine~M O'Keefe and Anne-Sophie Charest.
\newblock Bootstrap differential privacy.
\newblock \emph{Transactions on Data Privacy}, 12\penalty0 (1):\penalty0 1--28, 2019.

\bibitem[Reimherr and Awan(2019)]{reimherr2019kng}
Matthew Reimherr and Jordan Awan.
\newblock Kng: The k-norm gradient mechanism.
\newblock In \emph{Advances in Neural Information Processing Systems}, volume~32, 2019.

\bibitem[Rockafellar(1997)]{rockafellar1997convex}
R~Tyrrell Rockafellar.
\newblock \emph{Convex Analysis}, volume~18.
\newblock Princeton University Press, 1997.

\bibitem[Sarkar et~al.(2018)Sarkar, Pati, Chakraborty, Mallick, and Carroll]{sarkar2018bayesian}
Abhra Sarkar, Debdeep Pati, Antik Chakraborty, Bani~K. Mallick, and Raymond~J. Carroll.
\newblock Bayesian semiparametric multivariate density deconvolution.
\newblock \emph{Journal of the American Statistical Association}, 113\penalty0 (521):\penalty0 401--416, 2018.

\bibitem[Serfling(2009)]{serfling2009approximation}
Robert~J Serfling.
\newblock \emph{Approximation Theorems of Mathematical Statistics}.
\newblock John Wiley \& Sons, 2009.

\bibitem[Severini(2005)]{severini2005elements}
Thomas~A Severini.
\newblock \emph{Elements of Distribution Theory}.
\newblock Number~17. Cambridge University Press, 2005.

\bibitem[Shao(2003)]{shao2003mathematical}
Jun Shao.
\newblock \emph{Mathematical Statistics}.
\newblock Springer Science \& Business Media, 2003.

\bibitem[Sheffet(2017)]{sheffet2017differentially}
Or~Sheffet.
\newblock Differentially private ordinary least squares.
\newblock In \emph{Proceedings of the 34th International Conference on Machine Learning}, volume~70. PMLR, 2017.

\bibitem[Van~der Vaart(2000)]{van2000asymptotic}
Aad~W Van~der Vaart.
\newblock \emph{Asymptotic Statistics}, volume~3.
\newblock Cambridge university press, 2000.

\bibitem[Wang et~al.(2023)Wang, Su, Ye, Shokri, and Su]{NEURIPS2023_acb3e200}
Chendi Wang, Buxin Su, Jiayuan Ye, Reza Shokri, and Weijie Su.
\newblock Unified enhancement of privacy bounds for mixture mechanisms via f-differential privacy.
\newblock In \emph{Advances in Neural Information Processing Systems}, volume~36, 2023.

\bibitem[Wang et~al.(2019)Wang, Kifer, and Lee]{wang2019differentially}
Yue Wang, Daniel Kifer, and Jaewoo Lee.
\newblock Differentially private confidence intervals for empirical risk minimization.
\newblock \emph{Journal of Privacy and Confidentiality}, 9\penalty0 (1), 2019.

\bibitem[Wasserman and Zhou(2010)]{wasserman2010statistical}
Larry Wasserman and Shuheng Zhou.
\newblock A statistical framework for differential privacy.
\newblock \emph{Journal of the American Statistical Association}, 105\penalty0 (489):\penalty0 375--389, 2010.

\bibitem[Xie and Wang(2022)]{xie2022repro}
Min-ge Xie and Peng Wang.
\newblock Repro samples method for finite-and large-sample inferences.
\newblock \emph{arXiv preprint arXiv:2206.06421}, 2022.

\bibitem[Youndj{\'e} and Wells(2008)]{youndje2008optimal}
{\'E}lie Youndj{\'e} and Martin~T. Wells.
\newblock Optimal bandwidth selection for multivariate kernel deconvolution density estimation.
\newblock \emph{TEST}, 17\penalty0 (1):\penalty0 138--162, May 2008.

\bibitem[Zheng et~al.(2020)Zheng, Dong, Long, and Su]{zheng2020sharp}
Qinqing Zheng, Jinshuo Dong, Qi~Long, and Weijie Su.
\newblock Sharp composition bounds for {G}aussian differential privacy via edgeworth expansion.
\newblock In \emph{Proceedings of the 37th International Conference on Machine Learning}, volume 119, pages 11420--11435. PMLR, 2020.

\bibitem[Zhu et~al.(2022)Zhu, Dong, and Wang]{zhu2021optimal}
Yuqing Zhu, Jinshuo Dong, and Yu-Xiang Wang.
\newblock Optimal accounting of differential privacy via characteristic function.
\newblock In \emph{Proceedings of The 25th International Conference on Artificial Intelligence and Statistics}, volume 151, pages 4782--4817. PMLR, 2022.

\end{thebibliography}
\end{document}